\documentclass[10pt]{article}
\usepackage{bofflab}

\usepackage{microtype}
\usepackage{parskip}

\usepackage{booktabs}
\usepackage[font=small,labelfont=bf]{caption}
\usepackage{subcaption}
\usepackage{multirow}
\usepackage{fullpage}
\usepackage{ragged2e}
\usepackage{algorithm}
\usepackage{algorithmic}
\usepackage{float}

\usepackage{colortbl}
\usepackage{wrapfig}
\usepackage{needspace}

\makeatletter
\let\c@table\c@figure
\let\c@algorithm\c@figure
\makeatother

\usepackage{tikz}
\usetikzlibrary{arrows.meta,calc,positioning,decorations.markings,decorations.pathreplacing}

\usepackage{amsmath}
\usepackage{amssymb}
\usepackage{mathtools}
\usepackage{amsthm}
\usepackage{mathrsfs}

\usepackage[capitalise,noabbrev]{cleveref}
\usepackage{thmtools}
\usepackage{thm-restate}
\crefformat{section}{Section~#2#1#3}
\crefformat{subsection}{Section~#2#1#3}
\crefformat{subsubsection}{Section~#2#1#3}
\crefformat{equation}{(#2#1#3)}
\crefrangeformat{equation}{(#3#1#4) to (#5#2#6)}
\crefmultiformat{equation}{(#2#1#3)}{ and (#2#1#3)}{, (#2#1#3)}{ and (#2#1#3)}
\crefname{appendix}{Appendix}{Appendices}
\Crefname{appendix}{Appendix}{Appendices}
\crefformat{appendix}{Appendix~#2#1#3}
\AddToHook{cmd/appendix/before}{%
\crefalias{section}{appendix}%
\crefalias{subsection}{appendix}%
\crefalias{subsubsection}{appendix}%
\counterwithin*{figure}{section}%
}

\makeatletter
\def\thm@space@setup{%
  \thm@preskip=0.8em plus 0.2em minus 0.2em
  \thm@postskip=0.8em plus 0.2em minus 0.2em
}
\makeatother

\theoremstyle{plain}
\newtheorem{theorem}{Theorem}[section]
\newtheorem{proposition}[theorem]{Proposition}
\newtheorem{lemma}[theorem]{Lemma}
\newtheorem{corollary}[theorem]{Corollary}
\theoremstyle{definition}

\newtheorem{assumption}[theorem]{Assumption}

\theoremstyle{plain}
\newtheorem*{proposition*}{Proposition}
\newtheorem*{lemma*}{Lemma}
\newtheorem*{corollary*}{Corollary}
\theoremstyle{definition}
\newtheorem*{definition*}{Definition}
\newtheorem*{assumption*}{Assumption}

\renewcommand{\ge}{\geqslant}
\renewcommand{\geq}{\geqslant}
\renewcommand{\le}{\leqslant}
\renewcommand{\leq}{\leqslant}

\newcommand{\calL}{\mathcal{L}}
\newcommand{\calM}{\mathcal{M}}
\newcommand{\calN}{\mathcal{N}}

\newcommand{\calP}{\mathcal{P}}

\newcommand{\calU}{\mathcal{U}}

\newcommand{\R}{\ensuremath{\mathbb{R}}}

\newcommand{\E}{\mathbb{E}}

\newcommand{\norm}[1]{\lVert #1 \rVert}

\newcommand{\Normal}{\mathsf{N}}
\newcommand{\Var}{\mathrm{Var}}

\newcommand{\T}{\mathsf{T}}
\newcommand{\st}{\text{s.t.}\:\:}

\title{How to Guide Your Flow:\\Few-Step Alignment via Flow Map Reward Guidance}

\author{Jerry Y. Huang$^*$}
\author{Justin Lin$^*$}
\author{Sheel Shah}
\author{Kartik Nair}
\author{Nicholas M.~Boffi}

\affiliation{Carnegie Mellon University}

\contribution[*]{\textit{Equal contribution.}}

\begin{document}
\begin{abstract}
	In generative modeling, we often wish to produce samples that maximize a user-specified reward such as aesthetic quality or alignment with human preferences, a problem known as \textit{guidance}.
Despite their widespread use, existing guidance methods either require expensive multi-particle, many-step schemes or rely on poorly understood approximations.
We reformulate guidance as a \textit{deterministic optimal control problem}, yielding a hierarchy of algorithms that subsumes existing approaches at the coarsest level.
We show that the \textit{flow map}, an object of significant recent interest for its role in fast inference, arises naturally in the optimal solution.
Based on this observation, we propose \textbf{Flow Map Reward Guidance (FMRG)}: a training-free, \textit{single-trajectory} framework that uses the flow map to both integrate and guide the flow.
At text-to-image scale, FMRG matches or surpasses baselines across inverse problems and reward-guided generation with \textbf{as few as 3 NFEs}, giving at least an order-of-magnitude speedup in comparison to prior state of the art.
Code is available at \url{https://github.com/jrrhuang/fmrg}.

\end{abstract}
\maketitle
\vspace{-0.6cm}
\enlargethispage{3.2cm}
\begin{figure}[H]
	\centering
	\setlength{\abovecaptionskip}{6pt}
	\setlength{\belowcaptionskip}{-6pt}
	\includegraphics[width=\textwidth]{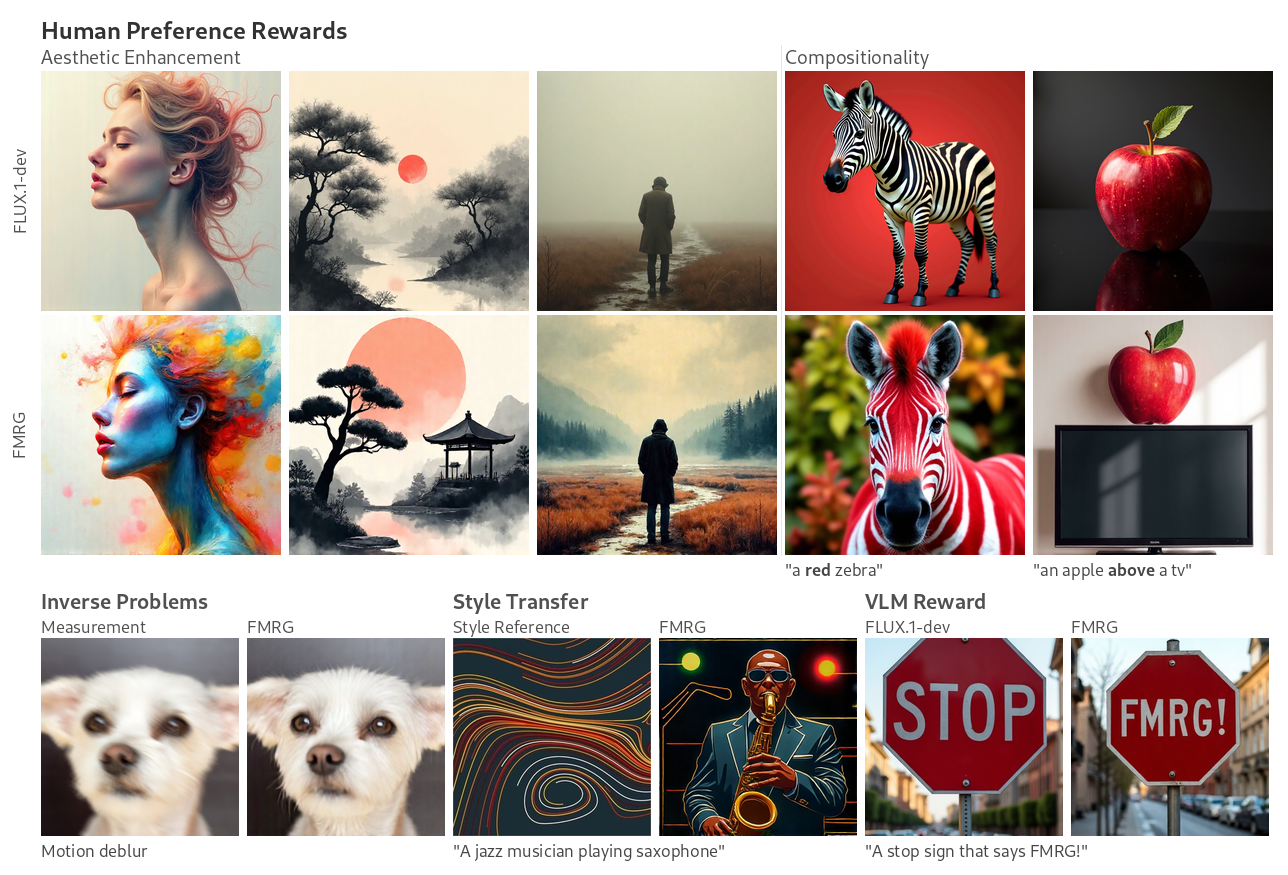}
	\caption{\small We introduce \textbf{Flow Map Reward Guidance (FMRG)}, a training-free, \textit{single-trajectory} framework for inference-time alignment of flow-based models. FMRG achieves compelling performance across diverse rewards---including aesthetic enhancement, compositionality, latent-space inverse problems, style transfer, and VLM rewards---with up to a $70\times$ speedup over prior work.}
	\label{fig:front}
\end{figure}

\newpage
\begin{figure}[t]
	\centering
	\resizebox{\textwidth}{!}{\input{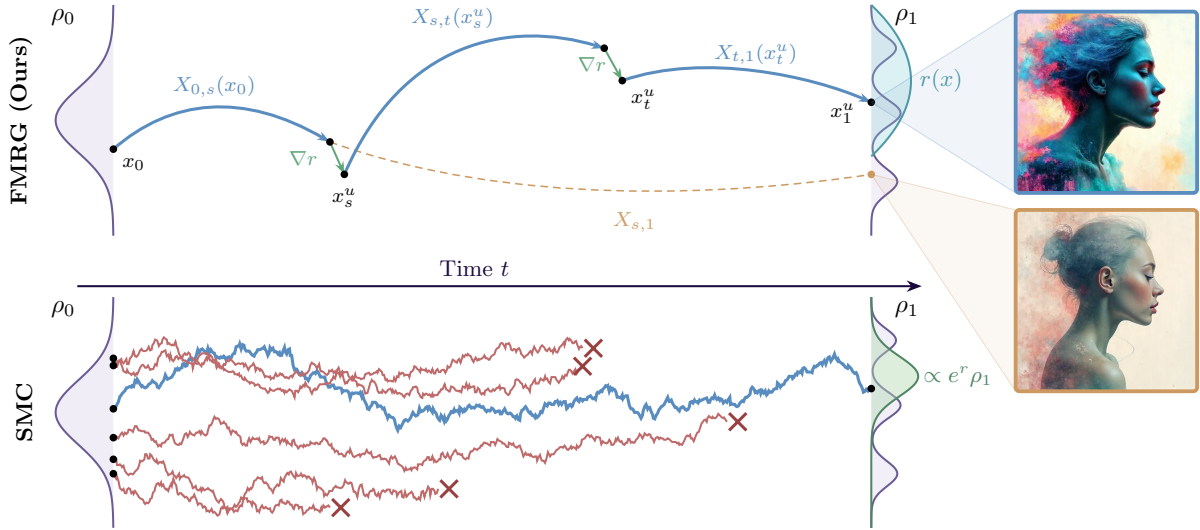}}
	\caption{\textbf{Overview.}
		FMRG guides a \emph{single} generative trajectory by alternating flow map steps, which integrate the base dynamics exactly, with gradient steps that steer toward high reward.
		This \textit{optimization-centric} perspective contrasts with methods that explicitly target \textit{sampling} the exponential reward tilt $\tilde{\rho} \propto e^r \rho$, which typically require many particles with resampling (e.g., SMC) and are often based on diffusion processes requiring many steps.
		Right: an unguided FLUX sample (orange) and a guided FMRG sample (blue) under a human-preference aesthetic reward.}
	\label{fig:overview}
\end{figure}
\vspace*{-2.5em}

\section{Introduction}
\label{sec:intro}
Flow- and diffusion-based~\citep{lipman2022flow,albergo2023stochastic,song_score-based_2021} generative models have emerged as state-of-the-art methods for high-fidelity generation across continuous modalities such as images~\citep{rombach_high-resolution_2022}, video~\citep{blattmann_align_2023}, and molecular data~\citep{watson_novo_2023}.
In practice, however, we rarely seek arbitrary samples from a learned distribution.
Instead, applications in creative generation, molecular design, and inverse problems require samples that satisfy additional criteria, such as high aesthetic quality~\citep{clark_directly_2024}, measurement agreement~\citep{daras_survey_2024}, physical plausibility~\citep{wu_practical_2023}, or alignment with human user intent.
These criteria are often encoded as a hand-designed or learned reward function $r$, and \emph{guidance} methods aim to steer the sampling process toward high-reward samples without additional training~\citep{singhal_general_2025}.

The prevailing theoretical framework for guidance frames the problem as sampling from a \emph{reward-tilted} distribution
$\tilde{\rho}(x) \propto e^{r(x)} \rho(x)$,
where $\rho$ denotes the distribution induced by a pre-trained generative model~\citep{domingo-enrich_adjoint_2025,sabour_test-time_2025,uehara_inference-time_2025}.
Despite its mathematical appeal, this target is remarkably difficult to sample from, and recent work has shown that doing so is computationally intractable even for simple reward functions~\citep{moitra_steering_2026}.
Multi-particle methods such as sequential Monte Carlo (SMC) can in principle sample from $\tilde{\rho}$, but generically require a prohibitive number of particles and integration steps~\citep{del_moral_sequential_2006}.
To get around this difficulty, widely-used methods such as diffusion posterior sampling (DPS)~\citep{chung_diffusion_2024} sidestep the ensemble by using a single particle, and additionally leverage heuristic approximations to the tilted score.
However, their reliance on such approximations leaves the distribution they actually produce poorly characterized.
In particular, it is unclear what object DPS actually approximates, or whether this object even relates to the reward tilt.
Overall, reward tilting is an easy problem to write down but a hard problem to solve.

Compounding these difficulties, the reward-tilt formulation does not sit naturally with modern deterministic samplers.
Its theoretical underpinnings lie in stochastic processes, where reward alignment is typically achieved by augmenting a noising SDE with a score term that transports samples toward $\tilde{\rho}$~\citep{domingo-enrich_adjoint_2025,uehara_fine-tuning_2024}.
Modern systems such as Stable Diffusion~3~\citep{esser_scaling_2024} and FLUX~\citep{labs_flux1_2025}, however, increasingly favor \emph{deterministic} probability flows, which recent work on flow maps~\citep{boffi_how_2025,boffi_flow_2025,song_consistency_2023,geng_mean_2025,kim_consistency_2024} further accelerate to just a few sampling steps.
Recent efforts have ported tilt-based alignment to this regime by simulating stochastic transitions within an ODE sampler~\citep{holderrieth_glass_2025}, but such approaches inherit the structure of the stochastic formulation rather than treating the deterministic sampler as the primary object.
As a result, a principled guidance methodology native to deterministic, few-step, single-trajectory generation does not yet exist.
This motivates the central question:
\begin{quote}
	\centering\emph{Is there a principled framework for guiding flow-based generative models\\ with very few function evaluations?}
\end{quote}

Here we answer in the affirmative, proposing that the natural tool to formalize such a framework is \textit{deterministic} optimal control.
This runs contrary to the typical \textit{stochastic} optimal control framework used for guidance and finetuning of flows and diffusions, and in particular it strays entirely from the reward tilt formulation of guidance~\citep{domingo-enrich_adjoint_2025,uehara_fine-tuning_2024,uehara_understanding_2024}.
We show that this departure yields a natural optimization problem whose critical points we characterize in closed form, and which depend only on the \textit{present} trajectory rather than an intractable average over an ensemble of additional particles.
Crucially, the flow map appears explicitly in the optimal feedback, making it the central mathematical object of the solution.
Its compositional structure enables a single pre-trained flow map to both integrate the base dynamics and compute an accurate guidance signal along one trajectory, leading to an extremely efficient guidance algorithm.
Analyzing the design space of this approach leads to our \emph{Flow Map Reward Guidance (FMRG)} framework, which to our knowledge is the first guidance methodology native to the few-step, single-trajectory regime.

Together, our \textbf{main contributions} can be summarized as:
\begin{itemize}
	\item We propose an alternative perspective to reward tilting, formulating guidance as a deterministic optimal control problem and showing that the \textit{flow map} emerges naturally in the closed-form solution.
	      This yields a tractable single-trajectory guidance algorithm in which a single pre-trained flow map drives both integration and guidance.
	\item We characterize the behavior of FMRG in an analytically-tractable setting and analyze the role of algorithmic interventions such as early stopping and the choice of reward gradient.
	      This yields a unifying framework that subsumes existing single-trajectory methods such as DPS, as well as seed-optimization methods, as special cases (\cref{app:special_cases}).
	\item We demonstrate FMRG enables highly efficient, training-free guidance on FLUX-scale text-to-image models, matching or surpassing baselines across a wide range of latent-space inverse problems and reward-guided image generation settings with \textbf{as few as $3$ NFEs}.
	      This represents at least an order of magnitude speedup over competing approaches, and in some cases is up to $70\times$ more efficient.
\end{itemize}

\section{Theoretical framework}
\label{sec:methods}

\subsection{Flow-based generative models}
We consider a pre-trained flow-based generative model specified by a velocity field $b: [0, 1] \times \R^d \to \R^d$.
Given a target distribution $\rho_1 \in \calP(\R^d)$ and a Gaussian base $\rho_0 = \Normal(0, I)$, samples from $\rho_1$ are drawn by numerically integrating the probability flow ordinary differential equation
\begin{equation}
	\label{eqn:pflow}
	\dot{x}_t = b_t(x_t), \qquad x_0 \sim \rho_0,
\end{equation}
from $t = 0$ to $t = 1$, at which point $x_1 \sim \rho_1$.
The velocity $b_t$ is typically learned from a dataset $\{x_1^i\}_{i=1}^n$ of target samples via flow matching or equivalent objectives based on stochastic interpolants $I_t = \alpha_t x_0 + \beta_t x_1$, with $\alpha_0 = \beta_1 = 1$ and $\alpha_1 = \beta_0 = 0$~\citep{lipman2022flow,liu2022flow,albergo2023stochastic}.

\paragraph{Flow maps.}
While generative models based on the probability flow~\cref{eqn:pflow} perform well in practice, inference requires solving the differential equation numerically, which typically necessitates $50$--$100$ network evaluations.
To accelerate inference, recent effort has centered around learning the \textit{flow map} $X: [0,1]^2 \times \R^d \to \R^d$~\citep{boffi_flow_2025,boffi_how_2025}, which is the solution operator of the probability flow~\cref{eqn:pflow}.
The flow map satisfies the \textit{jump condition}
\begin{equation}
	\label{eqn:jump_cond}
	X_{s,t}(x_s) = x_t,
\end{equation}
so that sampling only requires a single function evaluation via $x_1 = X_{0,1}(x_0)$.
Further detail on the flow map formulation is provided in~\cref{app:flow_maps}, which includes many methods for accelerated generative modeling as special cases, including consistency models~\citep{kim_consistency_2024,geng_consistency_2024,song_consistency_2023,li_bidirectional_2024,lu_simplifying_2024}, mean flows~\citep{geng_mean_2025,geng_improved_2025}, shortcut models~\citep{frans_one_2024}, and terminal velocity matching~\citep{zhou_terminal_2025}.

\subsection{The guidance problem}
Given a pre-trained flow-based model $b_t$
and a reward function $r: \R^d \to \R$, the goal of guidance is to modify the sampling process~\cref{eqn:pflow} to produce samples with high reward that ``remain close'' to the learned distribution $\rho_1$~\citep{uehara_inference-time_2025}.
In contrast to finetuning methods~\citep{uehara_fine-tuning_2024}, which seek a similar goal via an additional training phase, guidance takes place purely at inference and does not require any additional training.

\paragraph{Reward tilting.}
The typical theoretical framework for guidance frames the problem as sampling from a ``reward-tilted'' distribution $\tilde{\rho}_1(x) \propto e^{r(x)} \rho_1(x)$.
Exact sampling from $\tilde{\rho}$ requires augmenting the velocity field by a correction proportional to
\begin{equation}
	\label{eqn:tilt_gradient}
	\nabla_x U(x) = \nabla_x \log \E\left[e^{r(x_1)} \mid x_t = x\right],
\end{equation}
which arises as the gradient of a value function from stochastic optimal control~\citep{domingo-enrich_taxonomy_2024} (see~\cref{app:soc_background} for a self-contained review).
The expectation is taken over an auxiliary ``memoryless'' stochastic process whose time $t$ marginals match those of the stochastic interpolant used to train the model~\citep{domingo-enrich_adjoint_2025}.
Computing~\cref{eqn:tilt_gradient} requires integrating along all stochastic trajectories originating from $x_t$, which is intractable in high dimensions and has motivated SMC-based approaches that maintain finite particle populations and perform resampling~\citep{wu_practical_2023}.
While principled, such methods sacrifice the efficiency of single-trajectory inference; moreover, while finite sample estimators of the gradient~\cref{eqn:tilt_gradient} are \textit{consistent}, they are biased away from the population limit, and can be high variance due to rare samples with high reward~\citep{chetrite_variational_2015}.

\subsection{An optimal control formulation}
If we commit to guiding a \emph{single deterministic trajectory}, what is the right problem to solve?
We propose to depart from reward tilting entirely and instead formulate guidance as a \emph{deterministic optimal control} problem that trades off reward maximization against deviation from the base flow:
\begin{lavenderbox}
	\vspace{-6pt}
	\begin{equation}
		\label{eqn:guidance_oc}
		\begin{aligned}
			\min_u \calL(u) & = \min_u \int_0^1 \frac{\norm{u_t}^2}{2\lambda}dt - r(x_1^u), \\
			\st \dot{x}_t^u & = b_t(x_t^u) + u_t, \quad x_0\:\: \text{ given}.
		\end{aligned}
	\end{equation}
	\vspace{-10pt}
\end{lavenderbox}
Above, $u:[0, 1] \to \R^d$ is a control that modifies the base dynamics $b_t$, and $\lambda \in \R_{> 0}$ is an inverse temperature that weights reward maximization against control effort.
Larger $\lambda$ permits more aggressive guidance at the cost of departing further from the base flow, while smaller $\lambda$ keeps samples close to $\rho_1$.

In general, characterizing the optimal control for~\cref{eqn:guidance_oc} in closed-form is challenging without restrictive assumptions on the drift $b_t$ and the reward $r$~\citep{fleming_deterministic_1975}.
To gain insight into its optimizers, the following result gives the necessary conditions for an extreme point.
\begin{restatable}{proposition}{optimalcontrol}
	\label{prop:optimal_control}
	Let $u^*_t$ be an optimal solution of~\cref{eqn:guidance_oc}, and let $x_t^*$ denote its trajectory.
	Then, along $x^*_t$,
	\begin{equation}
		\label{eqn:optimal_control}
		u_t^* = \lambda \nabla X_{t,1}^{u^*}(x_t^*)^\T \nabla r\left(X_{t, 1}^{u^*}\left(x_t^*\right)\right),
	\end{equation}
	where $X_{s, t}^{u^*}$ is the flow map for the optimally controlled dynamics $\dot{x}_t^{u^*} = b_t(x_t^{u^*}) + u^*_t$.
\end{restatable}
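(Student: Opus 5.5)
We will derive the first-order optimality conditions of \cref{eqn:guidance_oc} via the Pontryagin maximum principle, or equivalently via a Lagrangian/adjoint computation. Then we will identify the costate with the transported reward gradient using the flow map $X^{u^*}$. We assume throughout that $b$ is $C^1$ in $x$ with enough regularity for the controlled flow maps to exist and be differentiable, that $r\in C^1$, and that the admissible controls are, say, square-integrable, so that $u^*$ is a genuine (interior) minimizer.

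\textbf{Step 1: adjoint equations.} Introduce a multiplier $a:[0,1]\to\R^d$ and the augmented functional
\begin{equation*}
	\int_0^1 \Big(\tfrac{1}{2\lambda}\norm{u_t}^2 - a_t^\T\big(\dot x_t - b_t(x_t) - u_t\big)\Big)dt - r(x_1).
\end{equation*}
We perturb $u^*$ by $\epsilon v$ and the state by the induced first-order variation $\delta x$, with $\delta x_0=0$ because $x_0$ is fixed. Integrating by parts in time and setting the first variation to zero for all $v$ yields three conditions:
\begin{itemize}
	\item the adjoint ODE $\dot a_t = -\nabla b_t(x_t^*)^\T a_t$;
	\item the terminal condition $a_1 = \nabla r(x_1^*)$;
	\item the stationarity condition $u^*_t = \lambda a_t$.
\end{itemize}
Rigorously, we compute the Gâteaux derivative of $\calL$ directly. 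Let $\delta x_t$ solve the linearized equation $\dot{\delta x}_t = \nabla b_t(x_t^*)\delta x_t + v_t$. Then $\delta x_1 = \int_0^1 \Phi_{s,1} v_s\,ds$, where $\Phi_{s,t}$ is the state-transition matrix of the linearization. Hence
\begin{equation*}
	\tfrac{d}{d\epsilon}\calL = \int_0^1 \Big(\tfrac1\lambda u^*_s - \Phi_{s,1}^\T\nabla r(x_1^*)\Big)^\T v_s\,ds.
\end{equation*}
Vanishing for all $v$ gives $u_s^* = \lambda\Phi_{s,1}^\T\nabla r(x_1^*)$ for a.e.\ $s$.

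\textbf{Step 2: identify $\Phi$ with the flow-map Jacobian.} The controlled dynamics has a time-dependent but state-independent drift addition $u^*_t$. Its flow map therefore satisfies $\partial_t X^{u^*}_{s,t}(x) = b_t(X^{u^*}_{s,t}(x)) + u^*_t$ with $X^{u^*}_{s,s}(x)=x$. Differentiating in $x$ shows that $\nabla X^{u^*}_{s,t}(x)$ solves $\partial_t \nabla X^{u^*}_{s,t} = \nabla b_t(X^{u^*}_{s,t}(x))\nabla X^{u^*}_{s,t}$ with identity initial data; the control term drops out because it does not depend on $x$. Evaluating at $x=x^*_s$ and using the jump condition \cref{eqn:jump_cond}, namely $X^{u^*}_{s,t}(x_s^*)=x_t^*$, this is exactly the equation for $\Phi_{s,t}$. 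Uniqueness of linear ODE solutions then gives $\Phi_{s,1} = \nabla X^{u^*}_{s,1}(x_s^*)$ and $x_1^* = X^{u^*}_{s,1}(x_s^*)$. Substituting into Step 1 gives \cref{eqn:optimal_control}.

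\textbf{Main obstacle.} The computations are routine; the subtle point is Step 2. We must treat the flow map of the controlled system with $u^*$ frozen as an open-loop function of time. That is what makes $\nabla X^{u^*}$ coincide with the linearized propagator, rather than picking up a feedback term. We will state this interpretation explicitly. The remaining technical care is justifying differentiation under the integral and the existence of the variations, which follows from standard ODE sensitivity results under the assumed regularity. Finally, the resulting formula holds for a.e.\ $t$, and holds for every $t$ once we take $u^*$ continuous, which is consistent since the right-hand side is continuous in $t$.
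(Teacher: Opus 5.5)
Your proposal is correct and follows essentially the paper's route: first-order (Pontryagin-type) optimality conditions, then identification of the linearized propagator along $x^*$ with the Jacobian of the controlled flow map, using that the open-loop control drops out of the variational equation. The differences are minor. You identify the forward state-transition matrix $\Phi_{s,1}$ through the variational equation in the terminal time, whereas the paper shows via the semigroup property that $\nabla X_{t,1}^{u}(x_t^u)^\T$ solves the backward adjoint equation. There is also one harmless slip: with the augmented functional exactly as you wrote it (the term $-a_t^\T(\dot x_t - b_t(x_t) - u_t)$), the conditions are $a_1 = -\nabla r(x_1^*)$ and $u_t^* = -\lambda a_t$. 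Your two sign errors cancel, and your direct G\^ateaux computation is correct regardless.
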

The proof is given in~\Cref{app:proofs}, and proceeds via a standard application of the Pontryagin Maximum Principle.
The guidance term~\cref{eqn:optimal_control} is circular, requiring access to the optimally-guided flow map $X_{s, t}^{u^*}$ to compute the endpoint $x_1^* = X_{t, 1}^{u^*}(x_t^*)$.
Unfortunately, this endpoint is precisely what we are trying to compute during inference, making direct application of~\cref{eqn:optimal_control} infeasible in practice.

\subsection{A tractable approximation}
To break this circularity, we study the control problem~\cref{eqn:guidance_oc} in the small-$\lambda$ limit.
This regime is both analytically tractable and practically meaningful,
as it forces the controlled trajectory to stay close to the base flow.
Moreover, large $\lambda$ values can be counterproductive, promoting reward hacking and mode collapse.

To carry out this analysis, we first introduce the \emph{optimal value function},
\begin{equation}
	\label{eqn:value_function_def}
	V_t^{\lambda}(x) = \min_{u} \left\{ \int_t^1 \frac{\|u_\tau\|^2}{2\lambda} d\tau - r(x_1^u) \;\Big|\; x_t^u = x \right\},
\end{equation}
which gives the optimal cost starting from state $x$ at time $t$.
The optimal feedback control law has the form
\begin{equation}
	\label{eqn:optimal_feedback_general}
	u_t^*(x) = -\lambda \nabla V_t^{\lambda}(x),
\end{equation}
where the optimal value function solves a Hamilton-Jacobi-Bellman (HJB) equation~\citep{bardi_optimal_1997}.
In the limit $\lambda \to 0$, we can solve this equation exactly, yielding a tractable guidance signal given only a pre-trained flow map.

\begin{proposition}[Small-$\lambda$ expansion]
	\label{prop:lambda-expansion-main}
	Under standard regularity assumptions, the following properties hold:
	\begin{enumerate}
		\item[(i)] The $\lambda \to 0$ solution is $V_t^0(x) = -r(X_{t,1}(x))$, where $X_{t,1}$ is the uncontrolled flow map.
		\item[(ii)] The corresponding guidance
		      \begin{equation}
			      \label{eqn:flow_map_guidance}
			      u_t^J(x) = \lambda \nabla X_{t,1}(x)^\T \nabla r(X_{t,1}(x)),
		      \end{equation}
		      approximates the optimal feedback to second order:
		      \begin{equation}
			      \label{eq:uexp-main}
			      \|u^*_t(x)-u^J_t(x)\| = O(\lambda^2).
		      \end{equation}
	\end{enumerate}
\end{proposition}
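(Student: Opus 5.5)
The plan is to work directly with the HJB equation for $V^\lambda_t$ and expand it in powers of $\lambda$. Substituting the feedback law \cref{eqn:optimal_feedback_general} into the dynamic programming principle for \cref{eqn:value_function_def} gives
\begin{equation*}
	\partial_t V^\lambda_t + b_t \cdot \nabla V^\lambda_t - \frac{\lambda}{2}\norm{\nabla V^\lambda_t}^2 = 0, \qquad V^\lambda_1 = -r.
\end{equation*}
Under the regularity assumptions ($b$ smooth and Lipschitz, $r$ with bounded derivatives up to third order, and $\lambda$ small enough that characteristics do not cross so that $V^\lambda$ is classical), I will posit the expansion $V^\lambda_t = V^0_t + \lambda V^1_t + O(\lambda^2)$. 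I will justify it either by smooth dependence of the Hamiltonian characteristic system on the parameter $\lambda$, or by differentiating the HJB in $\lambda$ and applying a Gr\"onwall bound along characteristics.

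\textbf{Order $\lambda^0$.} The leading term satisfies the linear transport equation $\partial_t V^0 + b\cdot\nabla V^0 = 0$ with $V^0_1 = -r$. Its solution is constant along the uncontrolled characteristics. The semigroup property $X_{t,1}(X_{s,t}(x)) = X_{s,1}(x)$ shows that $V^0_t(x) = -r(X_{t,1}(x))$ solves it, which proves (i). Equivalently, one can take $u\equiv 0$ as a competitor in \cref{eqn:value_function_def} to get $V^\lambda \le V^0$, and use the cost penalty $\norm{u}^2/(2\lambda)$ to force $\int\norm{u}^2 = O(\lambda)$ for near-optimal controls, which gives convergence. The chain rule then yields $-\lambda\nabla V^0_t(x) = \lambda \nabla X_{t,1}(x)^\T \nabla r(X_{t,1}(x)) = u^J_t(x)$.

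\textbf{Order $\lambda^1$.} The first-order correction satisfies $\partial_t V^1 + b\cdot\nabla V^1 = \tfrac12\norm{\nabla V^0}^2$ with $V^1_1 = 0$. Integrating along uncontrolled characteristics gives
\begin{equation*}
	V^1_t(x) = -\tfrac12\int_t^1 \norm{\nabla V^0_s(X_{t,s}(x))}^2\,ds.
\end{equation*}
This term is bounded with a bounded gradient. Consequently $u^*_t - u^J_t = -\lambda\nabla(V^\lambda_t - V^0_t) = -\lambda^2 \nabla V^1_t + O(\lambda^3)$, which is the $O(\lambda^2)$ claim in \cref{eq:uexp-main}. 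This matches the structure of \cref{prop:optimal_control}: there $X^{u^*}_{t,1}$ differs from $X_{t,1}$ by $O(\lambda)$ because $u^* = O(\lambda)$, and the prefactor $\lambda$ contributes one more order.

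\textbf{Main obstacle.} The hard part is making the remainder rigorous in the $C^1$ norm, since we need gradients and not only values. I will first try a characteristics argument. The optimal state–costate pair $(x^*,p)$ solves $\dot x = b - \lambda p$, $\dot p = -\nabla b^\T p$, with $p_1 = -\nabla r(x_1)$ and $u^* = -\lambda p$. For $\lambda=0$ this system reduces to the uncontrolled flow together with the adjoint of $\nabla X$. Smooth dependence on $\lambda$, via the implicit function theorem applied to the shooting map $x_1 \mapsto$ state at time $t$, gives $p^\lambda_t = p^0_t + O(\lambda)$ uniformly on compacts. Multiplying by $\lambda$ yields the $O(\lambda^2)$ bound directly, and this route bypasses any need for a second-order expansion of $V$.

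The main risk is the non-crossing of characteristics, which is needed for $V^\lambda$ to be $C^1$. For small $\lambda$, I would guarantee it by invertibility of $\nabla X_{t,1}$ together with the $O(\lambda)$ perturbation.
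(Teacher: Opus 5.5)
Your proposal is correct and follows the paper's proof essentially step for step. Both arguments expand the HJB equation in $\lambda$, solve the $O(1)$ transport equation by characteristics to get $V^0_t = -r(X_{t,1})$, obtain $V^1_t(x) = -\tfrac12\int_t^1\|\nabla V^0_\tau(X_{t,\tau}(x))\|^2\,d\tau$ by Duhamel, and conclude with $u^* - u^J = -\lambda^2\nabla V^1 + O(\lambda^3)$. The paper makes the uniform bound on $\nabla V^1$ explicit through its flow-map Jacobian and Hessian lemmas, which you only assert. On the other hand, the paper simply assumes the expansion of $V^\lambda$ holds uniformly (and tacitly in $C^1$), so your costate/characteristics argument for controlling the gradient remainder strengthens its proof.
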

The proof, given more formally in~\cref{app:small_lambda}, shows that the HJB equation reduces to a transport equation as $\lambda \to 0$, which we solve exactly via the method of characteristics.
In practice, we may wish to use the guidance signal~\cref{eqn:flow_map_guidance} away from the small $\lambda$ limit. We turn to this extension now.

\paragraph{A greedy perspective.}
We now show that~\cref{eqn:flow_map_guidance} also admits a complementary interpretation as a \textit{greedy} approximation of the original optimal control problem for \textit{any} choice of $\lambda$.
Specifically, for each $t$ we consider guidance signals that are applied in a small window around the present time,
\begin{equation}
	\label{eqn:restricted_class}
	\calU_t^{\delta t} = \{u: [0,1] \to \R^d \mid u_\tau = 0 \:\forall\: \tau \not \in [t, t+\delta t]\}.
\end{equation}
%
%
The following result shows that~\cref{eqn:flow_map_guidance} is optimal within this restricted class.

\begin{restatable}{proposition}{myopicguidance}
	\label{prop:myopic_guidance}
	Consider the restricted problem
	\begin{equation}
		\label{eqn:restricted_oc}
		\min_{u \in \calU_t^{\delta t}}\calL(u) = \min_{u \in \calU_t^{\delta t}} \int_0^1 \frac{\norm{u_\tau}^2}{2\lambda} d\tau - r(x_1^u).
	\end{equation}
	Then, the optimal guidance signal is given by~\cref{eqn:flow_map_guidance} as $\delta t \to 0$.
\end{restatable}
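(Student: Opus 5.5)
The plan is to exploit that, for $u \in \calU_t^{\delta t}$, the dynamics on $[0,t]$ and on $[t+\delta t,1]$ are uncontrolled. The trajectory reaches the given state $x_t$ at time $t$ independently of $u$. After the window it evolves by the base flow, so $x_1^u = X_{t+\delta t,1}(x_{t+\delta t}^u)$. The restricted problem~\cref{eqn:restricted_oc} then reduces to a problem on the short window only:
\begin{equation*}
	\min_{u}\; \int_t^{t+\delta t}\frac{\norm{u_\tau}^2}{2\lambda}d\tau - r\bigl(X_{t+\delta t,1}(x^u_{t+\delta t})\bigr),
\end{equation*}
where $\dot x^u_\tau = b_\tau(x^u_\tau)+u_\tau$ on $[t,t+\delta t]$ and $x^u_t=x_t$.

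First I will linearize the window dynamics. By Grönwall, with $b$ Lipschitz,
\begin{equation*}
	x^u_{t+\delta t} = X_{t,t+\delta t}(x_t) + \int_t^{t+\delta t} u_\tau\, d\tau + O\Bigl(\delta t \int_t^{t+\delta t}\norm{u_\tau}d\tau\Bigr).
\end{equation*}
By Cauchy--Schwarz, $\int_t^{t+\delta t}\norm{u_\tau}d\tau \le (\delta t)^{1/2}\norm{u}_{L^2}$. The quadratic cost therefore penalizes the impulse $p := \int_t^{t+\delta t}u_\tau\,d\tau$ by at least $\norm{p}^2/(2\lambda\delta t)$, with equality exactly when $u$ is constant on the window. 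Jensen's inequality shows it is optimal to take $u_\tau \equiv \bar u$ constant, so $p=\bar u\,\delta t$. The problem becomes the finite-dimensional one
\begin{equation*}
	\min_{\bar u\in\R^d}\; \frac{\delta t\,\norm{\bar u}^2}{2\lambda} - r\bigl(X_{t+\delta t,1}(X_{t,t+\delta t}(x_t)+\bar u\,\delta t + o(\delta t))\bigr).
\end{equation*}
The composition property gives $X_{t+\delta t,1}\circ X_{t,t+\delta t}=X_{t,1}$.

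Next I will take the first-order condition. Setting the gradient in $\bar u$ to zero and dividing by $\delta t$ gives
\begin{equation*}
	\bar u = \lambda\,\nabla X_{t+\delta t,1}(x^u_{t+\delta t})^\T \nabla r\bigl(X_{t+\delta t,1}(x^u_{t+\delta t})\bigr).
\end{equation*}
Since $\bar u$ stays bounded, $x^u_{t+\delta t}\to x_t$ as $\delta t\to 0$. By continuity of $(s,x)\mapsto X_{s,1}(x)$ and of its Jacobian, together with $C^1$ regularity of $\nabla r$, the right-hand side converges to $\lambda\nabla X_{t,1}(x_t)^\T\nabla r(X_{t,1}(x_t)) = u^J_t(x_t)$. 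This is~\cref{eqn:flow_map_guidance}.

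The main obstacle is making "the optimal guidance" well defined and convergent. I plan three steps. First, show existence of minimizers for small $\delta t$: the objective is coercive in $\bar u$ once $r$ is bounded above or has bounded gradient. Second, show uniqueness for $\delta t$ small: the Hessian of the reward term is $O(\delta t^2)$, while the quadratic term is $\delta t/\lambda$, so the reduced objective is strongly convex for any fixed $\lambda$. Third, bound $\bar u$ uniformly, using the stationarity condition together with bounded $\nabla r$ and $\nabla X$. Strong convexity is the point where "any $\lambda$" enters, and it is what makes the limit unique. I expect it to require the standing regularity assumptions: $b$ is $C^1$ with bounded derivatives and $r\in C^2$ with bounded second derivatives.
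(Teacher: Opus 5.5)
Your overall plan parallels the paper's proof: expand the endpoint to first order in $\delta t$ and reduce to a strictly convex quadratic in the window control. The paper does this with the Eulerian perturbation identity $X^u_{t,1}(x_t)=X_{t,1}(x_t)+\int_t^{t+\delta t}\nabla X_{\tau,1}(x^u_\tau)\,u_\tau\,d\tau$ and a left Riemann sum, then minimizes the truncated $O(\delta t)$ objective in the single vector $u_t$. You instead split at $t+\delta t$ and write the stationarity condition at finite $\delta t$ before passing to the limit, which is the more careful ordering.

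One step fails as stated, however: the reduction to constant controls. Cauchy--Schwarz only shows that a constant control is the cheapest control with a prescribed impulse $p$. It does not make constants optimal, because the endpoint is not a function of $p$ alone. Your own remainder $\int_t^{t+\delta t}\bigl(b_\tau(x^u_\tau)-b_\tau(X_{t,\tau}(x_t))\bigr)\,d\tau$ depends on the shape of $u$; by the identity above, $x^u_1$ really depends on the Jacobian-weighted impulse. Concretely, take $d=1$, $b_\tau(x)=-kx$ and $r(x)=cx$. The window problem then has the unique minimizer $u_\tau=\lambda c\,e^{-k(1-\tau)}$ for $\tau\in[t,t+\delta t]$, which is not constant when $k\neq 0$. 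Its limit is still $\lambda c\,e^{-k(1-t)}=u^J_t$, so your conclusion is right, but your argument only identifies the limit of the best \emph{constant} control, not of the optimizer over $\calU_t^{\delta t}$. Relatedly, setting the $\bar u$-gradient to zero through a term written as $o(\delta t)$ requires working with the exact map $\bar u\mapsto x^{\bar u}_{t+\delta t}$, whose derivative is $\delta t\,(I+O(\delta t))$.

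The repair is to take the first-order condition in $L^2([t,t+\delta t];\R^d)$ rather than in $\R^d$. The first variation of the window objective shows that any minimizer satisfies $u^*_\tau=\lambda\,\nabla\Phi_\tau(x^*_\tau)^\T\nabla r(x^*_1)$ for a.e.\ $\tau$ in the window. Here $\Phi_\tau$ maps the state at time $\tau$ to the time-$1$ state, controlled on $[\tau,t+\delta t]$ and following the base flow afterwards; this is just \cref{prop:optimal_control} applied within the restricted class. An open-loop control does not enter the variational equation, so $\nabla\Phi_\tau=\nabla X_{t+\delta t,1}(x^*_{t+\delta t})\,\nabla X^{u^*}_{\tau,t+\delta t}(x^*_\tau)$. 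Together with \cref{lem:jacobian_bound}, this gives $\sup_\tau\norm{u^*_\tau}\le\lambda e^{M}G$ with no separate a priori estimate. Hence $x^*_\tau\to x_t$ uniformly and $\nabla X^{u^*}_{\tau,t+\delta t}\to I$, and your continuity argument gives $u^*_\tau\to u^J_t(x_t)$ uniformly on the window. This makes the Jensen step unnecessary, and the strong-convexity argument is then needed only if you also want uniqueness. With this fix your argument is stronger than the paper's. The paper makes the same quasi-constancy assumption tacitly when it replaces $\int_t^{t+\delta t}\norm{u_\tau}^2d\tau$ by $\norm{u_t}^2\delta t+o(\delta t)$, and it minimizes the truncated expansion rather than proving convergence of minimizers.
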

The proof is given in~\cref{app:myopic_proof}, and proceeds by Taylor expansion in the width of the interval.
\cref{prop:lambda-expansion-main} and~\cref{prop:myopic_guidance} thus arrive at the same guidance signal~\cref{eqn:flow_map_guidance} via complementary arguments: it is the globally optimal control for small $\lambda$, as well as the optimal greedy correction for any $\lambda$.
Together, they justify using~\cref{eqn:flow_map_guidance} in practice beyond the small-$\lambda$ regime.

\paragraph{A hierarchy of approximations.}
An immediate implication of the above results is that standard guidance schemes like DPS~\citep{chung_diffusion_2024} can be more naturally understood as a single-step approximation of the greedy guidance~\cref{eqn:flow_map_guidance}, rather than as approximations of the reward-tilt gradient~\cref{eqn:tilt_gradient}~\citep{moitra_steering_2026}, which we formalize as follows.
\begin{corollary}[DPS as a single-step approximation of greedy guidance]
	\label{prop:dps_hierarchy}
	The DPS guidance signal
	\begin{equation}
		\label{eqn:dps_signal}
		u^{\text{DPS}}_t(x) = \lambda\, \bigl(I + (1-t)\, \nabla b_t(x)\bigr)^\T \nabla r\bigl(\hat{x}_1(x)\bigr), \qquad \hat{x}_1(x) := \E[x_1 \mid x_t = x] = x + (1-t)\, b_t(x),
	\end{equation}
	is obtained from the greedy guidance~\cref{eqn:flow_map_guidance} by replacing the exact flow-map endpoint $X_{t,1}(x)$ with the one-step Euler approximation $\hat{x}_1(x)$, and correspondingly approximating the Jacobian $\nabla X_{t,1}(x)^\T$ by its one-step Euler form $\bigl(I + (1-t)\, \nabla b_t(x)\bigr)^\T$.
\end{corollary}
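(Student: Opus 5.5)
The argument is a direct substitution into the greedy guidance \cref{eqn:flow_map_guidance}. The only substantive ingredient is the one-step Euler approximation of the flow map.

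\textbf{Step 1: Euler approximation of the endpoint.} From the jump condition \cref{eqn:jump_cond}, $X_{t,1}$ is the solution operator of $\dot x_\tau = b_\tau(x_\tau)$ on $[t,1]$, so $X_{t,1}(x) = x + \int_t^1 b_\tau(X_{t,\tau}(x))\,d\tau$. Freezing the integrand at $\tau = t$, i.e.\ using $X_{t,t}(x)=x$, gives one explicit Euler step of size $1-t$:
\[
\tilde X_{t,1}(x) := x + (1-t)\,b_t(x).
\]
This expression must equal the DPS denoiser $\hat x_1(x) = \E[x_1\mid x_t=x]$. For a velocity trained on the linear interpolant $I_t = (1-t)x_0 + t x_1$ we have $b_t(x) = \E[x_1 - x_0 \mid I_t = x]$. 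Since $x = (1-t)\E[x_0\mid I_t=x] + t\,\E[x_1\mid I_t=x]$, eliminating $\E[x_0\mid I_t = x]$ yields $\E[x_1\mid I_t=x] = x + (1-t)b_t(x)$. This is Tweedie's formula written in velocity form. It is the identity written in \cref{eqn:dps_signal}, so I take it as given there. For general $(\alpha_t,\beta_t)$ I only need the Euler form, and I will note that the equality with the posterior mean is schedule-specific.

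\textbf{Step 2: Consistent Jacobian.} The guidance signal \cref{eqn:flow_map_guidance} is $\lambda\,\nabla X_{t,1}(x)^\T\nabla r(X_{t,1}(x))$, which is precisely $\lambda\nabla_x[r\circ X_{t,1}](x)$ by the chain rule. Replacing $X_{t,1}$ by $\tilde X_{t,1}$ inside this composite and differentiating gives $\nabla\tilde X_{t,1}(x) = I + (1-t)\nabla b_t(x)$, and hence
\[
\lambda\,\nabla_x[r\circ\tilde X_{t,1}](x) = \lambda\bigl(I+(1-t)\nabla b_t(x)\bigr)^\T\nabla r(\hat x_1(x)),
\]
which is exactly $u^{\text{DPS}}_t$. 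The Euler step can be viewed in two ways: as the map itself, or as the Euler discretization of the variational equation $\tfrac{d}{d\tau}\nabla X_{t,\tau} = \nabla b_\tau(X_{t,\tau})\nabla X_{t,\tau}$ with initial condition $\nabla X_{t,t} = I$. Both give the same Jacobian approximation, which justifies the word ``correspondingly'' in the statement.

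\textbf{Main obstacle.} The algebra is trivial. The only point needing care is the identification $\hat x_1 = \E[x_1\mid x_t=x]$ in Step 1, which depends on the interpolant schedule. I would state it for the linear interpolant and remark that other schedules change only the scalar coefficients. Optionally, I would add that the approximation error is $O((1-t)^2)$ by Taylor expansion of the flow, which makes precise the sense in which DPS is accurate near $t=1$ and crude early on.
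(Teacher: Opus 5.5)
Your proposal is correct and matches the paper's argument. The paper's proof (\cref{prop:denoiser_euler} in \cref{app:denoiser_euler}) likewise eliminates $\E[x_0\mid I_t=x]$ using the conditional interpolant identity. For a general schedule this gives $\hat{x}_1(x) = (\alpha_t b_t(x) - \dot\alpha_t x)/(\alpha_t\dot\beta_t - \dot\alpha_t\beta_t)$, which confirms your remark that only the scalar coefficients change, and for the linear interpolant it specializes to $x + (1-t)b_t(x)$. Your Step~2 computation $\nabla\bigl(x+(1-t)b_t(x)\bigr) = I + (1-t)\nabla b_t(x)$ simply makes explicit the Jacobian identification that the paper leaves implicit when it says both signals backpropagate $\nabla r$ through an endpoint estimate.
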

\begin{wrapfigure}[16]{r}{0.5\textwidth}
	\centering
  \vspace{-1.25em}
	\resizebox{0.49\textwidth}{!}{
\definecolor{exactblue}{RGB}{90, 110, 160}
\definecolor{greedygreen}{RGB}{60, 135, 125}
\definecolor{dpsred}{RGB}{165, 100, 95}
\definecolor{annotgray}{RGB}{100, 100, 100}
\definecolor{arrowgray}{RGB}{100, 100, 100}

\begin{tikzpicture}[
    x=1cm, y=1cm,
    every node/.style={font=\small},
    eqbox/.style={
        draw=#1!30, fill=#1!5, rounded corners=3pt,
        minimum width=8.0cm, minimum height=1.1cm,
        inner sep=5pt, anchor=west,
    },
    methodlabel/.style={
        font=\small\bfseries, anchor=west,
    },
    annotation/.style={
        font=\footnotesize, annotgray,
    },
    thickarrow/.style={
        ->, >=stealth, line width=1.5pt, arrowgray,
    },
]

\node[eqbox=exactblue] (exact) at (0, 0) {};
\node[methodlabel, exactblue, align=left] at (0.15, 0) {Exact-optimal\\[-2pt]{\footnotesize(intractable)}};
\node[anchor=west, font=\small] at (2.8, 0) {$u_t^* = \lambda\, \nabla {\color{exactblue}\boldsymbol{X_{t,1}^{u^*\!}(x_t^*)}}^\mathsf{T} \nabla r\!\big({\color{exactblue}\boldsymbol{X_{t,1}^{u^*\!}(x_t^*)}}\big)$};

\draw[thickarrow] (3.75, -0.55) -- (3.75, -1.2)
    node[midway, right=6pt, annotation] {uncontrolled flow map};

\node[eqbox=greedygreen] (greedy) at (0, -1.75) {};
\node[methodlabel, greedygreen, align=left] at (0.15, -1.75) {FMRG\\[-2pt]{\footnotesize(ours)}};
\node[anchor=west, font=\small] at (2.8, -1.75) {$u_t(x) = \lambda\, \nabla {\color{greedygreen}\boldsymbol{X_{t,1}(x)}}^\mathsf{T} \nabla r\!\big({\color{greedygreen}\boldsymbol{X_{t,1}(x)}}\big)$};

\draw[thickarrow] (3.75, -2.3) -- (3.75, -2.95)
    node[midway, right=6pt, annotation] {sub.\ denoiser estimate};

\node[eqbox=dpsred, minimum height=1.3cm] (dps) at (0, -3.6) {};
\node[methodlabel, dpsred] at (0.15, -3.5) {DPS};
\node[anchor=west, font=\small] at (2.8, -3.4) {$u_t^{\mathrm{DPS}}(x) = \lambda\, \nabla_x r\!\big({\color{dpsred}\boldsymbol{\hat{x}_1(x)}}\big)$};
\node[anchor=west, font=\small] at (2.8, -3.85) {where $\hat{x}_1 = \mathbb{E}[x_1 \mid x_t{=}x]$};

\end{tikzpicture}}
	\caption{\textbf{Hierarchy of approximations.} The exact-optimal control requires the controlled flow map $X^{u^*}_{t,1}$.
		Our approaches leverages the uncontrolled flow map $X_{t,1}$, while DPS further approximates $X_{t,1}$ with a single Euler step.}
	\label{fig:hierarchy}
\end{wrapfigure}
The proof is given in~\cref{app:denoiser_euler}.
For the linear interpolant, the posterior mean $\hat{x}_1$ coincides with a single Euler step of the probability flow, while the exact flow map $X_{t,1}$ corresponds to the limit of an infinite number of Euler steps; \cref{prop:dps_hierarchy} thus places DPS at the coarsest level of an approximation hierarchy (\cref{fig:hierarchy}).
More broadly, many existing guidance methods, including FlowDPS~\citep{kim_flowdps_2025}, FlowChef~\citep{patel_flowchef_nodate}, and MPGD~\citep{he_manifold_2023}, fall within this hierarchy (\cref{tab:special_cases}), with each method differing primarily in the choice of guidance weight.
Similarly, seed-optimization methods such as ReNO~\citep{eyring_reno_2024} and D-Flow~\citep{ben-hamu_d-flow_2024} can be viewed as restricting the greedy guidance to $t = 0$ (see~\cref{sec:algorithm} for additional details).
\Cref{app:special_cases} contains a detailed derivation of each reduction.


\subsection{On-manifold guidance}
\label{sec:tangent_projection}
Beyond its appearance in the optimal feedback~\cref{eqn:flow_map_guidance}, the Jacobian $\nabla X_{t,1}(x_t)^\T$ has a second, geometric role: it projects reward gradients onto the tangent space of the data manifold.
\begin{restatable}[Tangent space projection]{proposition}{tangentprojection}
	\label{prop:tangent_projection}
	Suppose the data distribution $\rho_1$ is supported on a smooth manifold $\calM \subset \R^d$.
	Let $x_1 = X_{t,1}(x_t) \in \calM$ be the endpoint of the flow, and let $T_{x_1}\calM$ denote the tangent space at $x_1$.
	Then, for any $v \in \R^d$ with $v = v_\parallel + v_\perp$ where $v_\parallel \in T_{x_1}\calM$ and $v_\perp \perp T_{x_1}\calM$,
	\begin{equation}
		\label{eqn:jacobian_projection}
		\nabla X_{t,1}(x_t)^\T v = \nabla X_{t,1}(x_t)^\T v_\parallel.
	\end{equation}
\end{restatable}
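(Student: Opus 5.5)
The claim reduces to showing that the range of the Jacobian $\nabla X_{t,1}(x_t)$ lies in $T_{x_1}\calM$. Once we have that, $\nabla X_{t,1}(x_t)^\T v_\perp$ is zero for every $v_\perp \perp T_{x_1}\calM$. Indeed, for any $w \in \R^d$,
\begin{equation*}
\langle \nabla X_{t,1}(x_t)^\T v_\perp, w\rangle = \langle v_\perp, \nabla X_{t,1}(x_t) w\rangle = 0 .
\end{equation*}
Linearity of $\nabla X_{t,1}(x_t)^\T$ and the decomposition $v = v_\parallel + v_\perp$ then give \cref{eqn:jacobian_projection}.

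\textbf{Step 1: the map lands on the manifold near $x_t$.} I will use the standing assumption that the flow map pushes $\rho_t$ forward to $\rho_1$ and that $X_{t,1}$ is smooth on a neighborhood of $x_t$ for $t<1$. Then for $x$ in a neighborhood $N$ of $x_t$ we have $X_{t,1}(x) \in \mathrm{supp}\,\rho_1 = \calM$. The argument has two parts.
\begin{itemize}
\item Since $\rho_t$ has full support (it is a Gaussian convolution of the data, $\alpha_t>0$), every point of $N$ is a limit of points $x$ with $X_{t,1}(x) \in \calM$.
\item Because $\calM$ is closed, being the support of a measure, continuity of $X_{t,1}$ gives $X_{t,1}(N) \subset \calM$.
\end{itemize}

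\textbf{Step 2: differentiate along curves.} Take any $w \in \R^d$ and the curve $\gamma(\epsilon) = X_{t,1}(x_t + \epsilon w)$. For small $\epsilon$ this is a smooth curve lying in $\calM$ with $\gamma(0) = x_1$. By the definition of the tangent space of an embedded submanifold, $\gamma'(0) = \nabla X_{t,1}(x_t) w \in T_{x_1}\calM$. So the range of the Jacobian lies in the tangent space, and the duality computation above finishes the proof.

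\textbf{Main obstacle.} The delicate point is regularity. When $\calM$ is lower-dimensional, the exact flow map $X_{t,1}$ is generically singular as $t \to 1$, since it collapses $\R^d$ onto $\calM$. We therefore need $X_{t,1}$ to be differentiable at $x_t$ with values in $\calM$ on a whole neighborhood, not merely $\rho_t$-almost everywhere. I would state this explicitly as part of the "smooth manifold" hypothesis: $X_{t,1}$ is $C^1$ near $x_t$ and maps into $\calM$. If instead we only know $X_{t,1}(x)\in\calM$ for $\rho_t$-a.e. $x$, the closure argument in Step 1 upgrades this to all $x$ near $x_t$.
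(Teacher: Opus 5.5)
Your proposal is correct and takes the same route as the paper. You show $\mathrm{Im}(\nabla X_{t,1}(x_t)) \subseteq T_{x_1}\calM$ by differentiating the curve $\epsilon \mapsto X_{t,1}(x_t + \epsilon w)$ in $\calM$, then conclude by the duality $\langle \nabla X_{t,1}(x_t)^\T v, w\rangle = \langle v, \nabla X_{t,1}(x_t) w\rangle$. Your Step 1 justifies, for $t<1$ via full support of $\rho_t$ and continuity, that $X_{t,1}$ maps a neighborhood of $x_t$ into $\calM$, which the paper simply asserts; this is slightly more careful, though if $\calM$ is not closed you should invoke local closedness of the embedded submanifold near $x_1$ rather than $\mathrm{supp}\,\rho_1 = \calM$.
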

The proof is given in~\cref{app:tangent_proof}.
\Cref{prop:tangent_projection} shows that the Jacobian annihilates components of the reward gradient orthogonal to the data manifold, keeping the controlled trajectory on-manifold by construction (\cref{fig:manifold}, left).
While exact manifold structure is an idealization, the intuition extends to settings where data concentrates \emph{near} a low-dimensional structure, as widely believed for natural images: the Jacobian has small singular values in directions orthogonal to the effective data support.
This geometric structure underlies the FMRG-J variant of our algorithm (\cref{sec:algorithm}), which uses the full Jacobian-projected guidance signal, in contrast to the cheaper FMRG-E variant, which drops the Jacobian for memory efficiency.

\subsection{Terminal distribution and early stopping}
Given the characterization in~\cref{prop:myopic_guidance}, we now ask how this greedy approximation affects the terminal distribution obtained by evolving an ensemble of initial conditions.
We address this in the Gaussian setting with a quadratic reward, where we find that all three guidance schemes~\cref{eqn:tilt_gradient},~\cref{eqn:optimal_control}, and~\cref{eqn:flow_map_guidance} become amenable to analytical treatment.
While idealized, the resulting calculation allows us to understand the effect of each scheme on the diversity of samples and their rewards.

\begin{proposition}	\label{prop:gaussian_comparison}
	Consider a Gaussian base distribution $\rho_0 = \Normal(0, I)$, a Gaussian target $\rho_1 = \Normal(\mu_1, \sigma_1^2I)$, and a quadratic reward $r(x) = -\Vert x-a\Vert^2$.
	Then the output distribution in each case is Gaussian with variances
	\begin{equation}
		\label{eq:output_variances}
		\sigma_{\mathrm{tilt}}^2 = \frac{\sigma_1^2}{1 + 2\lambda\sigma_1^2}, \qquad \sigma_{\mathrm{greedy}}^2 = \sigma_1^2 \exp\!\left(-2\pi\lambda\sigma_1\right), \qquad \sigma_{\mathrm{exact}}^2 = \frac{\sigma_1^2}{(1 + \pi\lambda\sigma_1)^2},
	\end{equation}
	and expected terminal rewards $\bar{r}_s := \E[r(X_1^s)]$ given by
	\begin{equation}
		\label{eq:output_rewards}
		\bar{r}_{\mathrm{tilt}} = -\frac{\sigma_1^2(1+2\lambda\sigma_1^2) + (\mu_1-a)^2}{(1+2\lambda\sigma_1^2)^2}, \:\: \bar{r}_{\mathrm{greedy}} = -\bigl(\sigma_1^2 + (\mu_1-a)^2\bigr)\, e^{-2\pi\lambda\sigma_1}, \:\: \bar{r}_{\mathrm{exact}} = -\frac{\sigma_1^2 + (\mu_1-a)^2}{(1+\pi\lambda\sigma_1)^2}.
	\end{equation}
\end{proposition}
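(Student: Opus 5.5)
The plan is to exploit the fact that, in this Gaussian setting, everything is affine, so each guidance scheme reduces to a scalar linear ODE for one quantity. I will work per coordinate: the covariances are isotropic and the reward separates, so it suffices to treat $d=1$, which is also how the formulas are stated. I take the linear interpolant $\alpha_t = 1-t$, $\beta_t = t$. This is the choice that produces the factors of $\pi$, through $\int_0^1 \frac{dt}{(1-t)^2 + \sigma_1^2 t^2} = \frac{\pi}{2\sigma_1}$.

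\textbf{Base flow.} Under the base flow, $x_t = (1-t)x_0 + t(\mu_1 + \sigma_1 z)$ with $x_0, z \sim \Normal(0,1)$ independent. Hence $x_t \sim \Normal(t\mu_1, s_t^2)$ with $s_t^2 = (1-t)^2 + \sigma_1^2 t^2$. The flow map is the monotone affine transport between these marginals:
\begin{equation*}
X_{t,1}(x) = \mu_1 + \frac{\sigma_1}{s_t}\,(x - t\mu_1), \qquad J_t := \nabla X_{t,1} = \frac{\sigma_1}{s_t}.
\end{equation*}
The key bookkeeping variable is the predicted endpoint residual $y_t := X_{t,1}(x_t) - a$. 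Under controlled dynamics $\dot x_t = b_t(x_t) + u_t$, the base drift is transported without changing $X_{t,1}(x_t)$, by the jump condition \cref{eqn:jump_cond}. Therefore $\dot y_t = J_t u_t$, and $y_1 = x_1 - a$ is exactly the quantity whose law we need.

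\textbf{Greedy scheme.} Here $u_t = \lambda J_t \nabla r = -2\lambda J_t y_t$, so $\dot y_t = -2\lambda \frac{\sigma_1^2}{s_t^2}\, y_t$. Integrating with the arctan identity above gives
\begin{equation*}
y_1 = y_0\, e^{-\pi\lambda\sigma_1}.
\end{equation*}
Since $y_0 = \mu_1 - a + \sigma_1 x_0$, this yields $\sigma^2_{\mathrm{greedy}}$, and $\E[y_1^2]$ gives $\bar r_{\mathrm{greedy}}$.

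\textbf{Exact scheme.} I apply \cref{prop:optimal_control}. The control enters additively and $b_t$ is affine, so the controlled flow map $X^{u^*}_{t,1}$ has the same Jacobian $J_t$ as the uncontrolled one. Its endpoint is the true terminal point $x_1^*$, so $u^*_t = -2\lambda J_t (x_1^* - a)$, where $x_1^* - a = y_1$ is a constant along the trajectory. Integrating $\dot y_t = J_t u^*_t$ gives $y_1 = y_0 - 2\lambda y_1 \int_0^1 J_t^2\,dt = y_0 - \pi\lambda\sigma_1 y_1$, that is,
\begin{equation*}
y_1 = \frac{y_0}{1+\pi\lambda\sigma_1}.
\end{equation*}
The variance and reward follow immediately. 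Since the problem is convex (linear-quadratic), this critical point is the minimizer, so invoking the necessary condition is harmless.

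\textbf{Tilt scheme.} I interpret the tilt at the matching temperature, $\tilde\rho_1 \propto e^{\lambda r}\rho_1$, which is what the stated formula requires. Completing the square gives a Gaussian with precision $\sigma_1^{-2} + 2\lambda$ and mean $a + \frac{\mu_1 - a}{1+2\lambda\sigma_1^2}$. Then $\E[(x-a)^2]$ gives $\bar r_{\mathrm{tilt}}$, assuming the correction \cref{eqn:tilt_gradient} samples $\tilde\rho_1$ exactly.

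The main obstacle is a convention issue rather than a hard estimate. I need to pin down the interpolant and temperature conventions so that the constants come out as stated. I also need to justify cleanly that $X_{t,1}(x_t)$ evolves only through the control, i.e. $\partial_t X_{t,1} + \nabla X_{t,1}\, b_t = 0$, and that the controlled Jacobian equals $J_t$ in the exact case. I would verify the arctan integral first, since it fixes the factor $\pi$.
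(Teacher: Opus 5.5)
Your proposal is correct, and it reaches the result by a different route from the paper's.

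\textbf{The paper's route.} The paper works with the deviation $x_t - x_t^{\mathrm{M}}$ from the reference path $x_t^{\mathrm{M}} = X_{1,t}(a)$. The base rate $\dot C_t/(2C_t)$ therefore stays in the ODE and supplies the factor $\sigma_1$. For the exact scheme, the paper solves the HJB equation with a quadratic ansatz and obtains the Riccati equation for $P_t$. It linearizes this through $Q_t = 1/P_t$ and then evaluates $\lambda\int_0^1 P_t\,dt = \log(1+\pi\lambda\sigma_1)$.

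\textbf{Your route.} Your residual $X_{t,1}(x_t) - a$ equals $M_t(x_t - x_t^{\mathrm{M}})$. It is the same object, rescaled so that it absorbs the base drift; this is the differential form of \cref{lem:eulerian_perturbation}. Every scheme then reduces to $\dot y_t = J_t u_t$. The exact case collapses to the scalar fixed-point equation $y_1 = y_0 - \pi\lambda\sigma_1 y_1$, which comes from the open-loop Pontryagin condition, with no Riccati equation.

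Your two auxiliary claims hold:
\begin{itemize}
\item The controlled Jacobian equals $J_t$, because \cref{prop:optimal_control} treats $u$ as open-loop, so the adjoint sees only the state-independent $\nabla b_t$.
\item Since $u \mapsto x_1^u$ is affine, the cost is strictly convex and coercive on $L^2$. The critical point therefore exists, is unique, and is the global minimizer.
\end{itemize}

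\textbf{What each approach buys.} Your route makes the structure transparent. Both schemes run on the same clock $\int_0^1 J_t^2\,dt = \pi\sigma_1/2$. The exponential-versus-rational contrast is exactly the difference between feeding back the current predicted residual $y_t$ (greedy) and feeding back the final residual $y_1$, which is constant along the optimal path (exact). The early-stopping formula also follows by truncating that integral, since $y_t$ is frozen once the control is switched off.

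The paper's route instead yields the closed-loop feedback $u_t^*(x) = -\lambda P_t(x - x_t^{\mathrm{OC}})$ at every state. That underlies its gain-by-gain comparison of $M_t^2$ against $P_t$ about the shared reference path. Your open-loop argument yields only the optimal trajectory from each $x_0$, but that is all the terminal law requires.
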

\begin{wrapfigure}[20]{r}{0.5\textwidth}
	\centering
  \vspace{-0.75em}
	\includegraphics[width=0.49\textwidth]{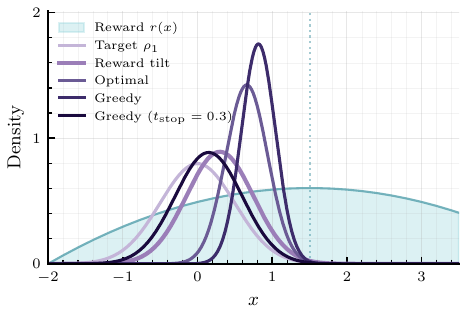}
	\caption{\textbf{Terminal distribution.}
		Greedy guidance produces a narrower distribution than reward tilting or the distribution produced by exactly solving the optimal control problem~\cref{eqn:optimal_control}.
		Early stopping can be used to effectively mitigate this mode collapse, and when applied at $t_{\mathrm{stop}} = 0.3$ recovers variance comparable to the reward tilt.}
	\label{fig:gaussian_theory}
\end{wrapfigure}
The proof is given in~\cref{app:gaussian_case}.
Inspecting~\cref{eq:output_rewards}, greedy guidance achieves the highest expected reward for a given $\lambda$, followed by exact optimal control and reward tilting.
The three schemes exhibit qualitatively different variance scaling: reward tilting reduces variance as $O\!\left(1/(1+c\lambda)\right)$, exact optimal control as $O\!\left(1/(1+c\lambda)^2\right)$, and greedy guidance exponentially (\cref{fig:gaussian_theory}).
This more aggressive contraction arises because greedy guidance neglects dependencies across time, yielding both higher expected reward and lower diversity for a given $\lambda$.
In our image generation experiments, we find that this enables effective reward alignment at extremely low NFEs, though it necessitates careful control of the guidance strength to avoid mode collapse.

One effective way to control mode collapse is early stopping, in which we apply guidance only for $t \in [0, t_{\mathrm{stop}}]$ and then integrate the uncontrolled flow to $t = 1$ via $X_{t_{\mathrm{stop}}, 1}$.
Here $t_{\mathrm{stop}} \in (0, 1]$ controls how much of the trajectory is guided: smaller values stop guidance earlier (when the sample is still mostly noise), while $t_{\mathrm{stop}} = 1$ corresponds to guiding the entire trajectory.
In the Gaussian setting, we prove analytically that a suitable $t_{\mathrm{stop}}(\lambda)$ recovers the polynomial scaling of exact optimal control.
\begin{proposition}[Variance recovery via early stopping]
	\label{prop:early_stopping}
	Under the setting of~\cref{prop:gaussian_comparison}, there exists a unique $t_{\mathrm{stop}}(\lambda) \in (0, 1]$ for which the terminal variance under greedy guidance on $[0, t_{\mathrm{stop}}(\lambda)]$ followed by the uncontrolled flow on $[t_{\mathrm{stop}}(\lambda), 1]$ matches $\sigma_{\mathrm{exact}}^2$.
\end{proposition}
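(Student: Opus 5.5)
The plan is to reduce the terminal variance under early-stopped greedy guidance to an explicit, strictly monotone function of $t_{\mathrm{stop}}$, and then apply the intermediate value theorem. The natural coordinate is the predicted endpoint $z_t := X_{t,1}(x_t)$, with $X_{t,1}$ the uncontrolled flow map. I will reuse the Gaussian setup from the proof of \cref{prop:gaussian_comparison}. There $X_{t,1}$ is affine, $X_{t,1}(x) = m_t + J_t x$, with a scalar multiple of the identity $J_t > 0$ determined by the interpolant and by $\sigma_1$.

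\textbf{Step 1: endpoint dynamics.} By the flow-map property, the uncontrolled flow leaves $z_t$ fixed, so $\frac{d}{dt}z_t = \nabla X_{t,1}(x_t)\,u_t$. Substituting the greedy control \cref{eqn:flow_map_guidance} with $\nabla r(z) = -2(z-a)$ gives
\begin{equation*}
\dot z_t = -2\lambda J_t^2 (z_t - a).
\end{equation*}
This is a linear scalar ODE in each coordinate. Integrating it shows that $z_\tau - a = e^{-2\lambda G(\tau)}(z_0 - a)$, where $G(\tau) := \int_0^\tau J_t^2\,dt$. Since $z_0 = X_{0,1}(x_0) \sim \rho_1$, the endpoint prediction has variance $\sigma_1^2 e^{-4\lambda G(\tau)}$ at time $\tau$. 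For consistency, at $\tau = 1$ this must reproduce $\sigma^2_{\mathrm{greedy}}$ from \cref{prop:gaussian_comparison}, which forces $4G(1) = 2\pi\sigma_1$. I will confirm this against the interpolant used in \cref{app:gaussian_case}.

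\textbf{Step 2: effect of stopping.} On $[t_{\mathrm{stop}}, 1]$ the flow is uncontrolled, so $z_t$ stays constant. Hence $x_1 = z_{t_{\mathrm{stop}}}$, and the terminal variance is
\begin{equation*}
\Sigma(\tau) = \sigma_1^2 e^{-4\lambda G(\tau)}, \qquad \tau = t_{\mathrm{stop}}.
\end{equation*}
Because $J_t^2 > 0$ and $J_t$ is continuous, $G$ is continuous and strictly increasing with $G(0) = 0$. It follows that $\Sigma$ decreases strictly and continuously from $\Sigma(0) = \sigma_1^2$ to $\Sigma(1) = \sigma_{\mathrm{greedy}}^2$.

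\textbf{Step 3: existence and uniqueness.} Equating $\Sigma(\tau) = \sigma^2_{\mathrm{exact}}$ from \cref{prop:gaussian_comparison} amounts to solving
\begin{equation*}
G(\tau) = \frac{\log(1 + \pi\lambda\sigma_1)}{2\lambda}.
\end{equation*}
The right-hand side is strictly positive. By the elementary bound $\log(1+y) \le y$, it is at most $\pi\sigma_1/2 = G(1)$. The intermediate value theorem then gives a solution $\tau \in (0,1]$, and strict monotonicity of $G$ makes it unique. The value $\tau > 0$ holds because the target lies strictly above $G(0) = 0$.

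\textbf{Main obstacle.} The only delicate point is Step 1: I must justify that the endpoint variable evolves by exactly $\nabla X_{t,1}\,u_t$ under the guided flow, and pin down $J_t$ so that $G(1)$ matches the constant in \cref{prop:gaussian_comparison}. If the paper's interpolant yields a different constant, the argument is unchanged in structure. One then still needs $\log(1+\pi\lambda\sigma_1) \le 2\lambda G(1)$, and this follows from the same $\log(1+y) \le y$ bound because the exponent in $\sigma_{\mathrm{greedy}}^2$ equals $4\lambda G(1)$.
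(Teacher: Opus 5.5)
Your argument is correct. It has the same skeleton as the paper's proof: an explicit terminal variance $\sigma_1^2\exp\bigl(-4\lambda\int_0^{t_{\mathrm{stop}}} M_t^2\,dt\bigr)$ that is continuous and strictly decreasing in $t_{\mathrm{stop}}$, followed by the intermediate value theorem.

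The difference is the coordinate in which you integrate. The paper tracks the deviation $y_t = x_t - x_t^{\mathrm{M}}$ from the base-flow trajectory ending at $a$. Its ODE keeps the drift term $\dot C_t/(2C_t)$, and early stopping is handled by truncating only the guidance part of the exponent. This gives the closed form $\sigma_1^2\exp\bigl(-4\lambda\sigma_1\arctan(\sigma_1 t_{\mathrm{stop}}/(1-t_{\mathrm{stop}}))\bigr)$.

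You instead track the predicted endpoint $z_t = X_{t,1}(x_t) = a + M_t y_t$ via the Eulerian identity of \cref{lem:eulerian_perturbation}. The base drift drops out, $z_0\sim\rho_1$ holds immediately, and the uncontrolled phase is trivial because $z_t$ is frozen. This makes explicit a step the paper's one-line truncation leaves implicit.

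Your route has two further advantages. You justify $\sigma_{\mathrm{greedy}}^2\le\sigma_{\mathrm{exact}}^2$ via $\log(1+y)\le y$, which the paper states without comment. You also insist on strict monotonicity, which uniqueness needs; the paper says only ``monotone.'' What the paper's arctan form buys in return is the closed-form $t_{\mathrm{stop}}(\lambda)$ and its asymptotics, which this statement does not require.

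Finally, you need not infer $G(1)=\pi\sigma_1/2$ by consistency with \cref{prop:gaussian_comparison}. By \cref{lem:gaussian_flow_map}, $J_t = M_t = \sigma_1/\sqrt{C_t}$, so your $G(1)$ is exactly the integral \cref{eq:sigma_over_C_integral}.
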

The proof, along with the closed-form variance and the asymptotic behavior of $t_{\mathrm{stop}}(\lambda)$, is given in~\cref{app:gaussian_case} (\cref{fig:gaussian_theory}).
We find empirically that these observations hold for high-resolution image synthesis at FLUX scale (\cref{sec:experiments}; see~\cref{app:inverse_problems,app:geneval_details} for ablations).

\section{Algorithmic framework}
\label{sec:algorithm}
We now turn to a practical implementation of the guidance framework developed in~\cref{sec:methods}.
Substituting the greedy guidance term~\cref{eqn:flow_map_guidance} into the controlled dynamics~\cref{eqn:guidance_oc} gives the guided generative process
\begin{lavenderbox}
	\begin{equation}
		\label{eqn:guided_ode}
		\dot{x}_t^u = b_t(x_t^u) + \lambda_t(x_t^u)\, \nabla X_{t,1}(x_t^u)^\T \nabla r(X_{t,1}(x_t^u)),
	\end{equation}
\end{lavenderbox}
which forms the basis of our \emph{Flow Map Reward Guidance (FMRG)} methodology.
Implementing~\cref{eqn:guided_ode} effectively admits a design space of algorithmic decisions; in this section, we outline the conceptual role of each choice and make practical recommendations for high performance.
Our complete procedure is summarized in Algorithms~\ref{alg:fmrg_j} and~\ref{alg:fmrg_e}.

\paragraph{Operator splitting.}
For efficient inference, we propose an \emph{operator splitting}~\citep{strang_construction_1968} scheme to decompose~\cref{eqn:guided_ode} into the base flow $b_t(x_t)$, which we integrate exactly with the pre-trained flow map $X_{s, t}$, and the guidance term.
Given a temporal grid $0 = t_0 < t_1 < \hdots < t_N = 1$ with $\Delta t_k = t_{k+1} - t_k$, each step proceeds as
\begin{equation}
	\label{eqn:splitting}
	\begin{aligned}
		\tilde{x}_{t_{k+1}} & = X_{t_k, t_{k+1}}(x_{t_k}),                                                                                      \\
		x_{t_{k+1}}         & = \tilde{x}_{t_{k+1}} + \Delta t_k \, \lambda_{t_{k+1}}(\tilde{x}_{t_{k+1}}) \, u_{t_{k+1}}(\tilde{x}_{t_{k+1}}),
	\end{aligned}
\end{equation}
where $u$ denotes the guidance signal.

\begin{figure}[t]
\begin{minipage}[t]{0.48\textwidth}
\begin{algorithm}[H]
	\caption{FMRG-J (Jacobian)}	\label{alg:fmrg_j}
	\begin{algorithmic}[1]
		\REQUIRE Flow map $X$, reward $r$, time grid $\{t_k\}_{k=0}^N$, guidance strength $\lambda_t(x)$, gradient steps $n_{\text{opt}}$
		\STATE Sample $x_0 \sim \rho_0$
		\FOR{$k = 0, \ldots, N-1$}
		\STATE $x \gets X_{t_k, t_{k+1}}(x_{t_k})$ \COMMENT{Flow map step}
		\FOR{$j = 0, \ldots, n_{\text{opt}} - 1$}
		\STATE $u \gets \nabla X_{t_{k+1}, 1}(x)^\T \nabla r(X_{t_{k+1}, 1}(x))$
		\STATE $x \gets x + \tfrac{\Delta t_k}{n_{\text{opt}}} \, \lambda_{t_{k+1}}(x)  \cdot u$
		\ENDFOR
		\STATE $x_{t_{k+1}} \gets x$
		\ENDFOR
		\STATE \textbf{return} $x_1$
	\end{algorithmic}
\end{algorithm}
\end{minipage}%
\hfill
\begin{minipage}[t]{0.48\textwidth}
\begin{algorithm}[H]
	\caption{FMRG-E (Euclidean)}	\label{alg:fmrg_e}
	\begin{algorithmic}[1]
		\REQUIRE Flow map $X$, reward $r$, time grid $\{t_k\}_{k=0}^N$, guidance strength $\lambda_t(x)$, gradient steps $n_{\text{opt}}$
		\STATE Sample $x_0 \sim \rho_0$
		\FOR{$k = 0, \ldots, N-1$}
		\STATE $x \gets X_{t_k, t_{k+1}}(x_{t_k})$ \COMMENT{Flow map step}
		\FOR{$j = 0, \ldots, n_{\text{opt}} - 1$}
		\STATE $u \gets \nabla r(X_{t_{k+1}, 1}(x))$
		\STATE $x \gets x + \tfrac{\Delta t_k}{n_{\text{opt}}} \, \lambda_{t_{k+1}}(x)  \cdot u$
		\ENDFOR
		\STATE $x_{t_{k+1}} \gets x$
		\ENDFOR
		\STATE \textbf{return} $x_1$
	\end{algorithmic}
\end{algorithm}
\end{minipage}

\vspace*{0.3cm}

\centering
\begin{minipage}[c]{0.50\textwidth}
	\centering
	\resizebox{\textwidth}{!}{\input{figures/manifold_fig}}
\end{minipage}%
\begin{minipage}[c]{0.50\textwidth}
	\centering
	\resizebox{\textwidth}{!}{\input{figures/bear_qualitative}}
\end{minipage}
\caption{\textbf{Gradient options.}
	(Left) The flow map Jacobian $\nabla X_{t, 1}(x)^\T$ projects the reward gradient $\nabla r$ onto $T_{x_1}\calM$, keeping the trajectory on-manifold (blue, FMRG-J), while the Euclidean gradient follows $\nabla r$ off-manifold (purple, FMRG-E).
	(Right) FMRG-E achieves higher reward ($r$++) but produces artifacts because it can leave the data manifold, often leading to reward hacking; FMRG-J stays on-manifold and more robustly preserves features in the data ($r$+).}
\label{fig:manifold}
\end{figure}

\paragraph{Multiple gradient steps.}
Within each time interval $[t_k, t_{k+1}]$, we may take multiple gradient steps to better optimize the reward before advancing the flow.
To this end, let $n_{\text{opt}}$ denote the number of gradient steps per interval.
The gradient part of the guidance update then becomes:
\begin{equation}
	\label{eqn:multi_step}
	\begin{aligned}
		x^{(j+1)} & = x^{(j)} + \tfrac{\Delta t_k}{n_{\text{opt}}}\,\lambda_{t_{k+1}}(x^{(j)})\cdot u_{t_{k+1}}(x^{(j)}), \\
		j         & = 0, \ldots, n_{\text{opt}} - 1,
	\end{aligned}
\end{equation}
where $x^{(0)} = \tilde{x}_{t_{k+1}}$ and $x_{t_{k+1}} = x^{(n_{\text{opt}})}$.
These optimization steps may also be run before the first step in~\cref{eqn:splitting} for an initial round of seed optimization~\citep{eyring_reno_2024}.

\paragraph{Gradient options.}

For the guidance signal $u$, we consider two primary choices.
The first is the gradient~\cref{eqn:flow_map_guidance}, which backpropagates through the flow map and gives~\cref{eqn:guided_ode}.
The second is the \textit{Euclidean gradient}
\begin{equation}
	\label{eqn:euclidean_grad}
	u^E(x, t) = \nabla r(X_{t, 1}(x)),
\end{equation}
which avoids backpropagation through the neural network and is commonly leveraged by denoiser-based methods to avoid the memory cost of the network Jacobian~\citep{kim_flowdps_2025,patel_flowchef_nodate,he_manifold_2023,rout_rb-modulation_2024}.
By~\cref{prop:tangent_projection}, the Jacobian-based guidance projects the reward gradient onto the data-manifold tangent space, which serves to transport the data-space gradient at the endpoint $X_{t,1}(x_t)$ back to time $t$.
Prior works that avoid the Jacobian, such as FlowDPS~\citep{kim_flowdps_2025} and MPGD~\citep{he_manifold_2023}, handle this transport by renoising to time $t$; we show in~\cref{tab:special_cases} and~\cref{app:special_cases} that this effectively amounts to a change of gain $\lambda_t$.
Our Euclidean variant applies the data-space gradient at the flow-map lookahead and treats $\lambda_t$ as a free parameter for tuning; this directly optimizes the reward without reprojection and may leave the data manifold as a result (\cref{fig:manifold}).
These two choices give rise to two variants of our algorithm (Algorithms~\ref{alg:fmrg_j} and~\ref{alg:fmrg_e}), which we term \textbf{FMRG-E} (Euclidean) and \textbf{FMRG-J} (Jacobian), respectively.
In practice, we find that FMRG-E is effective when the reward landscape is well-aligned with the data manifold, while FMRG-J tends to be more effective for neural-network-based rewards with more complex landscapes (see~\cref{sec:analysis} for a systematic study).

\begin{table}[t]
	\centering
	\caption{\textbf{Existing methods as special cases of FMRG.} Each method can be understood through three design choices: how the endpoint $X_{t,1}$ is approximated, how the base dynamics are integrated, and the guidance weight. Here $t_{+} > t$ denotes the next timestep.}
	\label{tab:special_cases}
	\vspace{0.5em}
	\small
	\begin{tabular}{l ccccc}
		\toprule
		\textbf{Method} & \textbf{Endpoint} & \textbf{Dynamics} & \textbf{Jacobian?} & \textbf{Weight} & \textbf{Guidance at} \\
		\midrule
		FMRG-J (ours)                                & Flow map      & Flow map   & Yes     & $\lambda_t \Delta t$ & Every step   \\
		FMRG-E (ours)                                & Flow map      & Flow map   & No      & $\lambda_t \Delta t$ & Every step   \\
		\midrule
		DPS~\citep{chung_diffusion_2024}             & Euler step    & Euler step & Yes     & const                & Every step   \\
		FlowDPS~\citep{kim_flowdps_2025}             & Euler step    & Euler step & No      & $(1{-}t)\, t_{+}$  & Every step   \\
		FlowChef~\citep{patel_flowchef_nodate}       & Euler step    & Euler step & No      & const              & Every step   \\
		MPGD~\citep{he_manifold_2023}                & Euler + proj. & Euler step & No      & $t_{+}$            & Every step   \\
		ReNO~\citep{eyring_reno_2024}                & One-step gen. & ---        & Yes     & ---                & $t = 0$ only \\
		D-Flow~\citep{ben-hamu_d-flow_2024}          & ODE solver    & ---        & Yes     & ---                & $t = 0$ only \\
		\bottomrule
	\end{tabular}
\end{table}

\paragraph{Reducing NFEs.}
Each step of FMRG-E requires evaluating $X_{t_k,1}$ to compute the reward gradient and $X_{t_k,t_{k+1}}$ to advance the flow, for a total of two NFEs per step.
In the very low NFE regime, we prefer to allocate this budget to additional guidance updates rather than to accurate intermediate integration.
To this end, we reuse the endpoint evaluation via the linear interpolation
$X_{t_k,t_{k+1}}(x) \approx x + \tfrac{t_{k+1}-t_k}{1-t_k}\bigl(X_{t_k,1}(x) - x\bigr)$,
reducing the cost to a single NFE per step.
On its own, this linearization would not form a valid sampler, since it discards the nonlinear structure of the learned flow during advancement.
However, the reward gradient supplies a consistent correction back toward the data manifold, so the trajectory is kept on track by guidance rather than by accurate free integration.
Crucially, we still compute the reward gradient via the full flow map lookahead $X_{t_{k+1}, 1}$, so the learned flow's nonlinearity is preserved for accurate reward estimation.
When we leverage early stopping, we can complete the generation with a single flow map step $X_{t_{\text{stop}}, 1}$.

\paragraph{Reinitialization.}
Because greedy optimization depends on the initial noise sample, we may run a small number of independent reinitializations.
We find at most $4$ suffice even for complex neural-network-based rewards.
Each reinitialization triggers a fresh single-trajectory guided run, in contrast to SMC, which maintains many particles with resampling along the trajectory.

\section{Related work}

\paragraph{Dynamical measure transport.}
Flow-~\citep{albergo2023stochastic,lipman2022flow,liu_flow_2022} and diffusion-based~\citep{song_score-based_2021} generative models have achieved state of the art performance across diverse continuous modalities by learning to transport a simple reference distribution such as a Gaussian to a complex target.
More recently, consistency models~\citep{kim_consistency_2024,geng_consistency_2024,song_consistency_2023,heek_multistep_2024} and flow maps~\citep{sabour_align_2025,geng_mean_2025,geng_improved_2025,frans_one_2024,boffi_flow_2025,boffi_how_2025,zhou_terminal_2025} have been introduced to learn to map directly along a dynamical measure transport process, dramatically increasing sampling efficiency.
While these models have become increasingly capable, principled guidance methods native to the few-step, single-trajectory regime of flow maps remain underexplored, motivating the present study.

\paragraph{Guidance.}
Guidance, or inference-time alignment, adjusts the output of a pre-trained generative model to better match downstream user-specified rewards without additional training~\citep{uehara_inference-time_2025}.
The prevailing theoretical framework casts guidance as approximate sampling from a reward-tilted distribution, motivating SMC-based approaches that maintain particle populations and perform resampling~\citep{wu_practical_2023,ren_driftlite_2025,uehara_understanding_2024}.
While principled, these methods sacrifice the efficiency of single-trajectory inference.
An alternative line of work applies guidance heuristically to single trajectories, including DPS~\citep{chung_diffusion_2024}, FlowDPS~\citep{kim_flowdps_2025}, FlowChef~\citep{patel_flowchef_nodate}, MPGD~\citep{he_manifold_2023}, and RB-Modulation~\citep{rout_rb-modulation_2024}.
Our framework provides a principled foundation for this class of methods, subsuming DPS and related single-trajectory approaches as coarse approximations to the greedy guidance signal derived from deterministic optimal control; the full reduction of each method to FMRG is given in~\cref{app:special_cases}.
More generally, seed optimization methods such as ReNO~\citep{eyring_reno_2024} and D-Flow~\citep{ben-hamu_d-flow_2024} emerge as up-front optimization within the same framework.
A complementary unifying perspective is given by \citet{feng_guidance_2025}, who construct a general framework for flow-matching guidance whose velocity field samples the reward-tilted distribution $p'\propto p\,e^{-J}$.
In their framework, DPS and related methods arise from a Taylor approximation of an expectation over the posterior $p(x_1\mid x_t)$, whose error is controlled by the posterior variance and vanishes in the small-variance limit, for example as $t\to1$.
FMRG does not target the reward tilt and uses no posterior or small-variance approximation.
Its guidance is a deterministic optimal-control signal defined through the exact flow-map endpoint, so DPS-type methods are recovered by the single approximation of replacing that endpoint with one Euler step (\cref{app:special_cases}), a more direct reduction that introduces no small-variance assumption.
Flow map trajectory tilting~\citep{sabour_test-time_2025} also uses the flow map to evaluate the reward, but steers a multi-particle diffusion process towards the reward tilt.
In contrast, FMRG uses the flow map for both integration and guidance along a single deterministic trajectory, achieving up to a $70\times$ reduction in NFE at matched quality (\cref{sec:experiments}).

\paragraph{Reward fine-tuning.}
An alternative to inference-time guidance is to bake reward alignment into the model via fine-tuning, which requires an additional training phase.
This can be formalized as an instance of stochastic optimal control~\citep{uehara_fine-tuning_2024,uehara_understanding_2024,domingo-enrich_adjoint_2025}, which gives an interpretation of reward tilting guidance as an inference-time approximation to the optimal solution of this control problem.
Here we pursue a \textit{deterministic} optimal control problem and construct an efficient approximate solution to design a guidance signal.
Reinforcement learning approaches such as DDPO~\citep{black_training_2024} and DPOK~\citep{fan_dpok_2023} treat the sampler as a Markov decision process and optimize model parameters to maximize expected reward, while gradient-based methods such as DRaFT~\citep{clark_directly_2024} backpropagate through the full sampling process.
Such approaches can achieve strong reward alignment but require expensive retraining for each new reward model and are prone to over-optimization that degrades base-model quality.
In concurrent work, Diamond Maps~\citep{holderrieth_diamond_2026} and Meta Flow Maps~\citep{potaptchik_meta_2026} design new stochastic flow maps to make reward-tilt sampling more computationally feasible; these approaches require training a new model or a more expensive posterior estimate, as well as multiple Monte Carlo rollouts, whereas we focus on a simpler deterministic setting that uses the existing pre-trained flow map without retraining. Variational Flow Maps~\citep{mammadov_variational_2026} similarly require training an auxiliary noise adapter to enable reward alignment.
In contrast, FMRG achieves reward alignment at inference without modifying the underlying flow, preserving the base model and allowing different rewards to be applied to a single checkpoint.

\section{Experiments}
\label{sec:experiments}

\begin{figure}[tp]
	\captionof{table}{\textbf{Quantitative comparison: Latent-space inverse problems.} We report PSNR, SSIM, LPIPS, FID, and KID for super-resolution ($4\times$), motion deblurring, and inpainting. \textbf{Bold} indicates best, \underline{underline} indicates second best. Results are averaged over 1000 images. Best hyperparameters per method are selected via grid search (see~\cref{app:exp_details}).}
	\label{tab:inverse_problems}
	\vspace{0.3em}
	\resizebox{\textwidth}{!}{%
		\begin{tabular}{c l ccccc ccccc ccccc}
			\toprule
			 &                                  & \multicolumn{5}{c}{\textbf{Super-Resolution}} & \multicolumn{5}{c}{\textbf{Motion Deblur}} & \multicolumn{5}{c}{\textbf{Inpainting}}                                                                                                                                                                                                                                                                                                                                                                                                                                                                              \\
			\cmidrule(lr){3-7} \cmidrule(lr){8-12} \cmidrule(lr){13-17}
			 & \textbf{Method}                  & PSNR$\uparrow$                                & SSIM$\uparrow$                             & LPIPS$\downarrow$                       & FID$\downarrow$                      & KID$\downarrow$                     & PSNR$\uparrow$                       & SSIM$\uparrow$                      & LPIPS$\downarrow$                   & FID$\downarrow$                      & KID$\downarrow$                     & PSNR$\uparrow$                       & SSIM$\uparrow$                      & LPIPS$\downarrow$                   & FID$\downarrow$                      & KID$\downarrow$                     \\
			\midrule
			\multirow{5}{*}{\rotatebox[origin=c]{90}{\textbf{AFHQ}}}
			 & DPS                              & 18.06                                         & .443                                       & .503                                    & 59.99                                & .030                                & 16.68                                & .407                                & .547                                & 58.59                                & .029                                & 17.15                                & .512                                & .540                                & 95.30                                & .036                                \\
			 & FlowChef                         & 26.87                                         & .767                                       & .243                                    & 43.29                                & .018                                & 26.75                                & .749                                & .250                                & 38.58                                & .015                                & 24.12                                & .828                                & .160                                & 35.10                                & \underline{.013}                    \\
			 & FlowDPS                          & 27.02                                         & \underline{.778}                           & .250                                    & 34.58                                & .013                                & 26.07                                & .739                                & .279                                & 36.77                                & .014                                & 26.11                                & .798                                & .239                                & 44.10                                & .018                                \\
			 & \cellcolor{blue!5} FMRG-E (ours) & \cellcolor{blue!5} \underline{27.12}          & \cellcolor{blue!5} .772                    & \cellcolor{blue!5} \textbf{.180}        & \cellcolor{blue!5} \textbf{25.48}    & \cellcolor{blue!5} \textbf{.009}    & \cellcolor{blue!5} \underline{27.26} & \cellcolor{blue!5} \underline{.771} & \cellcolor{blue!5} \textbf{.177}    & \cellcolor{blue!5} \textbf{23.12}    & \cellcolor{blue!5} \textbf{.007}    & \cellcolor{blue!5} \underline{26.12} & \cellcolor{blue!5} \textbf{.851}    & \cellcolor{blue!5} \textbf{.126}    & \cellcolor{blue!5} \textbf{24.73}    & \cellcolor{blue!5} \textbf{.008}    \\
			 & \cellcolor{blue!5} FMRG-J (ours) & \cellcolor{blue!5} \textbf{27.39}             & \cellcolor{blue!5} \textbf{.795}           & \cellcolor{blue!5} \underline{.193}     & \cellcolor{blue!5} \underline{29.51} & \cellcolor{blue!5} \underline{.012} & \cellcolor{blue!5} \textbf{27.36}    & \cellcolor{blue!5} \textbf{.788}    & \cellcolor{blue!5} \underline{.204} & \cellcolor{blue!5} \underline{24.43} & \cellcolor{blue!5} \underline{.008} & \cellcolor{blue!5} \textbf{26.71}    & \cellcolor{blue!5} \underline{.842} & \cellcolor{blue!5} \underline{.158} & \cellcolor{blue!5} \underline{31.20} & \cellcolor{blue!5} \underline{.013} \\
			\midrule
			\multirow{5}{*}{\rotatebox[origin=c]{90}{\textbf{FFHQ}}}
			 & DPS                              & 18.74                                         & .570                                       & .530                                    & 119.30                               & .054                                & 20.20                                & .618                                & .483                                & 127.91                               & .077                                & 18.93                                & .623                                & .486                                & 137.36                               & .081                                \\
			 & FlowChef                         & 26.71                                         & .782                                       & .237                                    & 117.04                               & .106                                & 24.53                                & .713                                & .296                                & 109.30                               & .081                                & 25.41                                & .830                                & .164                                & 76.52                                & .057                                \\
			 & FlowDPS                          & \underline{27.70}                             & \underline{.818}                           & .205                                    & \underline{61.85}                    & \underline{.040}                    & 26.85                                & .789                                & .233                                & 64.55                                & .042                                & 27.90                                & .844                                & .195                                & 73.30                                & .054                                \\
			 & \cellcolor{blue!5} FMRG-E (ours) & \cellcolor{blue!5} 27.53                      & \cellcolor{blue!5} .799                    & \cellcolor{blue!5} \underline{.171}     & \cellcolor{blue!5} 62.63             & \cellcolor{blue!5} .042             & \cellcolor{blue!5} \underline{28.10} & \cellcolor{blue!5} \underline{.807} & \cellcolor{blue!5} \textbf{.153}    & \cellcolor{blue!5} \textbf{38.52}    & \cellcolor{blue!5} \textbf{.015}    & \cellcolor{blue!5} \underline{28.66} & \cellcolor{blue!5} \underline{.883} & \cellcolor{blue!5} \textbf{.103}    & \cellcolor{blue!5} \textbf{35.15}    & \cellcolor{blue!5} \textbf{.014}    \\
			 & \cellcolor{blue!5} FMRG-J (ours) & \cellcolor{blue!5} \textbf{28.23}             & \cellcolor{blue!5} \textbf{.832}           & \cellcolor{blue!5} \textbf{.154}        & \cellcolor{blue!5} \textbf{55.99}    & \cellcolor{blue!5} \textbf{.039}    & \cellcolor{blue!5} \textbf{28.62}    & \cellcolor{blue!5} \textbf{.834}    & \cellcolor{blue!5} \underline{.155} & \cellcolor{blue!5} \underline{41.33} & \cellcolor{blue!5} \underline{.022} & \cellcolor{blue!5} \textbf{29.48}    & \cellcolor{blue!5} \textbf{.893}    & \cellcolor{blue!5} \underline{.112} & \cellcolor{blue!5} \underline{43.99} & \cellcolor{blue!5} \underline{.026} \\
			\bottomrule
		\end{tabular}%
	}
	\vspace{0.3em}
	\centering
	\includegraphics[height=0.33\textwidth]{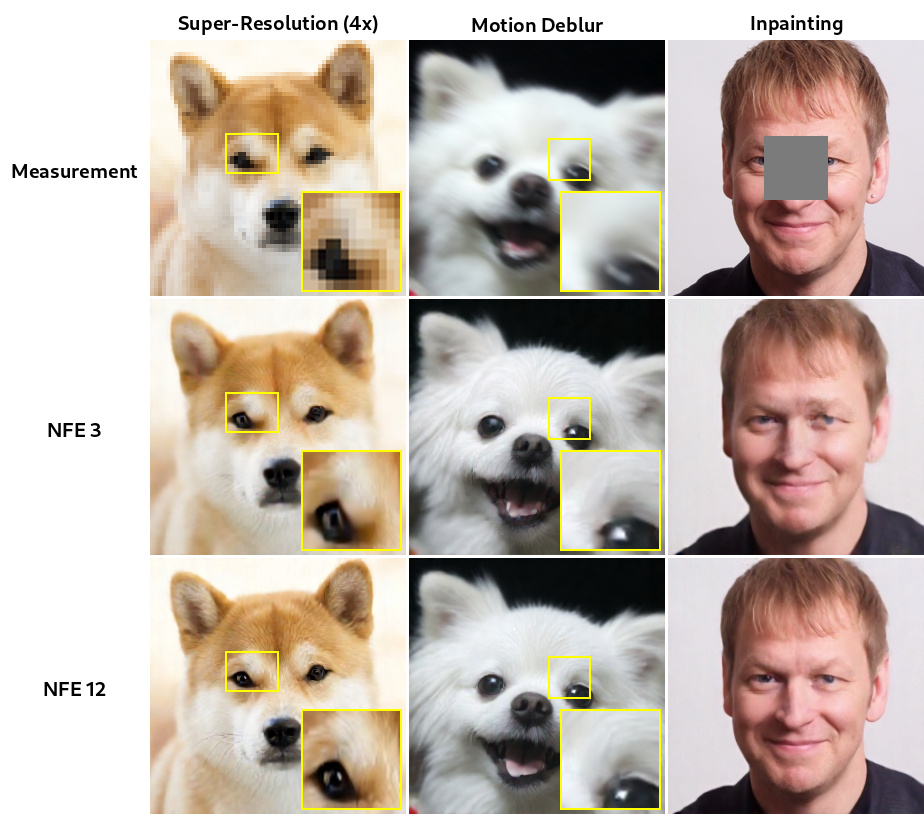}%
	\hspace{0.01\textwidth}%
	\includegraphics[height=0.33\textwidth]{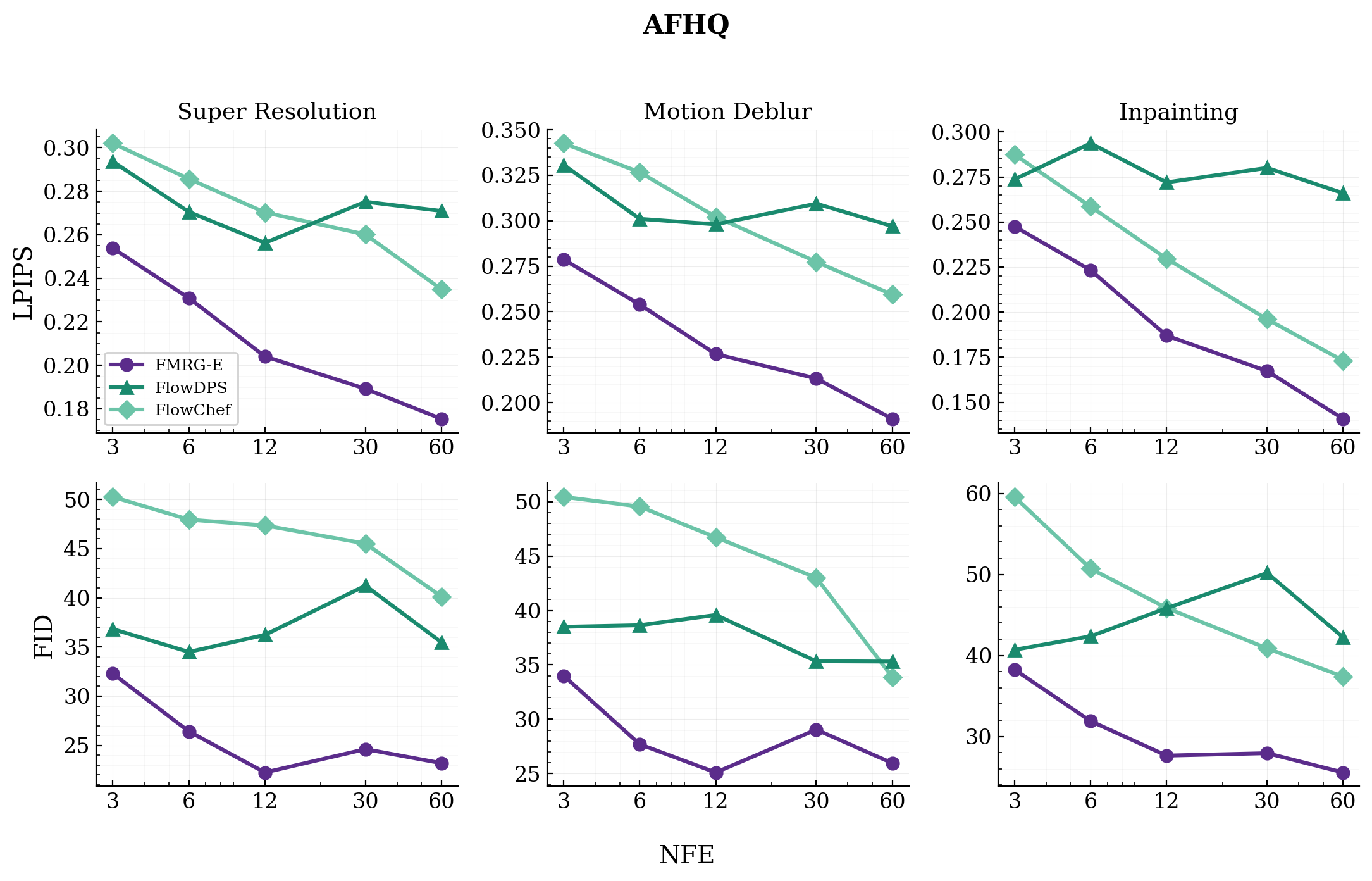}
	\captionof{figure}{\textbf{Latent-space inverse problems.}
		(Left) FMRG outperforms baselines on super-resolution, motion deblurring, and inpainting at remarkably low NFEs.
		(Right) LPIPS vs.\ FID trade-off on AFHQ.
    FMRG-E achieves notably better performance in the low NFE regime. Full results in~\cref{app:exp_details}.}
	\label{fig:ip_qualitative}
	\label{fig:lpips_kid}
\end{figure}


\begin{figure}[t]
	\centering
	\includegraphics[width=0.85\textwidth]{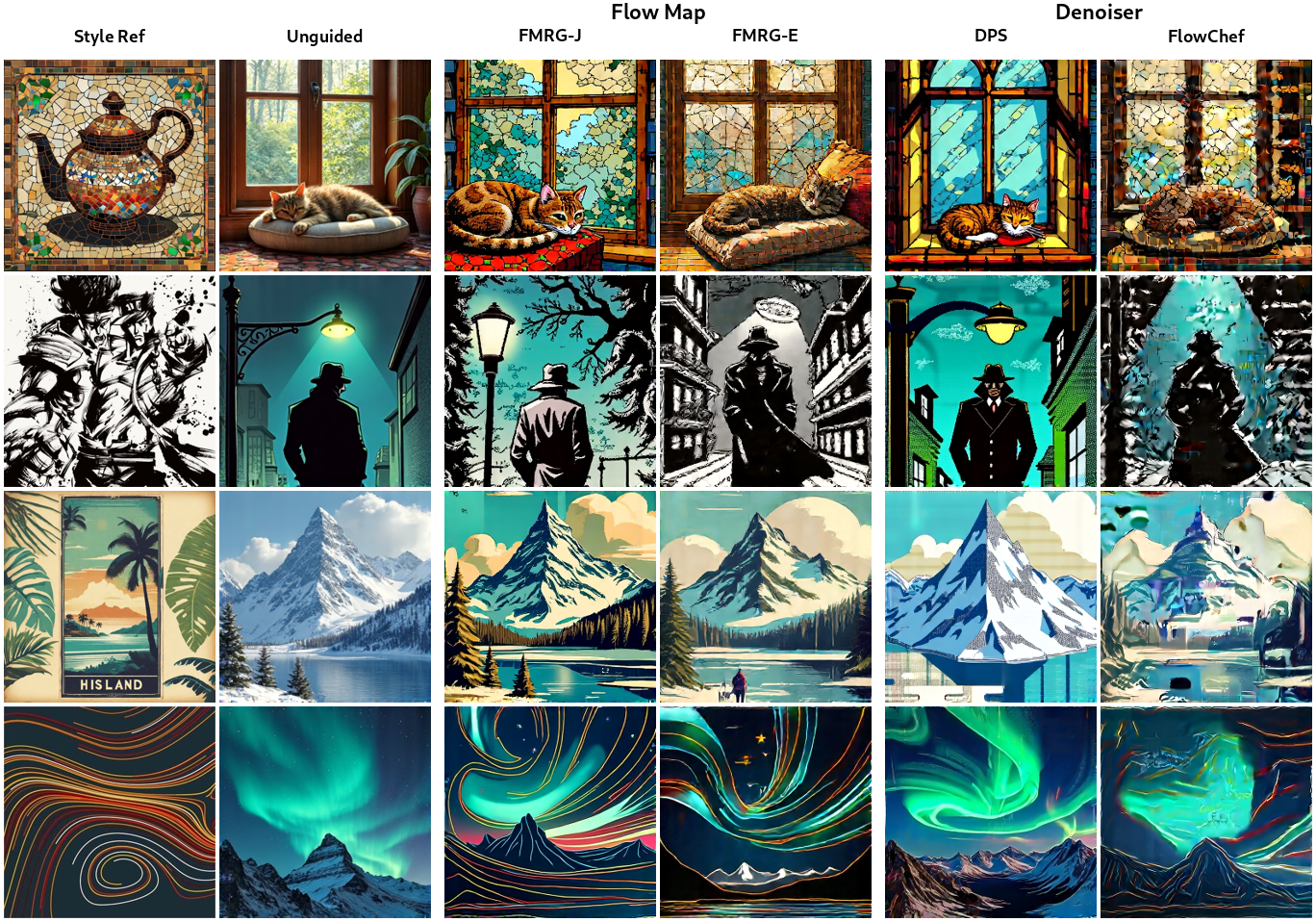}
	\caption{\textbf{Style guidance: hierarchy of methods.}
		Given a style reference (left), we compare unguided FLUX, Jacobian-based methods (FMRG-J, DPS), and Euclidean-based methods (FMRG-E, FlowChef).
		FMRG-J captures the target style most faithfully while preserving semantic content.
		DPS fails to incorporate the style, while FlowChef produces artifacts, consistent with our derived approximation hierarchy (\cref{fig:hierarchy}).}
	\label{fig:style_hierarchy}
\end{figure}

\begin{figure}[t]
	\centering
	\includegraphics[width=\textwidth]{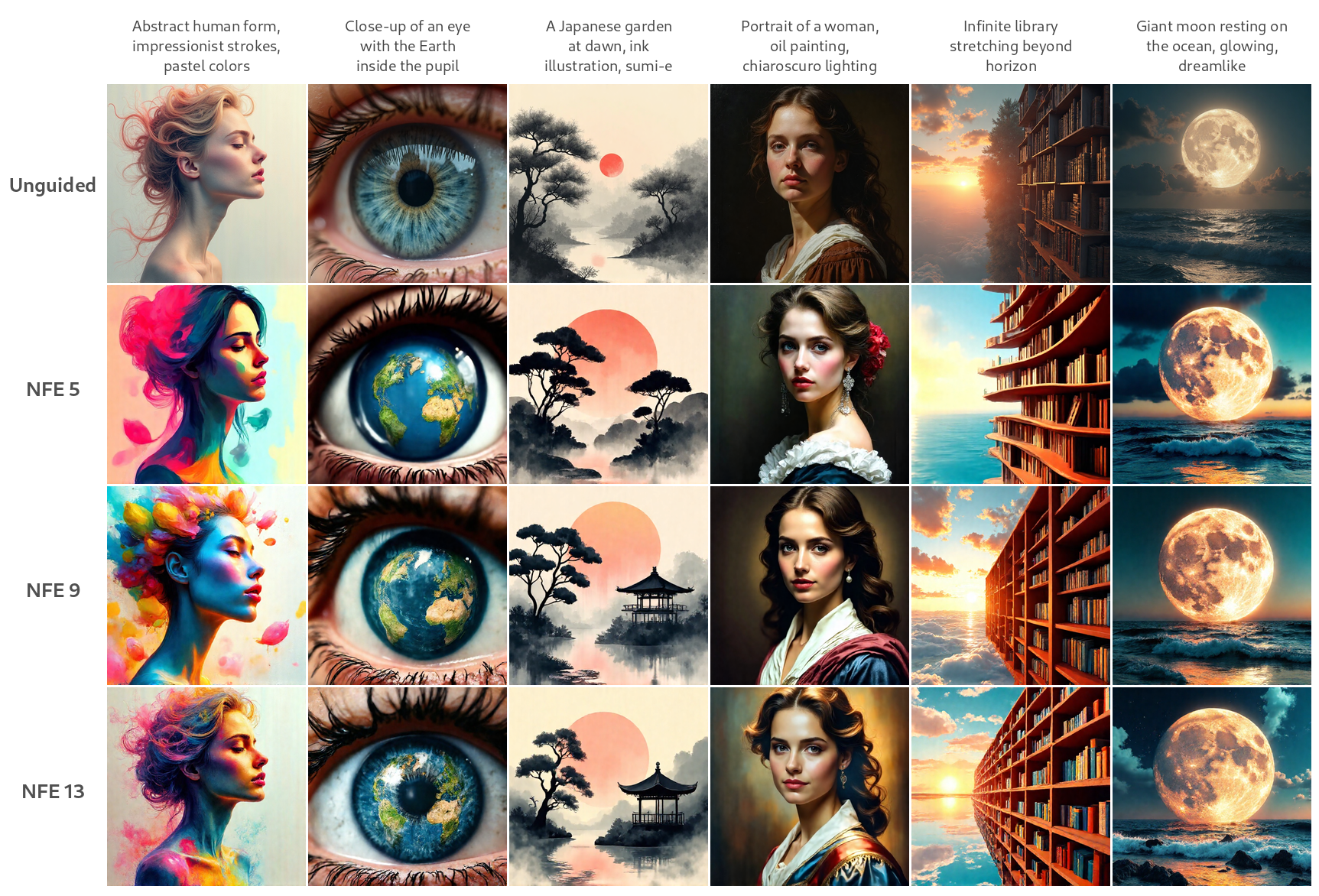}
	\caption{\textbf{Reward-guided aesthetic enhancement.} FMRG produces visually compelling aesthetic enhancements with as few as $5$ NFEs. Additional comparisons in~\cref{app:qualitative_aesthetic}.}
	\label{fig:aesthetic_qualitative}
\end{figure}

\begin{figure}[tp]
	\centering
	\small
	\captionof{table}{\textbf{GenEval accuracy}.
    All methods leverage the same FLUX.1-dev flow map backbone. \textbf{Bold} indicates highest performance.
  FMRG attains the best performance overall, as well as the best performance at fixed NFE.}
	\label{tab:geneval}
	\vspace{0.3em}
	\begin{tabular}{lcccccccr}
		\toprule
		\textbf{Method}          & \textbf{Overall}$\uparrow$ & \textbf{Single} & \textbf{Two} & \textbf{Count} & \textbf{Colors} & \textbf{Position} & \textbf{Color Attr.} & \textbf{NFE}$\downarrow$ \\
		\midrule
		FLUX.1-dev               & 0.662                      & .991            & .795         & .697           & .801            & .212              & .475                 & 50                       \\
		Flow Map                 & 0.668                      & .975            & .848         & .637           & .777            & .210              & .560                 & 8                        \\
		Flow Map $+$ Best-of-$N$ & 0.758                      & 1.00            & .909         & .838           & .872            & .260              & .670                 & 128                      \\
		ReNO                     & 0.716                      & 1.00            & .881         & .769           & .875            & .172              & .608                 & 58                       \\
		FMTT                     & 0.771                      & .988            & .929         & .850           & .862            & .310              & .690                 & 1400                     \\
		\midrule
		\rowcolor{blue!5} FMRG-E & 0.766                      & 1.00            & .927         & .828           & .856            & .297              & .685                 & 100                      \\
		\midrule
		\rowcolor{blue!5} FMRG-J & 0.770                      & .997            & .922         & .863           & .854            & .295              & .690                 & 20                       \\
		\rowcolor{blue!5} FMRG-J & \textbf{0.800}             & 1.00            & .947         & .884           & .902            & .292              & .772                 & 100                      \\
		\bottomrule
	\end{tabular}
	\vspace{0.3em}
	\includegraphics[width=0.42\textwidth]{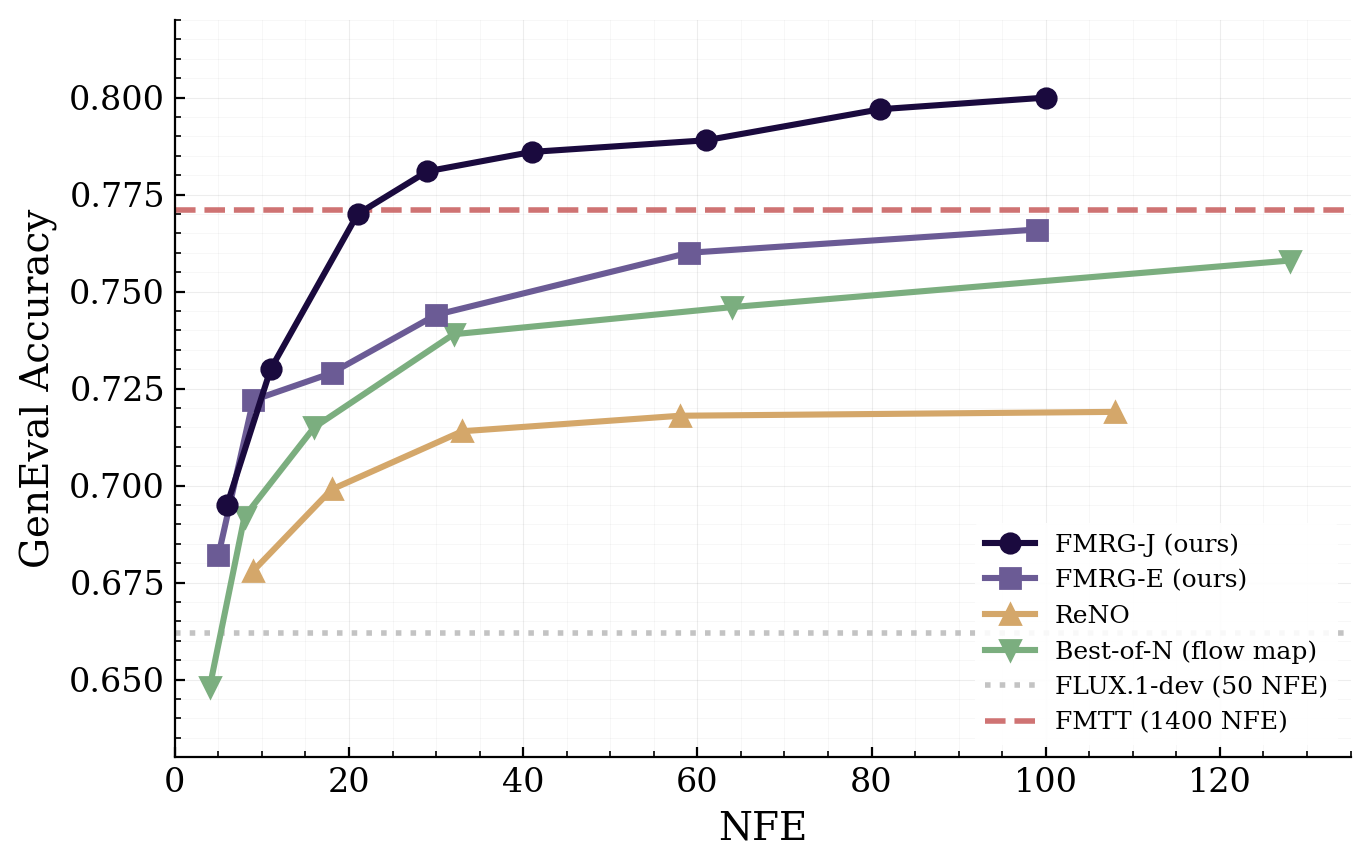}
	\captionof{figure}{\textbf{GenEval accuracy vs.\ NFE.} FMRG-J dominates the Pareto frontier across all NFE budgets, matching FMTT ($0.77$) at NFE $20$ with a $70\times$ reduction in compute.}
	\label{fig:pareto_geneval}
\end{figure}

\begin{figure}[t]
	\centering
	\includegraphics[width=\textwidth]{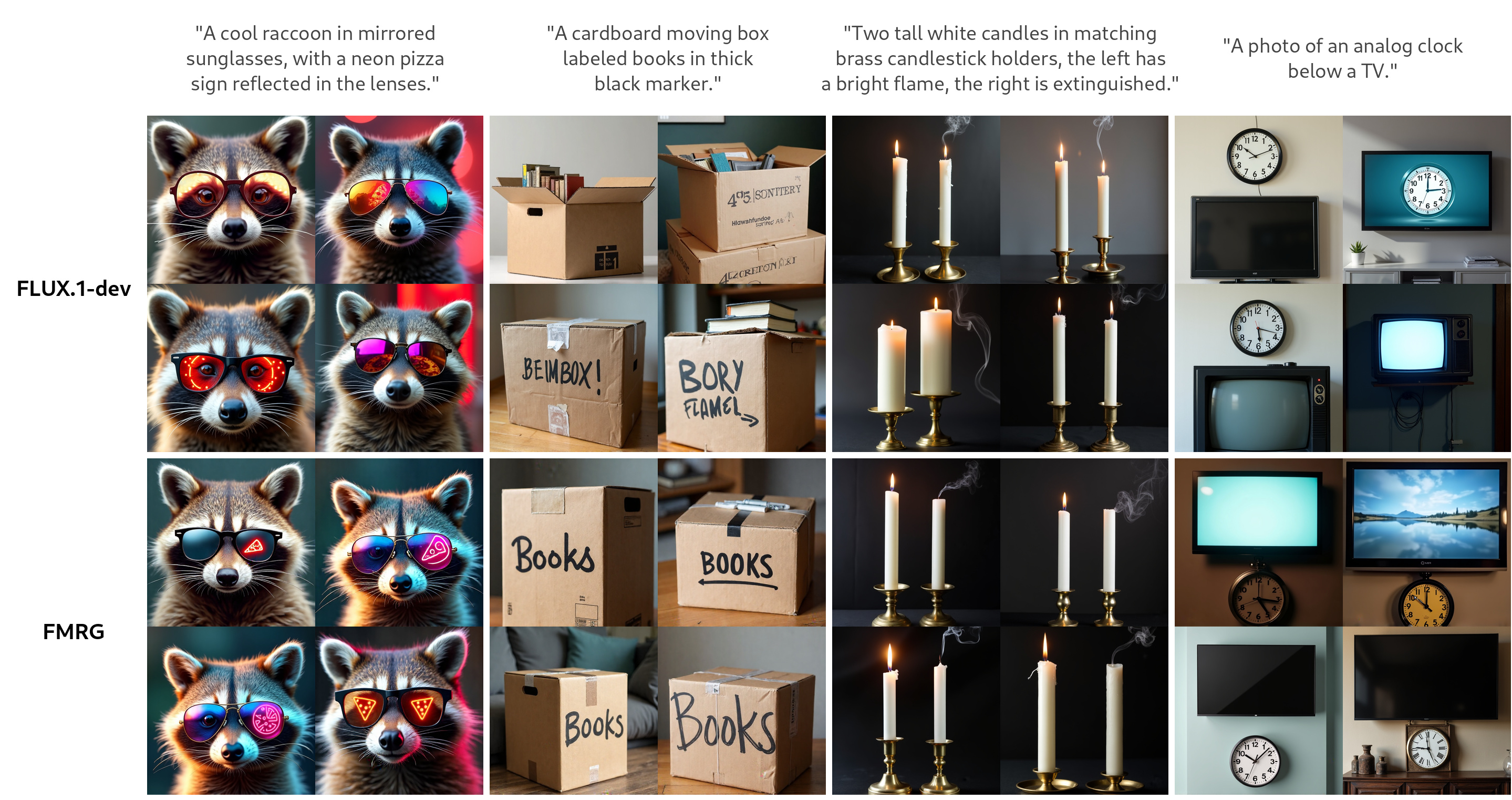}
	\caption{\textbf{VLM reward guidance.} Unguided FLUX generations (top) fail to follow complex compositional prompts. FMRG (bottom) steers generation toward prompt-faithful outputs.}
	\label{fig:vlm}
\end{figure}

We evaluate FMRG on reward functions of varying complexity, including (i) $\ell_2$ reconstruction losses for latent-space inverse problems, (ii) neural-network-based human preference rewards, and (iii) vision-language model rewards for text-to-image generation.
For all experiments, we use a flow map distilled via Lagrangian distillation~\citep{boffi_how_2025} from Flux.1-Dev~\citep{blackforestlabs2024flux1dev}.\footnote{The pre-trained flow map checkpoints are publicly available at \url{https://huggingface.co/gabeguofanclub/flux-1-dev-flowmap-lsd}.}
Inverse problems and style guidance are evaluated at $256 \times 256$ resolution, while GenEval and VLM guidance are evaluated at $512 \times 512$ resolution (using a flow map distilled at the same resolution; see~\cref{app:exp_details}).
We find that FMRG achieves competitive or superior sample quality in comparison to strong baselines while requiring up to $10\times$ fewer function evaluations (NFEs) on inverse problems and $70\times$ fewer on reward-guided generation.
The breadth of reward types considered allows us to explore the interplay between the geometry of the reward landscape and the effectiveness of inference-time guidance---from simple reconstruction losses whose optima lie close to the data manifold to complex neural-network rewards whose optima may lie far from it (\cref{sec:analysis}).

\subsection{Latent-space inverse problems}


%
We compare FMRG-J and FMRG-E against FlowDPS~\citep{kim_flowdps_2025}, FlowChef~\citep{patel_flowchef_nodate}, and DPS~\citep{chung_diffusion_2024}.
We consider three latent-space inverse problems: (i) super-resolution with $4\times$ downsampling to $64 \times 64$, (ii) motion deblurring with kernel size $61$ and intensity $0.5$, and (iii) box inpainting with a $64\times 64$ mask.
All methods minimize an $\ell_2$ reconstruction loss $r(x) = -\|A x - y\|^2$ between the generated image $x\in\R^d$ and the observation $y\in \R^{d_o}$, where $A \in \R^{d_o\times d}$ denotes the (linear) measurement operator (see~\cref{app:exp_details} for details).

\cref{tab:inverse_problems} reports the peak performance of each method at best hyperparameters obtained via independent grid search.
FMRG outperforms all baselines across the board on both AFHQ-Dog and FFHQ~(\cref{tab:inverse_problems}).
\cref{fig:lpips_kid} compares the NFE--performance trade-off for Euclidean gradient-based methods (FMRG-E, FlowDPS, FlowChef), which avoid backpropagation through the flow map.
FMRG-E achieves competitive performance even at very low NFEs (e.g., $3$ and $6$), matching or outperforming baselines using $2$-$10\times$ more NFEs.
At higher NFEs, the gap widens further, with FMRG-E and FMRG-J achieving the best results across all distortion and perception metrics (\cref{tab:inverse_problems}).
This advantage is even more pronounced for unconditional generation (without text prompts), where FlowDPS and FlowChef degrade significantly while FMRG remains robust (see~\cref{app:exp_details}).
Wall-clock time and VRAM measurements are reported in~\cref{app:wallclock}.

\subsection{Style guidance}
We use style transfer to qualitatively illustrate the approximation hierarchy developed in~\cref{sec:methods}.
The style reward is computed via the Frobenius norm between the Gram matrices of CLIP ViT-B/16 features extracted from the reference and generated images, following MPGD~\citep{he_manifold_2023}.
Quantitative results are provided in~\cref{tab:style_transfer}.
\Cref{fig:style_hierarchy} compares all methods in the hierarchy on this task.
Consistent with~\cref{prop:tangent_projection}, FMRG-J better preserves image quality while incorporating the target style, whereas FMRG-E more directly optimizes the style reward.
Comparing across the hierarchy, both FMRG variants achieve markedly better style transfer than their denoiser-based counterparts: DPS (the denoiser-based approximation of FMRG-J) fails to incorporate the target style, while FlowChef (the denoiser-based approximation of FMRG-E; see~\cref{app:special_cases}) exhibits visible artifacts.
This is consistent with the theoretical picture put forth in~\cref{sec:methods}, as the flow map provides a more accurate reward computation for both gradient types than the single-step denoiser.

\subsection{Reward-guided generation}

We evaluate FMRG on human preference rewards for text-to-image generation.
Following~\citet{eyring_reno_2024}, we use a linear combination of human preference and text-image alignment reward models, including ImageReward~\citep{xu_imagereward_2023}, HPSv2~\citep{wu_hpsv2_2023}, PickScore~\citep{kirstain_pickapic_2023}, and CLIP, as the guidance objective.
\Cref{fig:aesthetic_qualitative} shows that FMRG produces visually compelling aesthetic enhancements with as few as $5$ NFEs.

To quantitatively evaluate these gains, we benchmark on GenEval~\citep{ghosh_geneval_2023}, an object-focused compositional benchmark.
GenEval uses object detection methods that are independent of the reward models used for guidance, providing a disentangled evaluation that measures genuine compositional improvements rather than reward over-optimization~\citep{eyring_reno_2024}.

As shown in~\cref{tab:geneval}, FMRG-J achieves a GenEval score of $0.80$ at NFE $100$ and matches FMTT ($0.77$) at NFE $20$, yielding a $70\times$ reduction in NFEs (see~\cref{fig:pareto_geneval} for the full Pareto frontier).
We additionally compare against the base FLUX.1-Dev model~\citep{blackforestlabs2024flux1dev}, best-of-$N$ reward selection, and seed optimization via ReNO.
\Cref{fig:pareto_geneval} plots GenEval accuracy against NFE for all methods.
We find that FMRG-J dominates the Pareto frontier across all NFE budgets.
Wall-clock time and VRAM measurements for all methods are reported in~\cref{app:wallclock}.



\subsection{VLM guidance}

Beyond the fixed reward ensembles used above, FMRG can leverage complex vision-language model (VLM) rewards to guide generation toward detailed, compositional prompts that base models struggle to follow.
We use a 7B-parameter VLM~\citep{wang_skywork_2025} that scores prompt-image alignment, representing the most complex reward landscape in our evaluation (see~\cref{app:vlm_details} for details).
\cref{fig:vlm} shows examples where unguided FLUX generations across multiple seeds fail to capture fine-grained compositional details---such as specific object attributes, spatial relationships, and text rendering---while FMRG-J successfully steers the output toward prompt-faithful images.
These results demonstrate the flexibility of FMRG to incorporate large-scale reward models beyond the human preference ensembles used for GenEval.

\subsection{Analysis of design choices}
\label{sec:analysis}
We discuss two key design choices whose empirical behavior is consistent with our theoretical analysis.
Full ablations are provided in~\cref{app:inverse_problems,app:geneval_details}.

\paragraph{Early stopping.}
Our theory predicts that greedy guidance contracts variance exponentially without early stopping, while early stopping can recover the polynomial scaling of exact optimal control (\cref{prop:gaussian_comparison}).
Empirically, the importance of early stopping tracks the susceptibility of the reward to over-optimization.
For inverse problems, the $\ell_2$ loss has a unique optimum at the ground truth, which limits the scope for reward hacking; accordingly, early stopping helps at low NFEs (where each guided step is large) but has diminishing returns at higher NFEs where the greedy approximation is more accurate (\cref{tab:early_stop}).
For human preference rewards, whose landscapes are more susceptible to adversarial maximization, early stopping at $t_{\mathrm{stop}} = 0.25$ consistently improves GenEval accuracy at all NFE levels (\cref{tab:geneval_es}), and without early stopping ($t_{\mathrm{stop}} = 1$), generation quality degrades due to over-optimization artifacts (\cref{fig:early_stopping_qualitative}).

\paragraph{Jacobian vs.\ Euclidean.}
Our theory predicts that the flow map Jacobian projects reward gradients onto the tangent space of the data manifold (\cref{prop:tangent_projection}), which should matter more when reward optima lie far from the manifold.
Empirically, for the $\ell_2$ reconstruction loss, whose optima lie close to the data manifold, the Euclidean gradient already produces approximately on-manifold updates without requiring the Jacobian projection; accordingly, FMRG-E offers a better NFE--performance trade-off, achieving strong performance at extremely low NFEs and matching FMRG-J at higher NFEs (\cref{fig:lpips_kid}).
For human preference rewards ($\sim$3.4B total parameters), whose optima may lie far from the manifold, FMRG-J leads by $3.4$ points at matched NFE (\cref{tab:geneval}) and dominates the Pareto frontier across all NFE budgets (\cref{fig:pareto_geneval}).
This is consistent with~\cref{prop:tangent_projection}, as larger neural-network rewards have more complex landscapes with maxima that may lie off the data manifold, making the Jacobian's tangent-space projection increasingly important.

\section{Conclusion and limitations}
\label{sec:conclusion}

In this work, we introduced Flow Map Reward Guidance (FMRG), a principled framework for guiding flow and flow map-based generative models at inference time.
By formulating guidance as deterministic optimal control rather than approximate reward-tilted sampling, we provided a flow map-centric approach to guidance that leads to efficient sampling and high performance in practice.
In addition to leading to a novel algorithmic framework, our methodology provides a new lens with which we can understand the performance of existing heuristic approximations for guidance, whose performance has been observed in practice but whose accuracy with respect to reward tilted sampling has been widely drawn into question.
More generally, our approach represents a conceptual departure from the standard formulation of guidance and finetuning, and demonstrates the importance of pursuing alternative notions of optimal reward alignment that admit efficient computational realizations.
Future work will investigate the suitability of similar approaches for reinforcement learning-based finetuning, where the optimal control~\cref{eqn:optimal_feedback_general} can be estimated directly via additional training.

\paragraph{Limitations.}
Our approach significantly improves the quality of reward-guided generation for flows, but requires a pre-trained flow map; while these models have enjoyed significant recent interest, high performing large-scale flow maps are still in shorter supply than standard flows.
In line with this, performance of our approach will depend on the quality of the pre-trained flow map.
From a computational perspective, FMRG-J is higher-performing for complex rewards, but requires backpropagation through the flow map, which consumes approximately 48 GB VRAM at 512px resolution and hence requires appropriate hardware.
Despite its lower performance in these settings, FMRG-E runs on a single L40S GPU because it avoids the flow map Jacobian and hence may be more suitable for low-resource settings.
Without early stopping, our greedy approach can over-optimize the reward, leading to mode collapse and artifacts.
We analyze this formally in~\cref{app:gaussian_case} and provide early stopping as a principled mitigation.

\section*{Acknowledgements}
We thank Gabe Guo for providing the flow map checkpoint used throughout our experiments.
We also thank Carles Domingo-Enrich, Peter Holderrieth, and Stephen Huan for helpful discussions.

\bibliographystyle{unsrtnat}
\bibliography{paper}

\newpage
\appendix
\section{Background on flow maps}
\label{app:flow_maps}
\crefalias{section}{appendix}
In this section, we provide some brief further background on flow maps.
For complete details, we refer the reader to~\citep{boffi_flow_2025,boffi_how_2025}.
Given a probability flow $\dot{x}_t = b_t(x_t)$, the \emph{flow map} $X:[0,1]^2 \times \R^d \to \R^d$ is the solution operator satisfying $X_{s,t}(x_s) = x_t$, i.e., $X_{s,t}$ maps the state at time $s$ to the state at time $t$ along the flow.
Equivalently, we may write
\begin{equation}
	X_{s,t}(x) = x + \int_s^t b_\tau(X_{s,\tau}(x)) \, d\tau.
\end{equation}

The flow map enjoys several fundamental properties:
\begin{proposition}
	\label{prop:flow_map_properties}
	The flow map satisfies the following:
	\begin{enumerate}
		\item The \emph{Semigroup property:} for all $(s, u, t) \in [0, 1]^3$ and for all $x \in \R^d$,
		      \begin{equation}
			      \label{eqn:semigroup_app}
			      X_{s,t}(x) = X_{u,t}(X_{s,u}(x)).
		      \end{equation}
		\item The \emph{Lagrangian equation:} for all $(s, t) \in [0, 1]^2$ and for all $x \in \R^d$,
		      \begin{equation}
			      \label{eqn:lagrangian_app}
			      \partial_t X_{s,t}(x) = b_t(X_{s,t}(x)).
		      \end{equation}
		\item The \emph{Eulerian equation:} for all $(s, t) \in [0, 1]^2$ and for all $x \in \R^d$,
		      \begin{equation}
			      \label{eqn:eulerian_app}
			      \partial_s X_{s,t}(x) + \nabla X_{s,t}(x) \, b_s(x) = 0.
		      \end{equation}
	\end{enumerate}
\end{proposition}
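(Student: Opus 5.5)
The three properties follow from existence and uniqueness of solutions to the probability flow ODE~\cref{eqn:pflow}. We assume $b$ is continuous in $t$ and globally Lipschitz in $x$, so that Picard--Lindel\"of gives a unique $C^1$ solution on $[0,1]$ from any initial pair $(s,x)$. Under this assumption, $X_{s,t}(x)$ is well defined for all $(s,t)\in[0,1]^2$, including backward in time, and $(s,t,x)\mapsto X_{s,t}(x)$ is $C^1$ by the standard differentiable-dependence theorem. I would prove the Lagrangian equation first, then the semigroup property, and derive the Eulerian equation from the other two.

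\emph{Lagrangian equation.} Differentiate the integral representation $X_{s,t}(x) = x + \int_s^t b_\tau(X_{s,\tau}(x))\,d\tau$ in $t$. The fundamental theorem of calculus, applied with a continuous integrand, gives $\partial_t X_{s,t}(x) = b_t(X_{s,t}(x))$ directly.

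\emph{Semigroup property.} Fix $s,u$ and $x$. Define $y_t = X_{s,t}(x)$ and $z_t = X_{u,t}(X_{s,u}(x))$. By the Lagrangian equation, both curves solve $\dot{w}_t = b_t(w_t)$. At time $t=u$ they agree, since $y_u = X_{s,u}(x)$ and $z_u = X_{u,u}(X_{s,u}(x)) = X_{s,u}(x)$, using $X_{u,u}=\mathrm{id}$. Uniqueness then forces $y_t = z_t$ for all $t\in[0,1]$, which is~\cref{eqn:semigroup_app}. Because $s,u,t$ may occur in any order, the uniqueness statement must hold both forward and backward in time; global Lipschitzness provides this.

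\emph{Eulerian equation.} By the semigroup property with $u = s+h$,
\begin{equation*}
	X_{s,t}(x) = X_{s+h,t}\bigl(X_{s,s+h}(x)\bigr) \quad \text{for all small } h .
\end{equation*}
The left-hand side does not depend on $h$, so its derivative in $h$ at $h=0$ vanishes. Differentiating the right-hand side by the chain rule gives
\begin{equation*}
	0 = \partial_s X_{s,t}(x) + \nabla X_{s,t}(x)\, \partial_h X_{s,s+h}(x)\big|_{h=0}.
\end{equation*}
The Lagrangian equation evaluates the last factor as $\partial_h X_{s,s+h}(x)\big|_{h=0} = b_s(X_{s,s}(x)) = b_s(x)$, which yields~\cref{eqn:eulerian_app}.

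The main obstacle is regularity, not algebra. The chain-rule step needs joint $C^1$ dependence of $X_{s,t}(x)$ on $(s,x)$, which is the differentiability-with-respect-to-initial-data theorem: the Jacobian $\nabla X_{s,t}$ solves the variational equation $\partial_t \nabla X_{s,t} = \nabla b_t(X_{s,t})\nabla X_{s,t}$. I would state these standing assumptions on $b$ explicitly at the start and cite standard ODE theory for them rather than reprove them.
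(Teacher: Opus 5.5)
The paper states \cref{prop:flow_map_properties} in \cref{app:flow_maps} without proof and defers to the flow-map references, so there is no in-paper argument to compare against. Your proof is the standard one and is correct. The Lagrangian equation is the fundamental theorem of calculus applied to the integral representation. The semigroup property is two-sided uniqueness for the initial-value problem posed at time $u$. The Eulerian equation comes from differentiating the semigroup identity in the intermediate time, which is the same device the paper uses in its proof of \cref{prop:optimal_control}.

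The one thing to repair is your stated hypothesis. Continuity in $t$ plus global Lipschitz continuity in $x$ (uniformly in $t$) does give existence and forward--backward uniqueness, and hence parts 1 and 2. It does not give differentiability of $X_{s,t}$ in $x$, so the differentiable-dependence theorem does not apply as you state it. For example, take $d=1$ and $b_t(x)=|x|$: then $X_{0,t}(x)=x e^{t}$ for $x\ge 0$ and $x e^{-t}$ for $x<0$, which has a kink at $0$ for every $t>0$. Since \cref{eqn:eulerian_app} presupposes that $\nabla X_{s,t}$ exists, you must also assume that $\nabla b_t(x)$ exists and is continuous in $(t,x)$, for instance part (i) of \cref{assump:regularity}. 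Your closing appeal to the variational equation already uses this implicitly.

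With that assumption you need not cite differentiability in $s$ separately. Write $X_{s+h,t}=X_{s,t}\circ X_{s+h,s}$ and expand $X_{s+h,s}(x)=x-h\,b_s(x)+o(h)$. Differentiability of $X_{s,t}$ in $x$ alone then shows that $\partial_s X_{s,t}(x)$ exists and equals $-\nabla X_{s,t}(x)\,b_s(x)$.
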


Following recent work on accelerated sampling~\citep{boffi_how_2025,sabour_align_2025,geng_mean_2025}, we parameterize the flow map as
\begin{equation}
	\label{eqn:flow_map_param_app}
	X_{s,t}(x) = x + (t - s) v_{s,t}(x),
\end{equation}
where $v:[0,1]^2 \times \R^d \to \R^d$ is a learned velocity function.
On the diagonal $s = t$, the Lagrangian equation implies
\begin{equation}
	\label{eqn:tangent_condition}
	v_{t,t}(x) = b_t(x),
\end{equation}
i.e., the parameterized velocity recovers the probability flow drift.
Equation~\cref{eqn:tangent_condition} is known as the \textit{tangent condition}, and enables us to compare to baselines that are based on a pre-trained flow with a single model.

Below, we collect a simple property of the flow map that we will use in later proofs.
\begin{proposition}[Jacobian evolution]
	\label{prop:jacobian_evolution}
	The Jacobian $\nabla X_{s,t}(x)$ satisfies the variational equation
	\begin{equation}
		\label{eqn:jacobian_forward}
		\frac{d}{dt} \nabla X_{s,t}(x) = \nabla b_t(X_{s,t}(x)) \, \nabla X_{s,t}(x),
		\qquad
		\nabla X_{s,s}(x) = I.
	\end{equation}
	Consequently, the transpose $\nabla X_{s,t}(x)^\T$ satisfies the adjoint equation
	\begin{equation}
		\label{eqn:jacobian_adjoint}
		\frac{d}{dt} \nabla X_{s,t}(x)^\T = \nabla X_{s,t}(x)^\T \, \nabla b_t(X_{s,t}(x))^\T,
		\qquad
		\nabla X_{s,s}(x)^\T = I.
	\end{equation}
\end{proposition}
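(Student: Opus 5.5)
The plan is to differentiate the Lagrangian equation~\cref{eqn:lagrangian_app} in the spatial variable $x$ and then exchange the order of derivatives. We assume standard regularity: $b$ is continuous in $t$ and $C^1$ in $x$ with locally bounded spatial gradient. Under this assumption the flow map $X_{s,t}(x)$ is $C^1$ jointly in $(t,x)$, and the mixed partials $\partial_t \nabla X_{s,t}$ and $\nabla \partial_t X_{s,t}$ exist and coincide.

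This is the one step that is not pure algebra. We justify it with the classical theorem on differentiable dependence of ODE solutions on initial data. Alternatively, we can argue from the integral form $X_{s,t}(x) = x + \int_s^t b_\tau(X_{s,\tau}(x))\,d\tau$: differentiate under the integral sign, which is permitted by the local boundedness of $\nabla b$ on compact sets, to obtain
\begin{equation*}
\nabla X_{s,t}(x) = I + \int_s^t \nabla b_\tau(X_{s,\tau}(x))\,\nabla X_{s,\tau}(x)\,d\tau .
\end{equation*}

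With this in hand, the forward equation~\cref{eqn:jacobian_forward} follows at once. Differentiating the integral identity in $t$ via the fundamental theorem of calculus gives $\frac{d}{dt}\nabla X_{s,t}(x) = \nabla b_t(X_{s,t}(x))\,\nabla X_{s,t}(x)$. Setting $t=s$ gives $\nabla X_{s,s}(x) = I$. Equivalently, applying the chain rule to $\nabla_x\bigl[b_t(X_{s,t}(x))\bigr]$ in~\cref{eqn:lagrangian_app} yields the same right-hand side, and the initial condition comes from $X_{s,s}=\mathrm{id}$.

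For the adjoint statement~\cref{eqn:jacobian_adjoint}, we transpose both sides of~\cref{eqn:jacobian_forward}. Transposition is linear and commutes with $d/dt$, and $(AB)^\T = B^\T A^\T$. This gives $\frac{d}{dt}\nabla X_{s,t}(x)^\T = \nabla X_{s,t}(x)^\T\,\nabla b_t(X_{s,t}(x))^\T$ with $\nabla X_{s,s}(x)^\T = I$.

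As an optional remark, uniqueness of solutions of this linear matrix ODE follows from continuity of its coefficient. This shows that the Jacobian is characterized by the variational equation, which is how it will be used later in Pontryagin-type and characteristic arguments.
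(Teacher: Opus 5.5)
Your proposal is correct and follows the paper's proof: differentiate the Lagrangian equation $\partial_t X_{s,t}(x) = b_t(X_{s,t}(x))$ in $x$, exchange $\partial_t$ and $\nabla$, apply the chain rule, read off $\nabla X_{s,s}(x) = I$ from $X_{s,s} = \mathrm{id}$, and transpose using $(AB)^\T = B^\T A^\T$. The only difference is that you justify the exchange of derivatives through differentiable dependence on initial data, which the paper simply asserts; that same theorem is also what licenses your integral-form route, since differentiating under the integral presupposes $X_{s,\tau}$ is already differentiable in $x$.
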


\begin{proof}
	Differentiating the Lagrangian equation~\cref{eqn:jacobian_forward} with respect to $x$ and permuting derivatives:
	\begin{equation}
		\frac{d}{dt} \nabla X_{s,t}(x) = \nabla_x \big[ b_t(X_{s,t}(x)) \big] = \nabla b_t(X_{s,t}(x)) \, \nabla X_{s,t}(x),
	\end{equation}
	where we applied the chain rule.
	The initial condition $\nabla X_{s,s}(x) = I$ follows from $X_{s,s}(x) = x$.
	The adjoint equation is obtained by transposing and using $(AB)^\T = B^\T A^\T$.
\end{proof}

\section{Background on reward tilting and stochastic optimal control}
\label{app:soc_background}
\crefalias{section}{appendix}

We establish the connection between stochastic optimal control (SOC) and the reward-tilted distribution, following the viewpoint of~\citet{domingo-enrich_adjoint_2025}.
Let $(X_t)_{t\in[0,1]}$ denote a \emph{base} generative diffusion (the pre-trained sampler) with dynamics
\begin{equation}
	\label{eq:base_sde}
	dX_t = b_t^{\mathrm{SDE}}(X_t)\,dt + \eta_t\,dW_t,
	\qquad
	X_0 \sim \rho_0,
\end{equation}
where $\eta_t$ is a (scalar) diffusion schedule and the SDE drift is related to the probability flow ODE drift $b_t$ by
\begin{equation}
	\label{eq:sde_drift}
	b_t^{\mathrm{SDE}}(x) = b_t(x) + \frac{\eta_t^2}{2} \nabla \log \rho_t(x),
\end{equation}
with $\nabla \log \rho_t$ the score of the marginal density at time $t$.
We study the inverse temperature-$\lambda$ SOC problem
\begin{equation}
	\label{eq:soc_problem}
	\begin{aligned}
		 & \min_{u}\;
		\E\left[\int_0^1 \frac{1}{2\lambda}\|u_t(X_t^u)\|^2 \, dt \;-\; r(X_1^u)\right],                 \\
		 & \st \; dX_t^u = \bigl(b_t^{\mathrm{SDE}}(X_t^u) + \eta_t u_t(X_t^u)\bigr)\,dt + \eta_t\,dW_t,
	\end{aligned}
\end{equation}
where $r: \R^d\to\R_{\geq 0}$ is the terminal reward and $u:[0, 1]\times\R^d\to\R^d$ is the control in Brownian coordinates.

\paragraph{Doob $h$-transform.}
Define the Doob $h$-function, $h:[0, 1]\times\R^d\to\R^d$,
\begin{equation}
	\label{eq:desirability}
	h_t(x) := \E\bigl[\exp(\lambda r(X_1)) \,\big|\, X_t = x\bigr],
\end{equation}
where the expectation is taken under the \emph{base} diffusion~\eqref{eq:base_sde} (i.e., $u\equiv 0$).
Let
\begin{equation}
	\label{eq:log_value_def}
	V_t(x) := -\log h_t(x).
\end{equation}
Then the optimally-controlled drift is given by the Doob $h$-transform~\citep{Doob1957ConditionalBM}:
\begin{equation}
	\label{eq:doob_drift_update}
	b_t^{\star}(x)
	=
	b_t^{\mathrm{SDE}}(x) + \eta_t^2 \nabla_x \log h_t(x)
	=
	b_t^{\mathrm{SDE}}(x) - \eta_t^2 \nabla_x V_t(x),
\end{equation}
which can be naturally understood as adding to the base process a gradient flow term on the expected cost-to-go.
Equivalently, the optimal feedback is
\begin{equation}
	\label{eq:soc_optimal_brownian_control}
	u_t^*(x) = \eta_t \nabla_x \log h_t(x) = -\eta_t \nabla_x V_t(x).
\end{equation}

\paragraph{Memoryless schedule.}
Adjoint Matching defines the \emph{memoryless} condition to mean that the endpoint is independent of the initial condition, $X_0 \perp X_1$, under the \emph{base} process~\eqref{eq:base_sde}~\citep{domingo-enrich_adjoint_2025}.
For stochastic interpolants with coefficients $(\alpha_t,\beta_t)$, the memoryless diffusion schedule is typically written in terms of the coefficient
\begin{equation}
	\label{eq:memoryless_diffusion}
	\frac{\eta_t^2}{2}
	=
	\frac{\alpha_t^2 \dot{\beta}_t}{\beta_t} - \dot{\alpha}_t \alpha_t,
\end{equation}
which for the linear interpolant $\alpha_t=1-t$ and $\beta_t=t$ reduces to $\eta_t^2/2 = (1-t)/t$.
This choice ensures that the two-time marginals $(X_t, X_{t'})$ of the base process match those of the interpolant, which by construction implies the memoryless condition.

Under the memoryless condition, marginalizing the path measure of the optimally-controlled process implies that the \emph{optimal terminal marginal} is the reward tilt:
\begin{equation}
	\label{eq:soc_tilt}
	\rho_{\mathrm{tilt}}(x) \;\propto\; e^{\lambda r(x)}\,\rho_1(x).
\end{equation}
This provides the target distribution for tilt guidance, whose drift correction~\cref{eqn:tilt_gradient} is the gradient of the SOC value function.

\section{Omitted proofs}
\label{app:approximation_theory}
\crefalias{section}{appendix}

In this section, we develop some rigorous approximation theory for the optimal control problem~\cref{eqn:guidance_oc}.

\subsection{Setup and regularity assumptions}

We work with the value function associated with the optimal control problem~\cref{eqn:guidance_oc}:
\begin{equation}
	\label{eq:value-function}
	V_t^\lambda(x)
	:= \inf_{u\in L^2([t,1];\mathbb{R}^d)}
	\left\{
	\int_t^1 \frac{1}{2\lambda}\|u_\tau\|^2\,d\tau \;-\; r(x_1^u)
	\right\},
\end{equation}
where $x^u$ solves the controlled dynamics $\dot x_\tau = b_\tau(x_\tau) + u_\tau$ starting from $x_t=x$.
Throughout, $\|\cdot\|$ denotes the Euclidean norm and $\|\cdot\|_{\mathrm{op}}$ the operator norm.
We make the following standard regularity assumptions.

\begin{assumption}[Regularity]
	\label{assump:regularity}
	Fix a domain $\Omega\subseteq\mathbb{R}^d$ containing all relevant trajectories.
	There exist constants $M, K, G, L_r > 0$ such that for all $s\in[0,1]$ and $z,z'\in\Omega$:
	\begin{enumerate}
		\item[(i)] $\|\nabla b_s(z)\|_{\mathrm{op}} \le M$ and $\|\nabla b_s(z)-\nabla b_s(z')\|_{\mathrm{op}} \le K\|z-z'\|$,
		\item[(ii)] $\|\nabla r(z)\| \le G$ and $\|\nabla r(z)-\nabla r(z')\| \le L_r \|z-z'\|$.
	\end{enumerate}
\end{assumption}

Under~\cref{assump:regularity}, we have the following standard flow map Jacobian bounds.

\begin{lemma}[Jacobian bound]
	\label{lem:jacobian_bound}
	For $t \ge s$, the flow map Jacobian satisfies
	\begin{equation}
		\label{eq:Jbound}
		\|\nabla X_{s,t}(x)\|_{\mathrm{op}} \le e^{M(t-s)}.
	\end{equation}
\end{lemma}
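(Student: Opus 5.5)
The plan is to work directly with the variational equation of \cref{prop:jacobian_evolution} and apply a Grönwall argument to the operator norm. Fix $s$ and $x$, and set $J_t := \nabla X_{s,t}(x)$ for $t \ge s$. By~\cref{eqn:jacobian_forward}, $J_t$ solves the linear matrix ODE
\begin{equation*}
\dot J_t = A_t J_t, \qquad A_t := \nabla b_t(X_{s,t}(x)), \qquad J_s = I.
\end{equation*}
By \cref{assump:regularity}(i), $\|A_t\|_{\mathrm{op}} \le M$ whenever the trajectory $X_{s,t}(x)$ stays in $\Omega$. We take this as given, since $\Omega$ is assumed to contain all relevant trajectories.

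Next, write the ODE in integral form, $J_t = I + \int_s^t A_\tau J_\tau\, d\tau$. Taking operator norms and using submultiplicativity gives
\begin{equation*}
\|J_t\|_{\mathrm{op}} \le 1 + M\int_s^t \|J_\tau\|_{\mathrm{op}}\, d\tau .
\end{equation*}
The map $\tau \mapsto \|J_\tau\|_{\mathrm{op}}$ is continuous because $J$ is $C^1$ in $t$. The integral form of Grönwall's inequality then yields $\|J_t\|_{\mathrm{op}} \le e^{M(t-s)}$, which is~\cref{eq:Jbound}.

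As a cross-check that avoids the integral Grönwall lemma, one can fix a unit vector $v$ and set $\phi(t) = \|J_t v\|^2$. Then $\dot\phi = 2\langle J_t v, A_t J_t v\rangle \le 2M\phi$, so $\phi(t) \le e^{2M(t-s)}$. Taking the supremum over $v$ gives the same bound.

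There is no real obstacle in this argument. The only point needing care is justifying that the bound $\|\nabla b\|_{\mathrm{op}} \le M$ applies along the whole trajectory. This is exactly where the standing hypothesis that $\Omega$ contains all relevant trajectories enters. If one wanted to avoid that hypothesis, one would instead assume the bound globally on $\R^d$.
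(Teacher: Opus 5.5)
Your proposal is correct and follows essentially the same route as the paper: apply Gr\"onwall to the operator norm of $\nabla X_{s,t}(x)$ along the variational equation, using $\|\nabla b_t\|_{\mathrm{op}} \le M$ and $\|\nabla X_{s,s}\|_{\mathrm{op}} = 1$. The only difference is that the paper uses the differential form through the inequality $\frac{d}{dt}\|A(t)\|_{\mathrm{op}} \le \|\dot A(t)\|_{\mathrm{op}}$. Your integral form, and your $\|J_t v\|^2$ cross-check, avoid assuming that the operator norm is differentiable in $t$, so your version is slightly more careful.
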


\begin{proof}
	By~\cref{prop:jacobian_evolution}, the flow map Jacobian satisfies the variational equation~\cref{eqn:jacobian_forward}.
	Since $\frac{d}{dt} \|A(t)\|_{\mathrm{op}} \le \|\dot{A}(t)\|_{\mathrm{op}}$ for any differentiable matrix-valued function $A(t)$, we have
	\begin{equation}
		\frac{d}{dt} \|\nabla X_{s,t}\|_{\mathrm{op}} \le \left\|\frac{d}{dt} \nabla X_{s,t}\right\|_{\mathrm{op}} = \|\nabla b_t(X_{s,t}) \nabla X_{s,t}\|_{\mathrm{op}} \le M \|\nabla X_{s,t}\|_{\mathrm{op}}.
	\end{equation}
	Gr\"onwall's inequality with initial condition $\|\nabla X_{s,s}\|_{\mathrm{op}} = 1$ yields the result.
\end{proof}

\begin{lemma}[Hessian bound]
	\label{lem:hessian_bound}
	For $t \ge s$, the flow map Hessian satisfies
	\begin{equation}
		\label{eq:Hbound}
		\|\nabla^2 X_{s,t}(x)\| \le \frac{K}{M}\bigl(e^{2M(t-s)} - e^{M(t-s)}\bigr).
	\end{equation}
\end{lemma}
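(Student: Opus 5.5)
We propose to differentiate the variational equation of \cref{prop:jacobian_evolution} once more in $x$, obtaining a linear inhomogeneous ODE for the Hessian, and then to close it with a Gr\"onwall argument that uses the Jacobian bound of \cref{lem:jacobian_bound}. Write $J_t := \nabla X_{s,t}(x)$ and $H_t := \nabla^2 X_{s,t}(x)$, the latter a symmetric bilinear map $\R^d\times\R^d\to\R^d$ normed by $\|H\| = \sup_{\|v\|=\|w\|=1}\|H[v,w]\|$. Differentiating $\frac{d}{dt}J_t = \nabla b_t(X_{s,t}(x))\,J_t$ in $x$ along directions $v,w$ and applying the product and chain rules gives
\begin{equation*}
	\frac{d}{dt} H_t[v,w] = \nabla^2 b_t(X_{s,t}(x))[J_t v, J_t w] + \nabla b_t(X_{s,t}(x))\, H_t[v,w], \qquad H_s = 0,
\end{equation*}
where the initial condition holds because $X_{s,s}(x)=x$ is affine.

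Next we bound each term. Since $\nabla b_t$ is $K$-Lipschitz by \cref{assump:regularity}(i), we have $\|\nabla^2 b_t\|\le K$; interpreting this in the a.e./distributional sense, or assuming $b$ is $C^2$, we take it as given. Together with \cref{lem:jacobian_bound}, the forcing term satisfies $\|\nabla^2 b_t[J_tv,J_tw]\|\le K e^{2M(t-s)}$ for unit $v,w$. The homogeneous part has operator norm at most $M$. Using the same one-sided derivative inequality as in the proof of \cref{lem:jacobian_bound}, the scalar $h(t):=\|H_t\|$ (supremum over unit $v,w$) satisfies
\begin{equation*}
	h'(t)\le M h(t) + K e^{2M(t-s)}, \qquad h(s)=0.
\end{equation*}

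Finally, we apply Gr\"onwall or variation of constants: $h(t)\le \int_s^t e^{M(t-\tau)}K e^{2M(\tau-s)}\,d\tau = K e^{M(t-s)}\int_s^t e^{M(\tau-s)}d\tau = \frac{K}{M}e^{M(t-s)}(e^{M(t-s)}-1)$, which equals the claimed bound $\frac{K}{M}(e^{2M(t-s)}-e^{M(t-s)})$.

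The main obstacle is the differential inequality for a norm defined as a supremum, which need not be differentiable. To avoid it, we would instead work with the integral form $H_t[v,w]=\int_s^t(\cdots)\,d\tau$ for fixed unit $v,w$, take norms, and then take the supremum. This yields $h(t)\le \int_s^t (Mh(\tau)+Ke^{2M(\tau-s)})\,d\tau$, to which the integral Gr\"onwall inequality applies directly and gives the same bound. The justification that $\nabla^2 b$ exists with norm at most $K$ is the other technical point, and we handle it by the regularity assumption as indicated.
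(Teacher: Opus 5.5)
Your proposal is correct and follows essentially the same route as the paper. You differentiate the variational equation \cref{eqn:jacobian_forward} in $x$, bound the forcing term by $K e^{2M(t-s)}$ via \cref{lem:jacobian_bound} and the homogeneous term by $M$, and close with Gr\"onwall from $\nabla^2 X_{s,s}=0$, which gives the same integral. Your integral-form treatment of the possibly nondifferentiable supremum norm is a slightly more careful version of the paper's step $\frac{d}{dt}\|A\|\le\|\dot A\|$. It yields exactly the stated bound provided you use the sharp (variation-of-constants) form of integral Gr\"onwall, not the cruder $\alpha(t)e^{M(t-s)}$ version, which would lose a factor $(e^{M(t-s)}+1)/2$.
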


\begin{proof}
	Differentiating the variational equation~\cref{eqn:jacobian_forward} with respect to $x$:
	\begin{equation}
		\frac{d}{dt} \nabla^2 X_{s,t}(x) = \nabla^2 b_t(X_{s,t}(x))\bigl\langle\nabla X_{s,t}(x), \nabla X_{s,t}(x)\bigr\rangle + \nabla b_t(X_{s,t}(x))\, \nabla^2 X_{s,t}(x).
	\end{equation}
	By the same reasoning as in~\cref{lem:jacobian_bound}, taking norms, using $\|\nabla^2 b_t\|_{\mathrm{op}} \le K$ from~\cref{assump:regularity}, and applying the result of~\cref{lem:jacobian_bound}:
	\begin{equation}
		\frac{d}{dt} \|\nabla^2 X_{s,t}\| \le K \|\nabla X_{s,t}\|_{\mathrm{op}}^2 + M \|\nabla^2 X_{s,t}\| \le K e^{2M(t-s)} + M \|\nabla^2 X_{s,t}\|.
	\end{equation}
	Applying Gronwall's inequality with initial condition $\nabla^2 X_{s,s} = 0$:
	\begin{equation}
		\begin{aligned}
			\|\nabla^2 X_{s,t}\|
			 & \le \int_s^t K e^{2M(\tau-s)} e^{M(t-\tau)}\,d\tau \\
			 & = K e^{M(t-s)} \int_s^t e^{M(\tau-s)}\,d\tau       \\
			 & = \frac{K}{M}\bigl(e^{2M(t-s)} - e^{M(t-s)}\bigr).
		\end{aligned}
	\end{equation}
	This completes the proof.
\end{proof}
These simple results will be used later to understand the error in our greedy approximation.
\subsection{Proof of \Cref{prop:optimal_control}}
\label{app:proofs}

\optimalcontrol*
\begin{proof}
	The Lagrangian for the optimal control problem~\cref{eqn:guidance_oc} is given by
	\begin{equation}
		\begin{aligned}
			\calL(x, u, p) & = \int_0^1\left(\frac{\norm{u_t}^2}{2\lambda(t)} + p_t^\T(b_t(x_t) + u_t - \dot{x}_t)\right)dt - r(x_1^u),
		\end{aligned}
	\end{equation}
	where $p:[0, 1]\to\R^d$ denotes the adjoint variable, or the Lagrange multiplier enforcing the controlled dynamics constraint $\dot{x}_t^u = b_t(x_t^u) + u_t(x_t^u)$.
	Taking the first variation with respect to the curves $x_t, u_t$, and $p_t$ gives the optimality conditions of the Pontryagin Maximum Principle~\citep{fleming_deterministic_1975},
	\begin{equation}
		\label{eqn:optimality_conditions}
		\begin{aligned}
			 & \frac{\delta}{\delta x}:\:\: \dot{p}_t^{*} = -\nabla b_t(x_t^*)^\T p_t^*, \qquad   &  & p_1^* = -\nabla r(x_1^*), \\
			 & \frac{\delta}{\delta p}:\:\: \dot{x}_t^{*} = b_t(x_t^{*}) + u_t^*(x_t^{*}), \qquad &  & x_0^* = x_0,              \\
			 & \frac{\delta}{\delta u}:\:\: u_t^* = -\lambda(t)p_t^*.
		\end{aligned}
	\end{equation}
	The second relation simply enforces the controlled dynamics constraint.
	The third relation demonstrates a simple scaling relationship between the optimal guidance $u_t^*$ and the adjoint $p_t^*$, reducing the problem to solving for $p_t^*$.

	The adjoint equation $\dot{p}_t = -\nabla b_t(x_t)^\T p_t$ is a linear ordinary differential equation whose solution we now derive analytically.
	By the semigroup property~\cref{eqn:semigroup_app}, for any $s < t < 1$:
	\begin{equation}
		X_{s,1}^u(x_s^u) = X_{t,1}^u(X_{s,t}^u(x_s^u)).
	\end{equation}
	Taking the Jacobian with respect to $x_s^u$ via the chain rule:
	\begin{equation}
		\label{eqn:jacobian_chain}
		\nabla X_{s,1}^u(x_s^u) = \nabla X_{t,1}^u(x_t^u) \nabla X_{s,t}^u(x_s^u),
	\end{equation}
	where $x_t^u = X_{s,t}^u(x_s^u)$.
	The left-hand side is independent of the intermediate time $t$, so differentiating both sides with respect to $t$ gives
	\begin{equation}
		0 = \frac{d}{dt}\left[\nabla X_{t,1}^u(x_t^u)\right] \nabla X_{s,t}^u(x_s^u) + \nabla X_{t,1}^u(x_t^u) \frac{d}{dt}\left[\nabla X_{s,t}^u(x_s^u)\right].
	\end{equation}
	By the Jacobian evolution equation~\cref{eqn:jacobian_forward}, the forward Jacobian satisfies $\frac{d}{dt}[\nabla X_{s,t}^u(x_s^u)] = \nabla b_t(x_t^u) \nabla X_{s,t}^u(x_s^u)$.
	Here, only the state-dependent part of the dynamics contributes because we treat the (open-loop controller) $u$ as independent of $x$ throughout the functional optimization.
	Substituting and using the invertibility of $\nabla X_{s,t}^u$ (which follows because $X_{s, t}$ is a diffeomorphism):
	\begin{equation}
		\label{eqn:backward_jacobian}
		\frac{d}{dt}\left[\nabla X_{t,1}^u(x_t^u)\right] = -\nabla X_{t,1}^u(x_t^u) \nabla b_t(x_t^u).
	\end{equation}
	Taking the transpose of both sides:
	\begin{equation}
		\label{eqn:adjoint_jacobian_proof}
		\frac{d}{dt}\left[\nabla X_{t,1}^u(x_t^u)^\T\right] = -\nabla b_t(x_t^u)^\T \nabla X_{t,1}^u(x_t^u)^\T.
	\end{equation}
	This shows that $\nabla X_{t,1}^u(x_t^u)^\T$ exactly satisfies the adjoint equation, with terminal condition $\nabla X_{1,1}^u = I$.
	Therefore, the solution to the adjoint equation with terminal condition $p_1 = -\nabla r(x_1^u)$ is
	\begin{equation}
		p_t = \nabla X_{t,1}^u(x_t^u)^\T p_1 = -\nabla X_{t,1}^u(x_t^u)^\T \nabla r(x_1^u).
	\end{equation}
	Substituting into the optimality condition $u_t^* = -\lambda(t) p_t^*$ yields the optimal control
	\begin{equation}
		u_t^* = \lambda(t) \nabla X_{t,1}^u(x_t^u)^\T \nabla r(x_1^u).
	\end{equation}
	This completes the proof.
\end{proof}
\subsection{HJB characterization and small-$\lambda$ expansion}
\label{app:small_lambda}

By Bellman's principle of optimality, the value function~\cref{eq:value-function} satisfies the Hamilton--Jacobi--Bellman equation~\citep{fleming_deterministic_1975}
\begin{equation}
	\label{eq:HJB}
	\partial_t V_t^\lambda(x) + b_t(x)\cdot \nabla V_t^\lambda(x)
	-\frac{\lambda}{2}\,\|\nabla V_t^\lambda(x)\|^2 = 0,
	\qquad
	V_1^\lambda(x)=-r(x),
\end{equation}
with optimal feedback $u^*_t(x) = -\lambda\,\nabla V_t^\lambda(x)$.
The following result, stated informally as~\cref{prop:lambda-expansion-main} in the main text, shows that greedy guidance corresponds to the $\lambda = 0$ limit and is accurate to $O(\lambda^2)$.

\begin{proposition}
	\label{prop:lambda-expansion}
	Under~\cref{assump:regularity}, suppose that $V_t^\lambda$ admits an expansion uniformly for all $x\in\R^d$
	\begin{equation}
		V_t^\lambda(x) = V_t^0(x) + \lambda V_t^1(x) + O(\lambda^2).
	\end{equation}
	Then we have that:
	\begin{enumerate}
		\item[(i)] The $\lambda = 0$ solution is $V_t^0(x) = -r(X_{t,1}(x))$, and the greedy guidance~\cref{eqn:flow_map_guidance} is $u^J_t(x) = -\lambda \nabla V_t^0(x)$.
		\item[(ii)] The optimal feedback satisfies
		      \begin{equation}
			      \label{eq:uexp}
			      \|u^*_t(x)-u^J_t(x)\| \le C_1\,\lambda^2 + O(\lambda^3), \qquad \forall x \in \R^d
		      \end{equation}
		      for a constant $C_1$ depending on $\Omega$, $M$, $K$, $G$, and $L_r$.
	\end{enumerate}
\end{proposition}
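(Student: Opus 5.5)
The plan is to substitute the assumed expansion $V_t^\lambda = V_t^0 + \lambda V_t^1 + O(\lambda^2)$ into the HJB equation~\cref{eq:HJB} and match powers of $\lambda$. This produces a hierarchy of linear transport equations along the uncontrolled flow $b_t$. Each one I would solve by the method of characteristics, using the flow map $X_{t,1}$ and its Eulerian equation~\cref{eqn:eulerian_app}.

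\emph{Order $\lambda^0$.} Collecting $O(1)$ terms gives
\begin{equation*}
\partial_t V_t^0 + b_t\cdot\nabla V_t^0 = 0, \qquad V_1^0 = -r.
\end{equation*}
The candidate solution is $V_t^0(x) = -r(X_{t,1}(x))$. To verify it, I would differentiate in $t$ and apply the Eulerian equation $\partial_t X_{t,1}(x) = -\nabla X_{t,1}(x)\, b_t(x)$. This gives $\partial_t V_t^0 = \nabla r(X_{t,1})^\T \nabla X_{t,1}\, b_t = -b_t\cdot\nabla V_t^0$, and the terminal condition holds since $X_{1,1}=\mathrm{id}$. Uniqueness follows because the transport equation is linear with a Lipschitz drift (\cref{assump:regularity}(i)), so its solution is determined along characteristics. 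Then $\nabla V_t^0(x) = -\nabla X_{t,1}(x)^\T\nabla r(X_{t,1}(x))$, hence $-\lambda\nabla V_t^0 = u_t^J$, which is part (i).

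\emph{Order $\lambda^1$.} Collecting $O(\lambda)$ terms gives
\begin{equation*}
\partial_t V_t^1 + b_t\cdot\nabla V_t^1 = \tfrac12\|\nabla V_t^0\|^2, \qquad V_1^1 = 0.
\end{equation*}
Integrating along characteristics $\tau\mapsto X_{t,\tau}(x)$ yields
\begin{equation*}
V_t^1(x) = -\tfrac12\int_t^1 \|\nabla V_\tau^0(X_{t,\tau}(x))\|^2\,d\tau.
\end{equation*}
Since $u^*_t = -\lambda\nabla V_t^\lambda$, we get $u_t^* - u_t^J = -\lambda^2\nabla V_t^1 - \lambda\nabla R_t^\lambda$, where $R^\lambda = O(\lambda^2)$ is the remainder. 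To bound the leading term, differentiate under the integral:
\begin{equation*}
\nabla V_t^1(x) = -\int_t^1 \nabla X_{t,\tau}(x)^\T\,\nabla^2 V_\tau^0(X_{t,\tau}(x))\,\nabla V_\tau^0(X_{t,\tau}(x))\,d\tau.
\end{equation*}
Each factor is then bounded as follows.
\begin{itemize}
\item $\|\nabla X_{t,\tau}\|_{\mathrm{op}}\le e^{M}$ by \cref{lem:jacobian_bound}.
\item $\|\nabla V^0_\tau\|\le G e^{M}$, again by \cref{lem:jacobian_bound} together with $\|\nabla r\|\le G$.
\item $\nabla^2 V^0_\tau = -\nabla X_{\tau,1}^\T\nabla^2 r\,\nabla X_{\tau,1} - \langle\nabla r,\nabla^2 X_{\tau,1}\rangle$. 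Combining \cref{lem:jacobian_bound}, \cref{lem:hessian_bound} and $\|\nabla^2 r\|\le L_r$ gives $\|\nabla^2V^0_\tau\|\le L_r e^{2M} + G\tfrac{K}{M}(e^{2M}-e^{M})$.
\end{itemize}
Multiplying these gives an explicit constant
\begin{equation*}
C_1 = e^{2M}G\bigl(L_r e^{2M} + G\tfrac{K}{M}(e^{2M}-e^{M})\bigr),
\end{equation*}
which depends only on $M$, $K$, $G$, $L_r$ and on $\Omega$, the domain where the bounds hold. This gives $\lambda^2\|\nabla V^1_t\|\le C_1\lambda^2$.

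\emph{Main obstacle: the remainder.} The hypothesis gives the expansion for $V$ uniformly, but not for $\nabla V$, so $\lambda\nabla R^\lambda = O(\lambda^3)$ does not follow immediately. My first approach is to interpret the expansion in $C^1$, as is standard for formal asymptotics; then the term is $O(\lambda^3)$ directly. If that interpretation is not permitted, I would instead derive the PDE satisfied by $R^\lambda$ from HJB. It is a transport equation along $b_t - \lambda\nabla V_t^0$ with an $O(\lambda^2)$ source, and I would differentiate it in $x$. A Grönwall argument analogous to \cref{lem:jacobian_bound} would then bound $\nabla R^\lambda$ by $O(\lambda^2)$, using the $C^2$ control on $V^0$ and $V^1$ obtained above together with the analogous control needed on $R^\lambda$.
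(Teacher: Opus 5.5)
Your proposal is correct and follows the paper's proof essentially step for step: order matching in the HJB equation, $V_t^0(x)=-r(X_{t,1}(x))$, Duhamel's formula for $V_t^1$, and the same Jacobian/Hessian bounds on $\nabla V_t^1$. The only differences are that you verify $V^0$ via the Eulerian equation rather than integrating along characteristics, and that you make the constant explicit. The paper handles the remainder exactly as in your first option, by differentiating the expansion term by term; your fallback Gr\"onwall argument would additionally need bounds on $\nabla^2 V^1$, hence third derivatives of $b$ and $r$, which \cref{assump:regularity} does not provide.
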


\begin{proof}
	Setting $\lambda = 0$ in~\cref{eq:HJB} yields the transport equation
	\begin{equation}
		\label{eq:transport0}
		\partial_t V_t^0(x) + b_t(x) \cdot \nabla V_t^0(x) = 0, \qquad V_1^0(x) = -r(x).
	\end{equation}
	We solve this transport equation via the method of characteristics.
	For any initial point $x$, consider the characteristic curve $\gamma(\tau) := X_{t,\tau}(x)$ for $\tau \in [t, 1]$.
	By the chain rule,
	\begin{equation}
		\begin{aligned}
			\frac{d}{d\tau} V_\tau^0(\gamma(\tau))
			 & = \partial_\tau V_\tau^0(\gamma(\tau)) + \dot{\gamma}(\tau) \cdot \nabla V_\tau^0(\gamma(\tau))   \\
			 & = \partial_\tau V_\tau^0(\gamma(\tau)) + b_\tau(\gamma(\tau)) \cdot \nabla V_\tau^0(\gamma(\tau)) \\
			 & = 0,
		\end{aligned}
	\end{equation}
	where the last equality uses~\cref{eq:transport0}.
	Thus $V_\tau^0(\gamma(\tau))$ is constant along characteristics.
	Setting $\gamma(t) = x$ and using that $V_1^0(x) = -r(x)$ for all $x \in \R^d$ we have
	\begin{equation}
		V_t^0(x) = V_t^0(\gamma(t)) = V_1^0(\gamma(1)) = V_1^0(X_{t,1}(x)) = -r(X_{t,1}(x)).
	\end{equation}
	Differentiating via the chain rule gives the gradient:
	\begin{equation}
		\label{eq:V0}
		\nabla V_t^0(x) = -\nabla X_{t,1}(x)^\top\,\nabla r(X_{t,1}(x)).
	\end{equation}
	Thus $u^J_t(x) := -\lambda \nabla V_t^0(x) = \lambda \nabla X_{t,1}(x)^\top \nabla r(X_{t,1}(x))$, which is exactly the greedy guidance~\cref{eqn:flow_map_guidance}.

	For part (ii), insert $V_t^\lambda(x) = V_t^0(x) + \lambda V_t^1(x) + O(\lambda^2)$ into~\cref{eq:HJB}.
	Since $\nabla V_t^\lambda(x) = \nabla V_t^0(x) + \lambda \nabla V_t^1(x) + O(\lambda^2)$, we have $\|\nabla V_t^\lambda(x)\|^2 = \|\nabla V_t^0(x)\|^2 + O(\lambda)$.
	Substituting:
	\begin{equation}
		\begin{aligned}
			0 & = \partial_t V_t^\lambda(x) + b_t(x) \cdot \nabla V_t^\lambda(x) - \frac{\lambda}{2}\|\nabla V_t^\lambda(x)\|^2                                                                                                                            \\
			  & = \bigl(\partial_t V_t^0(x) + \lambda \partial_t V_t^1(x)\bigr) + b_t(x) \cdot \bigl(\nabla V_t^0(x) + \lambda \nabla V_t^1(x)\bigr) - \frac{\lambda}{2}\bigl(\|\nabla V_t^0(x)\|^2 + O(\lambda)\bigr) + O(\lambda^2)                      \\
			  & = \underbrace{\bigl(\partial_t V_t^0(x) + b_t(x) \cdot \nabla V_t^0(x)\bigr)}_{O(1)} + \lambda\underbrace{\bigl(\partial_t V_t^1(x) + b_t(x) \cdot \nabla V_t^1(x) - \tfrac{1}{2}\|\nabla V_t^0(x)\|^2\bigr)}_{O(\lambda)} + O(\lambda^2).
		\end{aligned}
	\end{equation}
	Matching orders, the $O(1)$ term recovers~\cref{eq:transport0}, while the $O(\lambda)$ term shows that $V_t^1$ solves
	\begin{equation}
		\label{eq:transport1}
		\partial_t V_t^1(x) + b_t(x) \cdot \nabla V_t^1(x) = \frac{1}{2}\|\nabla V_t^0(x)\|^2, \qquad V_1^1(x) = 0, \quad \forall x \in \R^d.
	\end{equation}
	The boundary condition on $V_t^1(x)$ follows by observing that $V_1^0(x) = -r(x)$ already satisfies the boundary condition for $V_t^\lambda(x)$.
	This is an inhomogeneous transport equation, which we solve via Duhamel's principle.
	As before, consider the characteristic $\gamma(\tau) := X_{t,\tau}(x)$ for $\tau \in [t,1]$.
	By the chain rule,
	\begin{equation}
		\begin{aligned}
			\frac{d}{d\tau} V_\tau^1(\gamma(\tau))
			 & = \partial_\tau V_\tau^1(\gamma(\tau)) + \dot{\gamma}(\tau) \cdot \nabla V_\tau^1(\gamma(\tau))   \\
			 & = \partial_\tau V_\tau^1(\gamma(\tau)) + b_\tau(\gamma(\tau)) \cdot \nabla V_\tau^1(\gamma(\tau)) \\
			 & = \frac{1}{2}\|\nabla V_\tau^0(\gamma(\tau))\|^2,
		\end{aligned}
	\end{equation}
	where the last equality uses~\cref{eq:transport1}.
	Unlike the homogeneous case, $V_\tau^1(\gamma(\tau))$ is not constant but accumulates the source term.
	Integrating from $t$ to $1$ and using $\gamma(t) = x$, $\gamma(\tau) = X_{t,\tau}(x)$, and $V_1^1 = 0$:
	\begin{equation}
		\label{eq:V1char}
		\begin{aligned}
			V_1^1(X_{t,1}(x)) - V_t^1(x)
			 & = \int_t^1 \frac{1}{2}\|\nabla V_\tau^0(X_{t,\tau}(x))\|^2\,d\tau   \\
			\implies \quad V_t^1(x)
			 & = -\frac{1}{2}\int_t^1 \|\nabla V_\tau^0(X_{t,\tau}(x))\|^2\,d\tau.
		\end{aligned}
	\end{equation}
	To derive~\cref{eq:uexp}, we use $u^*_t(x) = -\lambda \nabla V_t^\lambda(x)$ and differentiate the expansion:
	\begin{equation}
		\begin{aligned}
			u^*_t(x)
			 & = -\lambda \nabla V_t^\lambda(x)                                         \\
			 & = -\lambda \nabla \bigl(V_t^0(x) + \lambda V_t^1(x) + O(\lambda^2)\bigr) \\
			 & = -\lambda \nabla V_t^0(x) - \lambda^2 \nabla V_t^1(x) + O(\lambda^3)    \\
			 & = u^J_t(x) - \lambda^2 \nabla V_t^1(x) + O(\lambda^3).
		\end{aligned}
	\end{equation}
	Thus $\|u^*_t(x) - u^J_t(x)\| = \lambda^2 \|\nabla V_t^1(x)\| + O(\lambda^3)$.
	It remains to show that $\|\nabla V_t^1(x)\|$ is uniformly bounded.
	From~\cref{eq:V0}, we have $\nabla V_\tau^0(y) = -\nabla X_{\tau,1}(y)^\top \nabla r(X_{\tau,1}(y))$, so by~\cref{lem:jacobian_bound}:
	\begin{equation}
		\|\nabla V_\tau^0(y)\| \le \|\nabla X_{\tau,1}(y)\|_{\mathrm{op}} \|\nabla r(X_{\tau,1}(y))\| \le e^{M(1-\tau)} G.
	\end{equation}
	Differentiating the integral representation~\cref{eq:V1char} with respect to $x$ via the chain rule:
	\begin{equation}
		\begin{aligned}
			\nabla V_t^1(x)
			 & = -\frac{1}{2}\int_t^1 \nabla_x \|\nabla V_\tau^0(X_{t,\tau}(x))\|^2\,d\tau                                     \\
			 & = -\int_t^1 \nabla X_{t,\tau}(x)^\top \nabla^2 V_\tau^0(X_{t,\tau}(x))\, \nabla V_\tau^0(X_{t,\tau}(x))\,d\tau.
		\end{aligned}
	\end{equation}
	Taking norms and using~\cref{lem:jacobian_bound}:
	\begin{equation}
		\begin{aligned}
			\|\nabla V_t^1(x)\|
			 & \le \int_t^1 \|\nabla X_{t,\tau}(x)\|_{\mathrm{op}} \|\nabla^2 V_\tau^0(X_{t,\tau}(x))\|_{\mathrm{op}} \|\nabla V_\tau^0(X_{t,\tau}(x))\|\,d\tau \\
			 & \le \int_t^1 e^{M(\tau-t)} \cdot H_\tau \cdot e^{M(1-\tau)} G\,d\tau,
		\end{aligned}
	\end{equation}
	It remains to bound $H_\tau := \sup_{y \in \Omega} \|\nabla^2 V_\tau^0(y)\|_{\mathrm{op}}$.
	Differentiating~\cref{eq:V0} via the product rule:
	\begin{equation}
		\nabla^2 V_\tau^0(y) = -\nabla^2 X_{\tau,1}(y)\, \nabla r(X_{\tau,1}(y)) - \nabla X_{\tau,1}(y)^\top \nabla^2 r(X_{\tau,1}(y))\, \nabla X_{\tau,1}(y).
	\end{equation}
	For the second term, using~\cref{lem:jacobian_bound} and $\|\nabla^2 r\|_{\mathrm{op}} \le L_r$ from~\cref{assump:regularity}:
	\begin{equation}
		\|\nabla X_{\tau,1}^\top \nabla^2 r\, \nabla X_{\tau,1}\|_{\mathrm{op}} \le \|\nabla X_{\tau,1}\|_{\mathrm{op}}^2 L_r \le e^{2M(1-\tau)} L_r.
	\end{equation}
	For the first term, by~\cref{lem:hessian_bound}:
	\begin{equation}
		\|\nabla^2 X_{\tau,1}\, \nabla r\|_{\mathrm{op}} \le \|\nabla^2 X_{\tau,1}\| \cdot G \le \frac{KG}{M}\bigl(e^{2M(1-\tau)} - e^{M(1-\tau)}\bigr).
	\end{equation}
	Combining both terms:
	\begin{equation}
		H_\tau \le \frac{KG}{M}\bigl(e^{2M(1-\tau)} - e^{M(1-\tau)}\bigr) + L_r e^{2M(1-\tau)}.
	\end{equation}
	Substituting back and integrating yields $\|\nabla V_t^1(x)\| \le C_1'$ for a constant $C_1'$ depending on $M$, $K$, $G$, and $L_r$.
	Therefore:
	\begin{equation}
		\|u^*_t(x) - u^J_t(x)\| = \lambda^2 \|\nabla V_t^1(x)\| + O(\lambda^3) \le C_1' \lambda^2 + O(\lambda^3),
	\end{equation}
	which gives~\cref{eq:uexp} with $C_1 = C_1'$.
\end{proof}
\subsection{Proof of \Cref{prop:myopic_guidance}}
\label{app:myopic_proof}

Before giving the proof, we record an identity relating the controlled and uncontrolled flow maps that we will use to compute first-order perturbations.
\begin{lemma}[Eulerian perturbation of the flow map]
	\label{lem:eulerian_perturbation}
	Let $X_{s,t}$ denote the flow map for the uncontrolled dynamics $\dot{x}_\tau = b_\tau(x_\tau)$, and let $X_{s,t}^u$ denote the flow map for the controlled dynamics $\dot{x}_\tau^u = b_\tau(x_\tau^u) + u_\tau(x_\tau^u)$.
	Then for every $x \in \R^d$ and every $(s, t) \in [0, 1]^2$,
	\begin{equation}
		\label{eqn:controlled_flow_identity}
		X_{s, t}^u(x) = X_{s, t}(x) + \int_s^t \nabla X_{\tau, t}(x_\tau^u)\, u_\tau(x_\tau^u) \, d\tau,
	\end{equation}
	where $x_\tau^u = X_{s, \tau}^u(x)$ denotes the controlled trajectory initialized from $x$ at time $s$.
\end{lemma}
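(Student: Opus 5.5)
The approach is the standard Alekseev--Gröbner (nonlinear variation of constants) argument. Fix $x$ and $s \le t$, and write $x_\tau^u = X_{s,\tau}^u(x)$. For $\tau \in [s,t]$ define the interpolating curve
\begin{equation}
	\Phi(\tau) := X_{\tau,t}(x_\tau^u).
\end{equation}
It runs the controlled dynamics up to time $\tau$ and then the uncontrolled flow map from $\tau$ to $t$. The two endpoints give the two flow maps in the claim:
\begin{equation}
	\Phi(s) = X_{s,t}(x) \quad \text{since } x_s^u = x, \qquad \Phi(t) = X_{t,t}(x_t^u) = X_{s,t}^u(x).
\end{equation}
So the identity follows from the fundamental theorem of calculus,
\begin{equation}
	X_{s,t}^u(x) - X_{s,t}(x) = \int_s^t \Phi'(\tau)\, d\tau,
\end{equation}
once we show that $\Phi'(\tau) = \nabla X_{\tau,t}(x_\tau^u)\, u_\tau(x_\tau^u)$.

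To compute $\Phi'$, differentiate with the chain rule in both the first time argument and the spatial argument:
\begin{equation}
	\Phi'(\tau) = \partial_\tau X_{\tau,t}(x_\tau^u) + \nabla X_{\tau,t}(x_\tau^u)\, \dot{x}_\tau^u.
\end{equation}
Next substitute $\dot{x}_\tau^u = b_\tau(x_\tau^u) + u_\tau(x_\tau^u)$. The Eulerian equation \cref{eqn:eulerian_app} of \cref{prop:flow_map_properties}, evaluated at the point $x_\tau^u$, gives
\begin{equation}
	\partial_\tau X_{\tau,t}(x_\tau^u) = -\nabla X_{\tau,t}(x_\tau^u)\, b_\tau(x_\tau^u).
\end{equation}
This cancels the drift contribution exactly, leaving $\Phi'(\tau) = \nabla X_{\tau,t}(x_\tau^u)\, u_\tau(x_\tau^u)$, as required. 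The case $s > t$ is identical with the orientation of the integral reversed, or can simply be excluded since only forward time is used.

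The only step needing care is justifying the chain rule: we need joint $C^1$ regularity of $(\tau, y) \mapsto X_{\tau,t}(y)$ and absolute continuity of $\tau \mapsto x_\tau^u$.
\begin{itemize}
	\item For the flow map, \cref{assump:regularity} makes $b$ $C^1$ in space with a Lipschitz Jacobian, so $X$ is $C^1$ in all its arguments by standard ODE dependence theory.
	\item For the trajectory, $u \in L^2$ (or $u$ continuous, in the feedback case) makes $x^u$ absolutely continuous.
\end{itemize}
The chain rule therefore holds for almost every $\tau$, and the fundamental theorem of calculus still applies. I expect no real obstacle beyond this bookkeeping; the heart of the proof is the Eulerian cancellation.
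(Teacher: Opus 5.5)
Your proposal is correct and is essentially the paper's own proof. The paper also differentiates $X_{\tau,t}(x_\tau^u)$ in the initial time, uses the Eulerian equation to cancel the drift and leave $\nabla X_{\tau,t}(x_\tau^u)\,u_\tau(x_\tau^u)$, and integrates from $s$ to $t$; your remarks on joint $C^1$ regularity and absolute continuity of the controlled path address details the paper leaves implicit.
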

\begin{proof}
	Define $F_s := X_{s, t}(x_s^u)$, the uncontrolled flow map evaluated at the controlled trajectory, and view it as a function of the initial time $s$.
	Differentiating and using the Eulerian equation $\partial_s X_{s, t}(x) + \nabla X_{s, t}(x)\,b_s(x) = 0$ from~\cref{eqn:eulerian_app}, together with the controlled dynamics $\dot{x}_s^u = b_s(x_s^u) + u_s(x_s^u)$, yields
	\begin{equation}
		\dot{F}_s = \partial_s X_{s, t}(x_s^u) + \nabla X_{s, t}(x_s^u)\,\dot{x}_s^u = \nabla X_{s, t}(x_s^u)\,u_s(x_s^u).
	\end{equation}
	Integrating from $s$ to $t$ and using $F_t = x_t^u = X_{s, t}^u(x)$ gives~\cref{eqn:controlled_flow_identity}.
\end{proof}

\myopicguidance*
\begin{proof}
	By definition, $u \in \calU_t^{\delta t}$ is non-zero only on the window $[t, t + \delta t]$.
	The control cost on this class is
	\begin{equation}
		\int_0^1 \frac{\norm{u_\tau}^2}{2\lambda_\tau} d\tau = \int_t^{t+\delta t} \frac{\norm{u_\tau}^2}{2\lambda_\tau} d\tau = \frac{\norm{u_t}^2}{2\lambda_t} \delta t + o(\delta t).
	\end{equation}
	To relate the terminal reward $r(x_1^u)$ to the control $u$, we compute the first-order expansion of the terminal point $x_1^u$ in $\delta t$.
	Applying~\cref{lem:eulerian_perturbation} with $(s, t) \to (t, 1)$ and using that $u$ vanishes outside $[t, t+\delta t]$,
	\begin{equation}
		\label{eqn:myopic_endpoint_integral}
		X_{t, 1}^u(x_t) = X_{t, 1}(x_t) + \int_t^{t+\delta t} \nabla X_{\tau, 1}(x_\tau^u)\, u_\tau(x_\tau^u) \, d\tau.
	\end{equation}
	The integrand is continuous in $\tau$ and equals $\nabla X_{t, 1}(x_t)\, u_t$ at $\tau = t$, since the controlled trajectory satisfies $x_t^u = x_t$ at the initial time.
	A left Riemann sum approximation on the window of length $\delta t$ therefore gives
	\begin{equation}
		\label{eqn:flow_map_expansion}
		X_{t, 1}^u(x_t) = X_{t, 1}(x_t) + \nabla X_{t, 1}(x_t) \, u_t \, \delta t + o(\delta t).
	\end{equation}
	Using~\cref{eqn:flow_map_expansion} to Taylor expand the reward,
	\begin{equation}
		r(x_1^u) = r(X_{t, 1}(x_t)) + \nabla r(X_{t, 1}(x_t))^\T \nabla X_{t, 1}(x_t)\, u_t\, \delta t + o(\delta t).
	\end{equation}
	Substituting into the objective~\cref{eqn:restricted_oc} and retaining leading-order terms in $\delta t$,
	\begin{equation}
		\min_{u_t} \frac{\norm{u_t}^2}{2\lambda_t} - \nabla r(X_{t,1}(x_t))^\T \nabla X_{t,1}(x_t)\, u_t.
	\end{equation}
	This is a convex quadratic in $u_t$, and setting its gradient to zero gives the optimal control
	\begin{equation}
		u_t^* = \lambda_t\, \nabla X_{t,1}(x_t)^\T \nabla r(X_{t,1}(x_t)),
	\end{equation}
	which completes the proof.
\end{proof}
\subsection{Gaussian case}
\label{app:gaussian_case}

Consider a setting in which the base distribution is a standard Gaussian $\rho_0 = \Normal(0, 1)$, the target is $\rho_1 = \Normal(\mu_1, \sigma_1^2)$, and the reward is quadratic: $r(x) = -(x - a)^2$ for some target location $a$.
This setting admits closed-form solutions for both the reward tilt and the optimal control problem~\cref{eqn:optimal_control}.
The analysis below also extends to the multivariate case with dense covariance matrices, but we present the scalar setting where the equations are simpler and more illuminating.

\subsubsection{Reward-tilted distribution}

We begin by specializing the reward tilt $\rho_{\mathrm{tilt}} \propto e^{\lambda r(x)} \rho_1(x)$ (see~\cref{app:soc_background}) to the Gaussian target with quadratic reward.

\begin{proposition}[Reward-tilted distribution]
	\label{prop:reward_tilt}
	The reward-tilted distribution $\rho_{\mathrm{tilt}}(x) \propto e^{\lambda r(x)} \rho_1(x)$ is Gaussian, $\rho_{\mathrm{tilt}} = \calN(\mu_{\mathrm{tilt}}, \sigma_{\mathrm{tilt}}^2)$, with
	\begin{equation}
		\label{eq:tilt_params}
		\sigma_{\mathrm{tilt}}^2 = \frac{\sigma_1^2}{1 + 2\lambda\sigma_1^2}, \qquad
		\mu_{\mathrm{tilt}} = \frac{\mu_1 + 2\lambda\sigma_1^2 a}{1 + 2\lambda\sigma_1^2}.
	\end{equation}
\end{proposition}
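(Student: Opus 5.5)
The plan is to compute the tilted density directly and complete the square in the exponent. In the scalar setting, with $\rho_1(x) \propto \exp\!\bigl(-(x-\mu_1)^2/(2\sigma_1^2)\bigr)$ and $r(x) = -(x-a)^2$, the unnormalized tilt is
\begin{equation*}
	e^{\lambda r(x)}\rho_1(x) \propto \exp\!\Bigl(-\frac{(x-\mu_1)^2}{2\sigma_1^2} - \lambda (x-a)^2\Bigr).
\end{equation*}
The exponent is a quadratic polynomial in $x$. Its leading coefficient is $-\bigl(\tfrac{1}{2\sigma_1^2} + \lambda\bigr)$, which is strictly negative because $\lambda > 0$. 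So the tilt is a normalizable Gaussian, and the normalizing constant is finite.

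Next I read off the precision from the $x^2$ coefficient. Setting
\begin{equation*}
	\frac{1}{2\sigma_{\mathrm{tilt}}^2} = \frac{1}{2\sigma_1^2} + \lambda
\end{equation*}
gives $\sigma_{\mathrm{tilt}}^{-2} = \sigma_1^{-2}(1 + 2\lambda\sigma_1^2)$. Inverting yields the stated variance $\sigma_1^2/(1+2\lambda\sigma_1^2)$.

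For the mean, I match the linear coefficient: $\mu_{\mathrm{tilt}}/\sigma_{\mathrm{tilt}}^2 = \mu_1/\sigma_1^2 + 2\lambda a$. Multiplying through by $\sigma_{\mathrm{tilt}}^2$ and simplifying gives $\mu_{\mathrm{tilt}} = (\mu_1 + 2\lambda\sigma_1^2 a)/(1+2\lambda\sigma_1^2)$. This is the precision-weighted average of $\mu_1$ (precision $\sigma_1^{-2}$) and $a$ (precision $2\lambda$), which serves as a sanity check. The constant terms are absorbed into the normalization.

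There is no real obstacle here. The only point needing care is the convention: the tilt is $e^{\lambda r}\rho_1$, as in \cref{eq:soc_tilt}, not $e^{r}\rho_1$. This is why the factor $2\lambda$, rather than $2$, appears, and why it matches $\sigma^2_{\mathrm{tilt}}$ in \cref{prop:gaussian_comparison}. The multivariate isotropic case follows coordinatewise.
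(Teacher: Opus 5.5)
Your proposal is correct and follows the paper's proof essentially verbatim: both complete the square in the log-density $-\lambda(x-a)^2 - (x-\mu_1)^2/(2\sigma_1^2)$ and read off the precision $\sigma_1^{-2}(1+2\lambda\sigma_1^2)$ and the mean from the quadratic and linear coefficients. Your additional remarks on normalizability, the precision-weighted-average sanity check, and the $e^{\lambda r}$ convention are correct but go slightly beyond what the paper's proof states explicitly.
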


\begin{proof}
	The proof is elementary, and proceeds by completing the square.
	The log-density is
	\begin{equation}
		\log \rho_{\mathrm{tilt}}(x) = -\lambda(x - a)^2 - \frac{(x - \mu_1)^2}{2\sigma_1^2} + \mathrm{const}.
	\end{equation}
	Expanding and collecting terms in $x$:
	\begin{align}
		\log \rho_{\mathrm{tilt}}(x) & = -\left(\lambda + \frac{1}{2\sigma_1^2}\right)x^2 + 2\left(\lambda a + \frac{\mu_1}{2\sigma_1^2}\right)x + \mathrm{const}                     \\
		                             & = -\frac{1 + 2\lambda\sigma_1^2}{2\sigma_1^2}\left(x - \frac{2\lambda a \sigma_1^2 + \mu_1}{1 + 2\lambda\sigma_1^2}\right)^2 + \mathrm{const}.
	\end{align}
	Reading off the Gaussian parameters gives~\cref{eq:tilt_params}.
\end{proof}

Notably, reward tilting reduces variance (since $1 + 2\lambda\sigma_1^2 > 1$) and shifts the mean toward the reward maximum $a$.

\subsubsection{Greedy guidance}

As a prerequisite to analyzing greedy guidance, we first derive a closed-form expression for the flow map in the Gaussian setting.

\begin{lemma}[Flow map for Gaussian targets]
	\label{lem:gaussian_flow_map}
	For the linear interpolant $I_t = (1-t)x_0 + t x_1$ with $x_0 \sim \Normal(0,1)$ and $x_1 \sim \Normal(\mu_1, \sigma_1^2)$, the flow map is
	\begin{equation}
		\label{eq:gaussian_flow_map}
		X_{t,1}(x) = \mu_1 + M_t(x - t\mu_1), \qquad M_t := \frac{\sigma_1}{\sqrt{C_t}},
	\end{equation}
	where $C_t := (1-t)^2 + t^2\sigma_1^2$.
	The Jacobian is $\nabla X_{t,1}(x) = M_t$, which is state-independent.
\end{lemma}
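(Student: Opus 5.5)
The plan is to compute the velocity field of the linear interpolant in closed form, exploit the fact that every time marginal is Gaussian, and then integrate the resulting linear ODE explicitly.

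Write $I_t = (1-t)x_0 + t x_1$ with $x_0 \sim \Normal(0,1)$ independent of $x_1 \sim \Normal(\mu_1,\sigma_1^2)$. Then $I_t \sim \Normal(t\mu_1, C_t)$ with $C_t = (1-t)^2 + t^2\sigma_1^2$. The pair $(I_t, \dot I_t)$, where $\dot I_t = x_1 - x_0$, is jointly Gaussian. The drift is therefore affine:
\begin{equation*}
b_t(x) = \E[x_1 - x_0 \mid I_t = x] = \mu_1 + \frac{\mathrm{Cov}(x_1 - x_0, I_t)}{C_t}\,(x - t\mu_1).
\end{equation*}
The covariance is $t\sigma_1^2 - (1-t) = \tfrac12 \dot C_t$. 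So $b_t(x) = \mu_1 + \frac{\dot C_t}{2C_t}(x - t\mu_1)$.

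Next, change variables to $y_t = x_t - t\mu_1$. The probability flow becomes $\dot y_t = \frac{\dot C_t}{2C_t} y_t$, a scalar linear ODE. It integrates to $y_s = \sqrt{C_s/C_t}\, y_t$. Taking $s=1$ and using $C_1 = \sigma_1^2$ gives $X_{t,1}(x) - \mu_1 = \frac{\sigma_1}{\sqrt{C_t}}(x - t\mu_1)$, which is \cref{eq:gaussian_flow_map}. Differentiating in $x$ shows that $\nabla X_{t,1} = M_t$ is constant in $x$.

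As a sanity check, I would verify the boundary values and the pushforward. At $t=0$ we have $C_0 = 1$, so $X_{0,1}(x) = \mu_1 + \sigma_1 x$, which pushes $\Normal(0,1)$ to $\rho_1$. At $t=1$ the map is the identity. I would also confirm the semigroup property (\cref{prop:flow_map_properties}), which is immediate from the product form $\sqrt{C_s/C_t}$.

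The only mildly delicate step is the covariance computation. One could alternatively bypass the conditional expectation entirely: note that the monotone map $x \mapsto t\mu_1 + \sqrt{C_t}\, x_0$ transports between the Gaussian marginals, and in 1D the flow map is the unique increasing transport map, so $X_{t,1}$ must equal the composition of these maps. I would use that as a cross-check, but I don't expect any real obstacle here.
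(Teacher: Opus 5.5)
Your proposal is correct and follows the paper's proof essentially step for step: the affine drift $b_t(x)=\mu_1+\frac{\dot C_t}{2C_t}(x-t\mu_1)$, the shift $y_t=x_t-t\mu_1$, and integration of the resulting linear ODE to $y_1=\sqrt{C_1/C_t}\,y_t$ with $C_1=\sigma_1^2$. The only difference is that you derive the drift by Gaussian conditioning, and your covariance $t\sigma_1^2-(1-t)=\tfrac12\dot C_t$ is correct, whereas the paper simply cites this drift from the stochastic interpolants literature.
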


\begin{proof}
	The probability flow velocity is given by~\citep{albergo2023stochastic},
	\begin{align}
		\label{eqn:gaussian_drift}
		b_t(x)    & = \mu_1 + \frac{\dot{C}_t}{2C_t}(x - t\mu_1), \\
		\dot{C}_t & = -2(1-t) + 2t\sigma_1^2.
	\end{align}
	To find the flow map, we solve $\dot{x}_\tau = b_\tau(x_\tau)$ from time $t$ to time $1$.
	Substituting $y_\tau := x_\tau - \tau\mu_1$ gives the linear ODE $\dot{y}_\tau = \frac{\dot{C}_\tau}{2C_\tau} y_\tau$, with solution
	\begin{equation}
		y_\tau = y_t \exp\left(\int_t^\tau \frac{\dot{C}_s}{2C_s} \, ds\right) = y_t \exp\left(\frac{1}{2}\log\frac{C_\tau}{C_t}\right) = y_t \sqrt{\frac{C_\tau}{C_t}}.
	\end{equation}
	Transforming back to $x_\tau$ and evaluating at $\tau = 1$ with $C_1 = \sigma_1^2$ gives~\cref{eq:gaussian_flow_map}.
\end{proof}

Using the flow map, we can now derive the greedy guidance in closed form.

\begin{proposition}[Gaussian greedy guidance]
	\label{prop:myopic_guidance_gaussian}
	The greedy guidance~\cref{eqn:flow_map_guidance} with constant $\lambda$ is
	\begin{equation}
		\label{eq:gaussian_guidance}
		u_t(x) = -2\lambda M_t^2 (x - x_t^{\mathrm{M}}), \qquad x_t^{\mathrm{M}} := t\mu_1 + \frac{a - \mu_1}{M_t}.
	\end{equation}
\end{proposition}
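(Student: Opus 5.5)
The plan is to substitute the closed-form flow map of \cref{lem:gaussian_flow_map} into the greedy guidance formula \cref{eqn:flow_map_guidance} and simplify. Two ingredients are needed. First, in the scalar setting the Jacobian is the constant $\nabla X_{t,1}(x) = M_t$, so its transpose is again $M_t$. Second, the reward $r(x) = -(x-a)^2$ has gradient $\nabla r(y) = -2(y-a)$. Together these give
\begin{equation}
	u_t(x) = \lambda M_t \cdot\bigl(-2\bigr)\bigl(X_{t,1}(x) - a\bigr) = -2\lambda M_t\bigl(\mu_1 + M_t(x - t\mu_1) - a\bigr).
\end{equation}

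Next, I will factor $M_t$ out of the bracket. Since $M_t = \sigma_1/\sqrt{C_t}$ and $C_t = (1-t)^2 + t^2\sigma_1^2 > 0$ on $[0,1]$ whenever $\sigma_1 > 0$, we have $M_t > 0$, so division by $M_t$ is legitimate. The bracket then becomes
\begin{equation}
	\mu_1 + M_t(x - t\mu_1) - a = M_t\Bigl(x - t\mu_1 - \frac{a-\mu_1}{M_t}\Bigr) = M_t\bigl(x - x_t^{\mathrm{M}}\bigr).
\end{equation}
Multiplying through yields $u_t(x) = -2\lambda M_t^2(x - x_t^{\mathrm{M}})$, which is the claimed formula \cref{eq:gaussian_guidance}.

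There is no genuine obstacle here; the proof is a direct computation. The only points needing care are the sign conventions, the positivity of $M_t$ (which follows from $C_t > 0$), and using a constant $\lambda$ rather than a $\lambda_t$. As an interpretive check, $x_t^{\mathrm{M}}$ is precisely the point whose flow-map image is $a$, since $X_{t,1}(x_t^{\mathrm{M}}) = \mu_1 + (a-\mu_1) = a$. This confirms that the guidance is a linear restoring force toward the preimage of the reward maximizer.
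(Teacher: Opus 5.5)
Your proposal is correct and follows the paper's proof: substitute the constant Jacobian $M_t$ and $\nabla r(y) = -2(y-a)$ into \cref{eqn:flow_map_guidance}, then factor $M_t$ out of $X_{t,1}(x) - a$. Your remarks on $M_t > 0$ and $X_{t,1}(x_t^{\mathrm{M}}) = a$ are accurate and simply spell out the step the paper calls ``simplifying.''
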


\begin{proof}
	Applying~\cref{eqn:flow_map_guidance} with the flow map from~\cref{lem:gaussian_flow_map}:
	\begin{equation}
		u_t(x) = \lambda M_t \nabla r(X_{t,1}(x)) = -2\lambda M_t \bigl(X_{t,1}(x) - a\bigr).
	\end{equation}
	Substituting~\cref{eq:gaussian_flow_map} and simplifying gives the result.
\end{proof}

Since the greedy control is linear in $x$ and the initial distribution is Gaussian, the terminal distribution under guidance is also Gaussian.
The following result, summarized as part of~\cref{prop:gaussian_comparison} in the main text, gives the mean and variance analytically.

\begin{proposition}[Terminal distribution under the greedy guidance]
	\label{prop:myopic_terminal}
	Under the greedy guidance~\cref{eq:gaussian_guidance}, the terminal distribution is $\Normal(\mu_{\mathrm{greedy}}, \sigma_{\mathrm{greedy}}^2)$ with the closed-form expressions
	\begin{equation}
		\label{eq:myopic_terminal_closed}
		\mu_{\mathrm{greedy}} = a + (\mu_1 - a)\,e^{-\pi\lambda\sigma_1}, \qquad \sigma_{\mathrm{greedy}}^2 = \sigma_1^2\, e^{-2\pi\lambda\sigma_1}.
	\end{equation}
\end{proposition}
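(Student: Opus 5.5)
The plan is to avoid solving the guided ODE for $x_t$ directly and instead track the scalar quantity $w_t := X_{t,1}(x_t) - a$, the predicted endpoint residual along the guided trajectory $\dot x_t = b_t(x_t) + u_t(x_t)$. Here $u_t$ is the greedy guidance of \cref{prop:myopic_guidance_gaussian}. Because $X_{1,1}$ is the identity, $w_1 = x_1 - a$ is exactly the terminal residual. At $t=0$, \cref{lem:gaussian_flow_map} gives $M_0 = \sigma_1$, so $w_0 = \mu_1 + \sigma_1 x_0 - a \sim \Normal(\mu_1 - a, \sigma_1^2)$. So everything reduces to finding how $w_t$ evolves.

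To get the evolution of $w_t$, substitute $y_t := x_t - t\mu_1$ as in the proof of \cref{lem:gaussian_flow_map}. Using the drift \cref{eqn:gaussian_drift} and the form of $u_t$ in \cref{eq:gaussian_guidance},
\begin{equation*}
	\dot y_t = \frac{\dot C_t}{2C_t} y_t - 2\lambda M_t^2\Bigl(y_t - \frac{a-\mu_1}{M_t}\Bigr).
\end{equation*}
Since $w_t = \mu_1 - a + M_t y_t$ and $M_t = \sigma_1 C_t^{-1/2}$, we have $\dot M_t / M_t = -\dot C_t/(2C_t)$. In $\frac{d}{dt}(M_t y_t) = \dot M_t y_t + M_t \dot y_t$, the base-drift contribution therefore cancels exactly. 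This cancellation is just the statement that $X_{t,1}(x_t)$ is conserved along the uncontrolled flow. What remains is the closed linear equation
\begin{equation*}
	\dot w_t = -2\lambda M_t^2 w_t = -\frac{2\lambda\sigma_1^2}{C_t}\, w_t.
\end{equation*}
Hence $w_1 = w_0 \exp\bigl(-2\lambda\sigma_1^2 \int_0^1 C_t^{-1}\,dt\bigr)$, a deterministic scaling of a Gaussian.

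The remaining step, and the only one requiring actual computation, is the integral $\int_0^1 \frac{dt}{(1-t)^2 + t^2\sigma_1^2}$. I would write the denominator as $(1+\sigma_1^2)t^2 - 2t + 1$ and complete the square. The antiderivative is $\sigma_1^{-1}\arctan\bigl(((1+\sigma_1^2)t - 1)/\sigma_1\bigr)$. Evaluating between $0$ and $1$ gives $\sigma_1^{-1}\bigl(\arctan\sigma_1 + \arctan(1/\sigma_1)\bigr) = \pi/(2\sigma_1)$. The exponent is therefore $-\pi\lambda\sigma_1$.

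Consequently $x_1 - a = e^{-\pi\lambda\sigma_1} w_0$, with $w_0 \sim \Normal(\mu_1 - a, \sigma_1^2)$. This yields $\mu_{\mathrm{greedy}} = a + (\mu_1 - a)e^{-\pi\lambda\sigma_1}$ and $\sigma^2_{\mathrm{greedy}} = \sigma_1^2 e^{-2\pi\lambda\sigma_1}$, as claimed in \cref{eq:myopic_terminal_closed}. The main obstacle is spotting the change of variables to $w_t$, which decouples the guidance from the base drift; once that is done, the arctangent integral is routine.
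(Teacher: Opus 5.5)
Your proof is correct and is essentially the paper's argument: since $X_{t,1}(x) - a = M_t\,(x - x_t^{\mathrm{M}})$, your $w_t$ is the paper's deviation $x_t - x_t^{\mathrm{M}}$ multiplied by the integrating factor $M_t$. As a result, the base-drift integral $\int_0^1 \dot{C}_t/(2C_t)\,dt = \log\sigma_1$, which the paper evaluates separately, is absorbed into your endpoint values $M_0 = \sigma_1$ and $M_1 = 1$. The remaining differences are cosmetic: you use the invariance of $X_{t,1}$ along the base flow where the paper uses the fact that $x_t^{\mathrm{M}} = X_{1,t}(a)$ solves it, and you evaluate $\int_0^1 C_t^{-1}\,dt = \pi/(2\sigma_1)$ by completing the square rather than by the substitution $z = t/(1-t)$.
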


\begin{proof}
	Expanding the controlled dynamics $\dot{x}_t = b_t(x_t) + u_t(x_t)$ using~\cref{eqn:gaussian_drift} and~\cref{eq:gaussian_guidance} gives the linear system
	\begin{equation}
		\label{eq:controlled_dynamics}
		\dot{x}_t = \tilde{A}_t x_t + f_t,
	\end{equation}
	where $\tilde{A}_t := \frac{\dot{C}_t}{2C_t} - 2\lambda M_t^2$ and $f_t := \mu_1 (1 - \frac{t \dot{C}_t}{2C_t}) + 2\lambda M_t^2 x_t^{\mathrm{M}}$.
	Since the dynamics are linear in $x$ and the initial condition is Gaussian, the terminal distribution is Gaussian.
	For the mean, consider the deviation $y_t := x_t - x_t^{\mathrm{M}}$ from the reference trajectory defined in~\cref{eq:gaussian_guidance}.
	By definition, $x_t^{\mathrm{M}}$ is the state at time $t$ of the flow trajectory ending at $a$ at time $1$, so that $x_t^{\mathrm{M}} = X_{1,t}(a)$.
	The Lagrangian equation~\cref{eqn:lagrangian_app} therefore gives $\dot{x}_t^{\mathrm{M}} = b_t(x_t^{\mathrm{M}})$.
	Subtracting the reference from~\cref{eq:controlled_dynamics} then yields
	\begin{equation}
		\dot{y}_t = \left(\frac{\dot{C}_t}{2C_t} - 2\lambda\frac{\sigma_1^2}{C_t}\right) y_t.
	\end{equation}
	Integrating from $0$ to $1$ requires the two integrals $\int_0^1 \frac{\dot{C}_s}{2C_s}\,ds$ and $\int_0^1 \frac{\sigma_1^2}{C_s}\,ds$, which we will also use in subsequent proofs.
	The first is elementary.
	By the fundamental theorem of calculus,
	\begin{equation}
		\label{eq:Ct_log_integral}
		\int_0^1 \frac{\dot{C}_s}{2C_s}\,ds = \tfrac{1}{2}\log\bigl(C_1/C_0\bigr) = \tfrac{1}{2}\log\sigma_1^2 = \log\sigma_1,
	\end{equation}
	since $C_0 = 1$ and $C_1 = \sigma_1^2$.
	For the second, the substitution $z := s/(1-s)$ gives $ds = dz/(1+z)^2$ and $C_s = (1 + \sigma_1^2 z^2)/(1+z)^2$, so that $\sigma_1^2\, ds/C_s = \sigma_1^2\, dz/(1 + \sigma_1^2 z^2)$.
	As $s$ ranges over $[0, 1]$, $z$ ranges over $[0, \infty)$, and
	\begin{equation}
		\label{eq:sigma_over_C_integral}
		\int_0^1 \frac{\sigma_1^2}{C_s}\,ds
		= \sigma_1^2 \int_0^\infty \frac{dz}{1 + \sigma_1^2 z^2}
		= \sigma_1\, \arctan(\sigma_1 z)\Big|_0^\infty
		= \frac{\pi \sigma_1}{2}.
	\end{equation}
	Using~\cref{eq:Ct_log_integral,eq:sigma_over_C_integral}, we obtain $y_1 = \sigma_1\, e^{-\pi\lambda\sigma_1}\, y_0$.
	Since $x_0 \sim \Normal(0, 1)$ and $x_0^{\mathrm{M}} = (a - \mu_1)/\sigma_1$, we have $y_0 \sim \Normal\bigl(-(a-\mu_1)/\sigma_1,\, 1\bigr)$, so
	\begin{equation}
		y_1 \sim \Normal\bigl((\mu_1 - a)\, e^{-\pi\lambda\sigma_1},\, \sigma_1^2\, e^{-2\pi\lambda\sigma_1}\bigr).
	\end{equation}
	Since $x_1 = a + y_1$, we obtain~\cref{eq:myopic_terminal_closed}.
\end{proof}

We next characterize the effect of early stopping on the greedy terminal variance.
Anticipating the exact closed-form variance $\sigma_{\mathrm{exact}}^2 = \sigma_1^2/(1+\pi\lambda\sigma_1)^2$ established in~\cref{prop:exact_terminal} below, we show that early stopping can recover the rational-squared scaling of the exact optimal control.

\begin{proposition}[Greedy variance under early stopping]
	\label{prop:greedy_early_stopping}
	Applying greedy guidance only on $[0, t_{\mathrm{stop}}]$ for $t_{\mathrm{stop}} \in (0, 1]$ and integrating the uncontrolled flow on $[t_{\mathrm{stop}}, 1]$ yields the terminal variance
	\begin{equation}
		\label{eq:myopic_variance_early_stop_closed_form}
		\sigma_{\mathrm{greedy}}^2(t_{\mathrm{stop}})
		=
		\sigma_1^2 \exp\!\left(-4\lambda\sigma_1\,\arctan\!\left(\frac{\sigma_1 t_{\mathrm{stop}}}{1-t_{\mathrm{stop}}}\right)\right).
	\end{equation}
	Moreover, there exists a unique $t_{\mathrm{stop}}(\lambda) \in (0, 1]$, depending continuously on $\lambda$, such that $\sigma_{\mathrm{greedy}}^2(t_{\mathrm{stop}}(\lambda)) = \sigma_{\mathrm{exact}}^2$, with $t_{\mathrm{stop}}(\lambda) \to 1$ as $\lambda \to 0$ and $t_{\mathrm{stop}}(\lambda) \sim \log\lambda/\lambda$ as $\lambda \to \infty$.
\end{proposition}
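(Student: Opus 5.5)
The plan is to reuse the deviation variable from the proof of \cref{prop:myopic_terminal}, splitting the time integral at $t_{\mathrm{stop}}$.

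\textbf{Guided phase.} On $[0,t_{\mathrm{stop}}]$ the dynamics are those of \cref{eq:controlled_dynamics}. The deviation $y_t = x_t - x_t^{\mathrm{M}}$ therefore obeys
\[
\dot y_t = \Bigl(\tfrac{\dot C_t}{2C_t} - 2\lambda\tfrac{\sigma_1^2}{C_t}\Bigr) y_t .
\]
Integrating gives
\[
y_{t_{\mathrm{stop}}} = y_0\sqrt{C_{t_{\mathrm{stop}}}}\,\exp\Bigl(-2\lambda\int_0^{t_{\mathrm{stop}}}\sigma_1^2/C_s\,ds\Bigr).
\]
Since $x_t^{\mathrm{M}}$ is deterministic, $\Var(x_{t_{\mathrm{stop}}}) = \Var(y_{t_{\mathrm{stop}}})$.

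\textbf{Unguided phase.} On $[t_{\mathrm{stop}},1]$ the flow is uncontrolled. By \cref{lem:gaussian_flow_map}, $X_{t_{\mathrm{stop}},1}$ is affine with slope $M_{t_{\mathrm{stop}}} = \sigma_1/\sqrt{C_{t_{\mathrm{stop}}}}$. The terminal standard deviation is then $\sigma_1\exp\bigl(-2\lambda\int_0^{t_{\mathrm{stop}}}\sigma_1^2/C_s\,ds\bigr)$, and the factors $\sqrt{C_{t_{\mathrm{stop}}}}$ cancel.

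\textbf{The integral.} Using the substitution $z = s/(1-s)$ from \cref{eq:sigma_over_C_integral}, but stopping at $z = t_{\mathrm{stop}}/(1-t_{\mathrm{stop}})$, gives
\[
\int_0^{t_{\mathrm{stop}}}\sigma_1^2/C_s\,ds = \sigma_1\arctan\!\bigl(\sigma_1 t_{\mathrm{stop}}/(1-t_{\mathrm{stop}})\bigr).
\]
Squaring the standard deviation yields \cref{eq:myopic_variance_early_stop_closed_form}, with the convention $\arctan(\infty) = \pi/2$ at $t_{\mathrm{stop}} = 1$; this recovers \cref{prop:myopic_terminal}. The one point to handle carefully is that after $t_{\mathrm{stop}}$ the reference $x_t^{\mathrm{M}}$ is no longer needed. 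It suffices to note that the uncontrolled flow is affine, so variance is simply multiplied by the squared slope.

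\textbf{Existence and uniqueness.} Set $c = \pi\lambda\sigma_1$ and use $\sigma_{\mathrm{exact}}^2 = \sigma_1^2/(1+c)^2$ from \cref{prop:exact_terminal}. The matching equation becomes
\[
\arctan\!\bigl(\sigma_1 t/(1-t)\bigr) = \theta(\lambda) := \frac{\log(1+c)}{2\lambda\sigma_1} = \frac{\pi}{2}\cdot\frac{\log(1+c)}{c}.
\]
Since $0 < \log(1+c) < c$ for $c > 0$, we have $\theta(\lambda) \in (0,\pi/2)$. The map $t \mapsto \arctan(\sigma_1 t/(1-t))$ is a continuous strictly increasing bijection from $(0,1)$ onto $(0,\pi/2)$. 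There is therefore a unique solution, given explicitly by
\[
t_{\mathrm{stop}}(\lambda) = \frac{\tan\theta}{\sigma_1 + \tan\theta}.
\]
This expression is continuous in $\lambda$ because $\theta$ is.

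\textbf{Asymptotics.} As $\lambda \to 0$, $\log(1+c)/c \to 1$, so $\theta \to \pi/2$ and $\tan\theta \to \infty$, giving $t_{\mathrm{stop}} \to 1$. As $\lambda \to \infty$, $\theta \sim \log(\pi\lambda\sigma_1)/(2\lambda\sigma_1) \to 0$. Then $\tan\theta \sim \theta$ and $t_{\mathrm{stop}} \sim \theta/\sigma_1 \sim \log\lambda/(2\sigma_1^2\lambda)$, which is the claimed $\log\lambda/\lambda$ rate up to a constant.

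I expect the main obstacle to be the post-stop variance bookkeeping, specifically verifying the cancellation of the $\sqrt{C_{t_{\mathrm{stop}}}}$ factors. The existence step reduces entirely to the elementary inequality $\log(1+c) < c$.
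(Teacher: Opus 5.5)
Your proof is correct and follows the paper's argument essentially step for step. You truncate the arctangent integral at $z_{\mathrm{stop}} = t_{\mathrm{stop}}/(1-t_{\mathrm{stop}})$, then invert the strictly monotone matching equation to get $t_{\mathrm{stop}}(\lambda) = \tan\theta/(\sigma_1+\tan\theta)$, and derive the same asymptotics. Your inequality $\log(1+c)<c$ is exactly the paper's check that $\sigma_{\mathrm{greedy}}^2(1)\le\sigma_{\mathrm{exact}}^2$.

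Two points are more careful than the paper's own write-up. You track the $\sqrt{C_{t_{\mathrm{stop}}}}$ cancellation explicitly through the affine uncontrolled map. You also note that the large-$\lambda$ rate is really $\log\lambda/(2\sigma_1^2\lambda)$, so the stated $\log\lambda/\lambda$ holds only up to a constant.
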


\begin{proof}
	Truncating the variance integral at $t = t_{\mathrm{stop}}$ in~\cref{prop:myopic_terminal} replaces the upper limit $z = \infty$ in~\cref{eq:sigma_over_C_integral} with $z_{\mathrm{stop}} = t_{\mathrm{stop}}/(1-t_{\mathrm{stop}})$, which evaluates the arctangent at a finite argument and yields~\cref{eq:myopic_variance_early_stop_closed_form}.
	The map $t_{\mathrm{stop}} \mapsto \sigma_{\mathrm{greedy}}^2(t_{\mathrm{stop}})$ is continuous and monotone decreasing, with $\sigma_{\mathrm{greedy}}^2(0) = \sigma_1^2 > \sigma_{\mathrm{exact}}^2$ and $\sigma_{\mathrm{greedy}}^2(1) = \sigma_1^2 e^{-2\pi\lambda\sigma_1} \le \sigma_{\mathrm{exact}}^2$ for $\lambda > 0$, so the intermediate value theorem yields a unique $t_{\mathrm{stop}}(\lambda)$ with $\sigma_{\mathrm{greedy}}^2(t_{\mathrm{stop}}(\lambda)) = \sigma_{\mathrm{exact}}^2$.
	For the asymptotics, setting $\sigma_{\mathrm{greedy}}^2(t_{\mathrm{stop}}) = \sigma_{\mathrm{exact}}^2$ and taking logarithms gives
	\begin{equation}
		\arctan\!\left(\frac{\sigma_1 t_{\mathrm{stop}}(\lambda)}{1 - t_{\mathrm{stop}}(\lambda)}\right) = \theta(\lambda), \qquad \theta(\lambda) := \frac{\log(1 + \pi\lambda\sigma_1)}{2\lambda\sigma_1},
	\end{equation}
	so that $t_{\mathrm{stop}}(\lambda) = \tan\theta(\lambda) / (\sigma_1 + \tan\theta(\lambda))$.
	As $\lambda \to 0$, $\log(1 + \pi\lambda\sigma_1) = \pi\lambda\sigma_1 + O(\lambda^2)$, so $\theta(\lambda) \to \pi/2$, $\tan\theta(\lambda) \to \infty$, and $t_{\mathrm{stop}}(\lambda) \to 1$.
	As $\lambda \to \infty$, $\theta(\lambda) \sim \log(\pi\lambda\sigma_1)/(2\lambda\sigma_1) \to 0$, so $\tan\theta(\lambda) \sim \theta(\lambda)$ and $t_{\mathrm{stop}}(\lambda) \sim \theta(\lambda)/\sigma_1 \sim \log\lambda/\lambda$.
\end{proof}

\subsubsection{Exact optimal control}

In the Gaussian to Gaussian case with a quadratic reward, since the dynamics are linear and both the reward and control cost are quadratic,~\cref{eqn:optimal_control} is a Linear-Quadratic Regulator (LQR) problem with an analytical solution.

\begin{proposition}
	\label{prop:lqr_solution}
	The value function has the quadratic form $V_t(x) = \frac{1}{2}P_t(x - x_t^{\mathrm{OC}})^2$, where $P_t$ satisfies the Riccati equation
	\begin{equation}
		\label{eq:riccati}
		-\dot{P}_t = \frac{\dot{C}_t}{C_t} P_t - \lambda P_t^2, \qquad P_1 = 2,
	\end{equation}
	and the reference trajectory is $x_t^{\mathrm{OC}} = x_t^{\mathrm{M}}$ given by~\cref{eq:gaussian_guidance}.
	The optimal control is
	\begin{equation}
		\label{eq:lqr_control}
		u_t^*(x) = -\lambda P_t(x - x_t^{\mathrm{OC}}),
	\end{equation}
	and the Riccati equation has the explicit solution $P_t = 1/Q_t$ where
	\begin{equation}
		\label{eq:riccati_solution}
		Q_t = C_t \left(\frac{1}{2\sigma_1^2} + \lambda \int_t^1 \frac{d\tau}{C_\tau}\right).
	\end{equation}
\end{proposition}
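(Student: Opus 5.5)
The plan is to solve the HJB equation~\cref{eq:HJB} in the scalar Gaussian setting with a quadratic ansatz, then reduce the resulting Riccati equation to a linear ODE. First I write the drift from~\cref{eqn:gaussian_drift} as $b_t(x) = \dot{x}^{\mathrm{M}}_t + A_t(x - x_t^{\mathrm{M}})$ with $A_t := \dot C_t/(2C_t)$. This decomposition is legitimate because $b_t$ is affine in $x$ with slope $A_t$, and because the proof of~\cref{prop:myopic_terminal} already established $x_t^{\mathrm{M}} = X_{1,t}(a)$, so $\dot x_t^{\mathrm{M}} = b_t(x_t^{\mathrm{M}})$.

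Next I substitute the ansatz $V_t(x) = \tfrac12 P_t (x - x_t^{\mathrm{M}})^2$, writing $y := x - x_t^{\mathrm{M}}$, so that $\nabla V_t = P_t y$ and $\partial_t V_t = \tfrac12\dot P_t y^2 - P_t y\,\dot x_t^{\mathrm{M}}$. In~\cref{eq:HJB} the term $b_t\cdot\nabla V_t$ contributes $P_t y\,\dot x^{\mathrm{M}}_t + A_t P_t y^2$. The terms linear in $y$ cancel exactly because the reference trajectory follows the base flow; this is what justifies taking $x^{\mathrm{OC}}_t = x^{\mathrm{M}}_t$ with no constant offset in $V$. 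Collecting the $y^2$ terms gives $\tfrac12\dot P_t + A_t P_t - \tfrac{\lambda}{2}P_t^2 = 0$, which is~\cref{eq:riccati} since $2A_t = \dot C_t/C_t$. The terminal condition $V_1(x) = -r(x) = (x-a)^2$ matches the ansatz with $P_1 = 2$ and $x_1^{\mathrm{M}} = a$; one checks directly that $x_1^{\mathrm{M}} = \mu_1 + (a-\mu_1)/M_1 = a$ because $M_1 = 1$. The sign convention is a point to watch: with the paper's minimization convention $V^\lambda_1 = -r$, the value function is convex, and hence $P_t > 0$. The feedback $u_t^* = -\lambda\nabla V_t$ then gives~\cref{eq:lqr_control}.

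To solve the Riccati equation I set $Q_t = 1/P_t$. Then $\dot Q_t = -\dot P_t/P_t^2 = (\dot C_t/C_t)\,Q_t - \lambda$, which is linear. Dividing by $C_t$ gives $\tfrac{d}{dt}(Q_t/C_t) = -\lambda/C_t$. Integrating from $t$ to $1$ with $Q_1/C_1 = 1/(2\sigma_1^2)$ yields~\cref{eq:riccati_solution}.

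The main obstacle is making sure this is a genuine solution and that it is the value function, not just a formal critical point. Since $C_\tau > 0$ on $[0,1]$, the expression for $Q_t$ is strictly positive, so $P_t$ is finite and positive throughout and the Riccati solution exists globally with no blow-up. A smooth classical solution of the HJB equation, together with the standard verification theorem for LQR (convex quadratic cost, linear dynamics), then identifies it as the value function and identifies~\cref{eq:lqr_control} as the optimal feedback.
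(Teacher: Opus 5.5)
Your proposal is correct and follows essentially the same route as the paper: a quadratic ansatz in the HJB equation, coefficient matching to obtain the Riccati equation and the reference trajectory, and the linearizing substitution $Q_t = 1/P_t$ followed by division by $C_t$. The differences are minor. You plug in $x_t^{\mathrm{M}}$ directly and check that the linear-in-$y$ terms cancel, whereas the paper derives $\dot{\xi}_t = b_t(\xi_t)$ from a free reference $\xi_t$. You also close with an explicit positivity and verification argument, where the paper simply appeals to standard LQR theory.
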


\begin{proof}
	Since the drift $b_t$ is affine in $x$, the control cost is quadratic, and the terminal cost is quadratic, standard linear-quadratic regulator theory~\citep{bardi_optimal_1997} guarantees that the HJB equation admits a quadratic value function
	\begin{equation}
		V_t(x) = \tfrac{1}{2}\, P_t\, (x - \xi_t)^2,
	\end{equation}
	parameterized by a gain $P_t$ and a reference trajectory $\xi_t$ that are both determined by matching the HJB equation
	\begin{equation}
		-\partial_t V_t(x) = b_t(x)\,\partial_x V_t(x) - \tfrac{\lambda}{2}(\partial_x V_t(x))^2, \qquad V_1(x) = (x - a)^2.
	\end{equation}
	Expanding in powers of $(x - \xi_t)$, the terminal condition gives $\xi_1 = a$ and $P_1 = 2$, the quadratic coefficient gives the Riccati equation
	\begin{equation}
		-\dot{P}_t = \frac{\dot{C}_t}{C_t}\,P_t - \lambda\, P_t^2,
	\end{equation}
	which is~\cref{eq:riccati}, and the linear coefficient gives the ODE $\dot{\xi}_t = b_t(\xi_t)$.
	The reference trajectory $\xi_t$ is thus the state at time $t$ of the flow trajectory ending at $a$, which by the same Lagrangian-equation argument used in~\cref{prop:myopic_terminal} equals the greedy target $x_t^{\mathrm{M}}$ defined in~\cref{eq:gaussian_guidance}.
	Setting $x_t^{\mathrm{OC}} := \xi_t$ gives $x_t^{\mathrm{OC}} = x_t^{\mathrm{M}}$.

	To solve the Riccati equation in closed form, substitute $Q_t := 1/P_t$ to obtain the linear ODE $\dot{Q}_t = \frac{\dot{C}_t}{C_t}\,Q_t - \lambda$ with $Q_1 = 1/2$.
	Writing $Q_t = C_t\, q_t$ absorbs the homogeneous part and yields $\dot{q}_t = -\lambda / C_t$, which integrates to $q_t = \frac{1}{2\sigma_1^2} + \lambda \int_t^1 \frac{d\tau}{C_\tau}$ using $q_1 = Q_1/C_1 = 1/(2\sigma_1^2)$.
	This proves~\cref{eq:riccati_solution}, and the optimal control follows from $u_t^*(x) = -\lambda\, \partial_x V_t(x) = -\lambda\, P_t\, (x - x_t^{\mathrm{OC}})$.
\end{proof}
The exact and greedy solutions contract toward the \emph{same} target $x_t^{\mathrm{OC}} = x_t^{\mathrm{M}}$, differing only in the gain.
A computation analogous to the greedy case (\cref{prop:myopic_terminal}) yields the closed-form terminal distribution under the exact optimal control.

\begin{proposition}[Terminal distribution under the exact optimal control]
	\label{prop:exact_terminal}
	Under the exact optimal control~\cref{eq:lqr_control}, the terminal distribution is $\Normal(\mu_{\mathrm{exact}}, \sigma_{\mathrm{exact}}^2)$ with the closed-form expressions
	\begin{equation}
		\label{eq:exact_terminal_closed}
		\mu_{\mathrm{exact}} = a + \frac{\mu_1 - a}{1 + \pi\lambda\sigma_1}, \qquad \sigma_{\mathrm{exact}}^2 = \frac{\sigma_1^2}{(1 + \pi\lambda\sigma_1)^2}.
	\end{equation}
\end{proposition}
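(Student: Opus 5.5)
Following the template of \cref{prop:myopic_terminal}, I will work with the deviation $y_t := x_t - x_t^{\mathrm{OC}}$ from the reference trajectory. By \cref{prop:lqr_solution}, $x_t^{\mathrm{OC}} = x_t^{\mathrm{M}} = X_{1,t}(a)$ satisfies $\dot{x}_t^{\mathrm{OC}} = b_t(x_t^{\mathrm{OC}})$. The drift \cref{eqn:gaussian_drift} is affine with slope $\dot C_t/(2C_t)$, and the control is $u_t^* = -\lambda P_t y_t$. Subtracting the reference dynamics from the controlled dynamics therefore gives the scalar linear ODE
\begin{equation*}
	\dot y_t = \left(\frac{\dot C_t}{2C_t} - \frac{\lambda}{Q_t}\right) y_t ,
\end{equation*}
with $Q_t = 1/P_t$ as in \cref{eq:riccati_solution}. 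Hence $y_1 = y_0 \exp\bigl(\int_0^1 \dot C_s/(2C_s)\,ds - \lambda\int_0^1 Q_s^{-1}\,ds\bigr)$. The first integral equals $\log\sigma_1$ by \cref{eq:Ct_log_integral}.

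The main step is evaluating $\lambda\int_0^1 Q_s^{-1}\,ds$. Write $Q_s = C_s q_s$ with $q_s = \tfrac{1}{2\sigma_1^2} + \lambda\int_s^1 C_\tau^{-1}\,d\tau$. Then $\dot q_s = -\lambda/C_s$, so
\begin{equation*}
	\frac{\lambda}{Q_s} = \frac{\lambda}{C_s q_s} = -\frac{\dot q_s}{q_s} = -\frac{d}{ds}\log q_s .
\end{equation*}
This is a logarithmic derivative, so the integral telescopes:
\begin{equation*}
	\lambda\int_0^1 Q_s^{-1}\,ds = \log(q_0/q_1).
\end{equation*}
From \cref{eq:sigma_over_C_integral} we have $\int_0^1 C_\tau^{-1}\,d\tau = \pi/(2\sigma_1)$. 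Thus $q_0 = \tfrac{1}{2\sigma_1^2} + \tfrac{\pi\lambda}{2\sigma_1}$ and $q_1 = \tfrac{1}{2\sigma_1^2}$, so $q_0/q_1 = 1 + \pi\lambda\sigma_1$. Combining the two integrals gives
\begin{equation*}
	y_1 = \frac{\sigma_1}{1+\pi\lambda\sigma_1}\, y_0 .
\end{equation*}
I expect spotting this telescoping structure to be the only nontrivial point. If it were missed, one could instead integrate $\lambda/(C_s q_s)$ directly through the substitution $z = s/(1-s)$, but the logarithmic-derivative route avoids any arctangent bookkeeping.

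Finally, as in \cref{prop:myopic_terminal}, $x_0 \sim \Normal(0,1)$ and $x_0^{\mathrm{OC}} = (a-\mu_1)/\sigma_1$, since $M_0 = \sigma_1$. Hence $y_0 \sim \Normal(-(a-\mu_1)/\sigma_1, 1)$, and the linear map gives
\begin{equation*}
	y_1 \sim \Normal\!\left(\frac{\mu_1-a}{1+\pi\lambda\sigma_1},\ \frac{\sigma_1^2}{(1+\pi\lambda\sigma_1)^2}\right).
\end{equation*}
Since $x_1^{\mathrm{OC}} = a$, we have $x_1 = a + y_1$, which yields \cref{eq:exact_terminal_closed}.

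As a sanity check, setting $\lambda = 0$ recovers $\rho_1$. Moreover, $\sigma_{\mathrm{exact}}^2 \ge \sigma_{\mathrm{greedy}}^2$ because $1+x \le e^x$, consistent with \cref{prop:greedy_early_stopping}.
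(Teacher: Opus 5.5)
Your proposal is correct and follows essentially the same route as the paper. You reduce to the scalar linear ODE for the deviation $y_t = x_t - x_t^{\mathrm{OC}}$, evaluate the contraction through the logarithmic-derivative identity $\lambda P_t = -\frac{d}{dt}\log q_t$ to get $\lambda\int_0^1 P_t\,dt = \log(1+\pi\lambda\sigma_1)$, and push the Gaussian $y_0$ forward linearly to $x_1 = a + y_1$. Your sanity checks (recovering $\rho_1$ at $\lambda = 0$, and $\sigma_{\mathrm{exact}}^2 \ge \sigma_{\mathrm{greedy}}^2$ via $1+x \le e^x$) are also correct.
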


\begin{proof}
	The exact optimal control~\cref{eq:lqr_control} is linear in $x$, so the terminal distribution is Gaussian.
	The deviation argument proceeds identically to the greedy case in~\cref{prop:myopic_terminal}, with $y_t := x_t - x_t^{\mathrm{OC}}$ and $x_t^{\mathrm{OC}} = x_t^{\mathrm{M}}$ by~\cref{prop:lqr_solution}, yielding the linear ODE
	\begin{equation}
		\dot{y}_t = \left(\frac{\dot{C}_t}{2C_t} - \lambda P_t\right) y_t =: \tilde{A}_t^*\, y_t.
	\end{equation}
	The contraction factor $e^{\int_0^1 \tilde{A}_t^*\,dt} = \sigma_1/(1 + \pi\lambda\sigma_1)$ follows from~\cref{eq:Ct_log_integral} together with
	\begin{equation}
		\label{eq:lambda_P_integral}
		\lambda \int_0^1 P_t\,dt = \log(1 + \pi\lambda\sigma_1),
	\end{equation}
	which we now derive.
	From~\cref{eq:riccati_solution}, $P_t = 1/(C_t q_t)$ with $q_t = \frac{1}{2\sigma_1^2} + \lambda \int_t^1 \frac{d\tau}{C_\tau}$, so that $\dot{q}_t = -\lambda/C_t$ and $\lambda P_t = -\dot{q}_t / q_t = -\frac{d}{dt}\log q_t$.
	Integrating gives $\lambda \int_0^1 P_t\,dt = \log(q_0/q_1)$, and using $q_1 = 1/(2\sigma_1^2)$ together with $q_0 = \frac{1}{2\sigma_1^2} + \lambda \cdot \pi/(2\sigma_1) = \frac{1 + \pi\lambda\sigma_1}{2\sigma_1^2}$ (the integral $\int_0^1 d\tau/C_\tau = \pi/(2\sigma_1)$ is~\cref{eq:sigma_over_C_integral} divided by $\sigma_1^2$) yields~\cref{eq:lambda_P_integral}.
	Since $x_0 \sim \Normal(0, 1)$ and $x_0^{\mathrm{OC}} = x_0^{\mathrm{M}} = (a - \mu_1)/\sigma_1$ by~\cref{prop:lqr_solution}, we have $y_0 \sim \Normal\bigl(-(a-\mu_1)/\sigma_1,\, 1\bigr)$, so
	\begin{equation}
		y_1 \sim \Normal\!\left(\frac{\mu_1 - a}{1 + \pi\lambda\sigma_1},\; \frac{\sigma_1^2}{(1 + \pi\lambda\sigma_1)^2}\right).
	\end{equation}
	Since $x_1 = a + y_1$, we obtain~\cref{eq:exact_terminal_closed}.
\end{proof}

\subsubsection{Comparison of guidance schemes}
\label{app:gaussian_comparison}

We now compare the three guidance schemes using the closed-form terminal distributions established above.
With the means and variances of all three terminal distributions in hand (\cref{prop:reward_tilt}, \cref{prop:myopic_terminal}, and \cref{prop:exact_terminal}), we can collect the corresponding expected rewards in closed form.

\begin{corollary}[Closed-form expected terminal rewards]
	\label{prop:gaussian_expected_reward}
	Under the setting of~\cref{prop:reward_tilt,prop:myopic_terminal,prop:exact_terminal}, the expected terminal rewards are
	\begin{equation}
		\label{eq:closed_form_expected_rewards}
		\begin{aligned}
			\E[r(X_1^{\mathrm{tilt}})]   & = -\frac{\sigma_1^2(1+2\lambda\sigma_1^2)+(\mu_1-a)^2}{(1+2\lambda\sigma_1^2)^2}, \\
			\E[r(X_1^{\mathrm{greedy}})] & = -\bigl(\sigma_1^2 + (\mu_1-a)^2\bigr)\, e^{-2\pi\lambda\sigma_1},                \\
			\E[r(X_1^{\mathrm{exact}})]  & = -\frac{\sigma_1^2 + (\mu_1-a)^2}{(1+\pi\lambda\sigma_1)^2}.
		\end{aligned}
	\end{equation}
\end{corollary}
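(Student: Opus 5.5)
The plan is to use the bias--variance decomposition of a quadratic reward under a Gaussian law. If $X \sim \Normal(\mu, \sigma^2)$ and $r(x) = -(x-a)^2$, then
\[
\E[r(X)] = -\bigl(\sigma^2 + (\mu - a)^2\bigr).
\]
So each of the three expected rewards follows once we know the terminal mean and variance. All of these are already available: the tilt from \cref{prop:reward_tilt}, greedy guidance from \cref{prop:myopic_terminal}, and exact optimal control from \cref{prop:exact_terminal}. Each of those results establishes that the terminal law is Gaussian, so the decomposition applies directly. No further dynamics need to be solved.

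First, the tilt. We have $\mu_{\mathrm{tilt}} - a = (\mu_1 + 2\lambda\sigma_1^2 a - a - 2\lambda\sigma_1^2 a)/(1+2\lambda\sigma_1^2) = (\mu_1 - a)/(1+2\lambda\sigma_1^2)$. Squaring this and adding $\sigma_{\mathrm{tilt}}^2 = \sigma_1^2/(1+2\lambda\sigma_1^2)$ over the common denominator $(1+2\lambda\sigma_1^2)^2$ gives the stated expression.

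Second, greedy guidance. Here $\mu_{\mathrm{greedy}} - a = (\mu_1 - a)e^{-\pi\lambda\sigma_1}$, so the squared bias is $(\mu_1-a)^2 e^{-2\pi\lambda\sigma_1}$. This carries the same exponential factor as $\sigma_{\mathrm{greedy}}^2 = \sigma_1^2 e^{-2\pi\lambda\sigma_1}$, and factoring it out gives $-(\sigma_1^2 + (\mu_1-a)^2)e^{-2\pi\lambda\sigma_1}$.

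Third, exact optimal control. Here $\mu_{\mathrm{exact}} - a = (\mu_1-a)/(1+\pi\lambda\sigma_1)$, and the variance has the same squared denominator $(1+\pi\lambda\sigma_1)^2$. Factoring that denominator out gives the claimed form.

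None of these steps is a real obstacle; the work is bookkeeping. The one point needing care is the tilt case. There the variance scales as $(1+2\lambda\sigma_1^2)^{-1}$ while the squared bias scales as $(1+2\lambda\sigma_1^2)^{-2}$, so the numerator keeps the extra factor $(1+2\lambda\sigma_1^2)$ on $\sigma_1^2$. The tilt is taken with the $\lambda$-scaled reward $e^{\lambda r}$, which matches \cref{prop:reward_tilt}.
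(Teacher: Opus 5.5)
Your proposal is correct and follows the paper's proof: apply $\E[r(X)] = -\bigl(\sigma^2 + (\mu-a)^2\bigr)$ for Gaussian $X$, then substitute the means and variances from the three preceding propositions. You also spell out $\mu_{\mathrm{tilt}} - a = (\mu_1-a)/(1+2\lambda\sigma_1^2)$, which the paper leaves implicit, and your algebra for all three cases checks out.
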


\begin{proof}
	For any Gaussian $X \sim \Normal(\mu, \sigma^2)$, the quadratic reward $r(x) = -(x - a)^2$ has expectation
	\begin{equation}
		\E[r(X)] = -\bigl(\Var(X) + (\E[X] - a)^2\bigr) = -\bigl(\sigma^2 + (\mu - a)^2\bigr).
	\end{equation}
	Applying this identity to the closed-form means and variances from~\cref{eq:tilt_params,eq:myopic_terminal_closed,eq:exact_terminal_closed} yields~\cref{eq:closed_form_expected_rewards}.
\end{proof}

With these results in hand, we now interpret and compare the three guidance schemes.

\paragraph{Controls.}
Both trajectory-based schemes produce linear-in-$x$ controls that drive trajectories toward a common reference path ending at the reward maximum $a$:
\begin{equation}
	\begin{aligned}
		\text{Greedy:} \quad
		 & u_t^{\mathrm{M}}(x) \propto
		- M_t^2\,\bigl(x - x_t^{\mathrm{M}}\bigr),
		\qquad
		 &                              & x_t^{\mathrm{M}} := t\mu_1 + \frac{a-\mu_1}{M_t}, \\
		\text{Exact OC:} \quad
		 & u_t^{\mathrm{OC}}(x) \propto
		- P_t\,\bigl(x - x_t^{\mathrm{OC}}\bigr),
		\qquad
		 &                              & x_t^{\mathrm{OC}} = x_t^{\mathrm{M}}.
	\end{aligned}
\end{equation}
The greedy and exact controls share the same reference trajectory, differing only in the gain: $M_t^2$ versus $P_t$.

\paragraph{Variances.}
The closed-form variances collected from~\cref{prop:reward_tilt,prop:myopic_terminal,prop:exact_terminal} are
\begin{equation}
	\label{eq:variance_summary}
	\begin{aligned}
		\text{Tilt:} \quad     & \sigma_{\mathrm{tilt}}^2 = \frac{\sigma_1^2}{1 + 2\lambda\sigma_1^2}    &  & \sim\; \frac{1}{\lambda},        \\[4pt]
		\text{Greedy:} \quad   & \sigma_{\mathrm{greedy}}^2 = \sigma_1^2 e^{-2\pi\lambda\sigma_1}        &  & \sim\; e^{-2\pi\lambda\sigma_1}, \\[4pt]
		\text{Exact OC:} \quad & \sigma_{\mathrm{exact}}^2 = \frac{\sigma_1^2}{(1+\pi\lambda\sigma_1)^2} &  & \sim\; \frac{1}{\lambda^2},
	\end{aligned}
\end{equation}
where the asymptotics hold as $\lambda \to \infty$.
Reward tilting concentrates at the polynomial rate $O(1/\lambda)$, exact optimal control at the faster $O(1/\lambda^2)$, and greedy guidance contracts exponentially.
This exponential scaling arises because greedy guidance applies an instantaneous contraction rate proportional to the gain without accounting for future control effort, while the exact gain $P_t$ is softened by the Riccati equation.
Although greedy and exact gains agree to leading order as $\lambda \to 0$ (cf.\ \cref{prop:lambda-expansion}), they diverge substantially for moderate and large $\lambda$, with greedy guidance collapsing more aggressively.
The practical implication is that $\lambda$ must be scaled differently between the frameworks to achieve comparable behavior, a consideration relevant to prevent reward hacking.
\Cref{fig:gaussian_lambda_sweep} illustrates these differences.

\paragraph{Expected reward.}
The closed-form expected rewards from~\cref{prop:gaussian_expected_reward} are
\begin{equation}
	\label{eq:expected_reward_summary}
	\begin{aligned}
		\text{Tilt:} \quad     & \E[r(X_1^{\mathrm{tilt}})] = -\frac{\sigma_1^2(1+2\lambda\sigma_1^2)+(\mu_1-a)^2}{(1+2\lambda\sigma_1^2)^2} &  & \sim\; -\frac{1}{\lambda},        \\[4pt]
		\text{Greedy:} \quad   & \E[r(X_1^{\mathrm{greedy}})] = -(\sigma_1^2 + (\mu_1-a)^2)e^{-2\pi\lambda\sigma_1}                          &  & \sim\; -e^{-2\pi\lambda\sigma_1}, \\[4pt]
		\text{Exact OC:} \quad & \E[r(X_1^{\mathrm{exact}})] = -\frac{\sigma_1^2 + (\mu_1-a)^2}{(1+\pi\lambda\sigma_1)^2}                    &  & \sim\; -\frac{1}{\lambda^2},
	\end{aligned}
\end{equation}
where the asymptotics hold as $\lambda \to \infty$.
Greedy guidance achieves exponentially higher reward (closer to zero) compared to the polynomial rates for exact optimal control and reward tilting.
This quantifies the reward-diversity trade-off: greedy guidance achieves higher expected reward at the cost of more aggressive variance reduction.

\paragraph{Early stopping.}
One practical strategy to mitigate variance collapse is \emph{early stopping}: applying guidance only for $t \in [0, t_{\mathrm{stop}}]$ and then following the uncontrolled flow for $t \in [t_{\mathrm{stop}}, 1]$.
In our numerical experiments in the main text, we found this immensely useful for reducing the effect of reward hacking.
This truncates the contraction integral, leading to the closed-form expression in~\cref{eq:myopic_variance_early_stop_closed_form}.
\Cref{fig:gaussian_early_stopping} demonstrates this effect: at early stopping times (e.g., $t_{\mathrm{stop}} = 0.3$), the greedy guidance maintains variance comparable to the target, while still shifting the mean toward the reward maximum.
This provides a tunable trade-off between reward optimization and diversity preservation.

\begin{figure}[t]
	\centering
	\includegraphics[width=\textwidth]{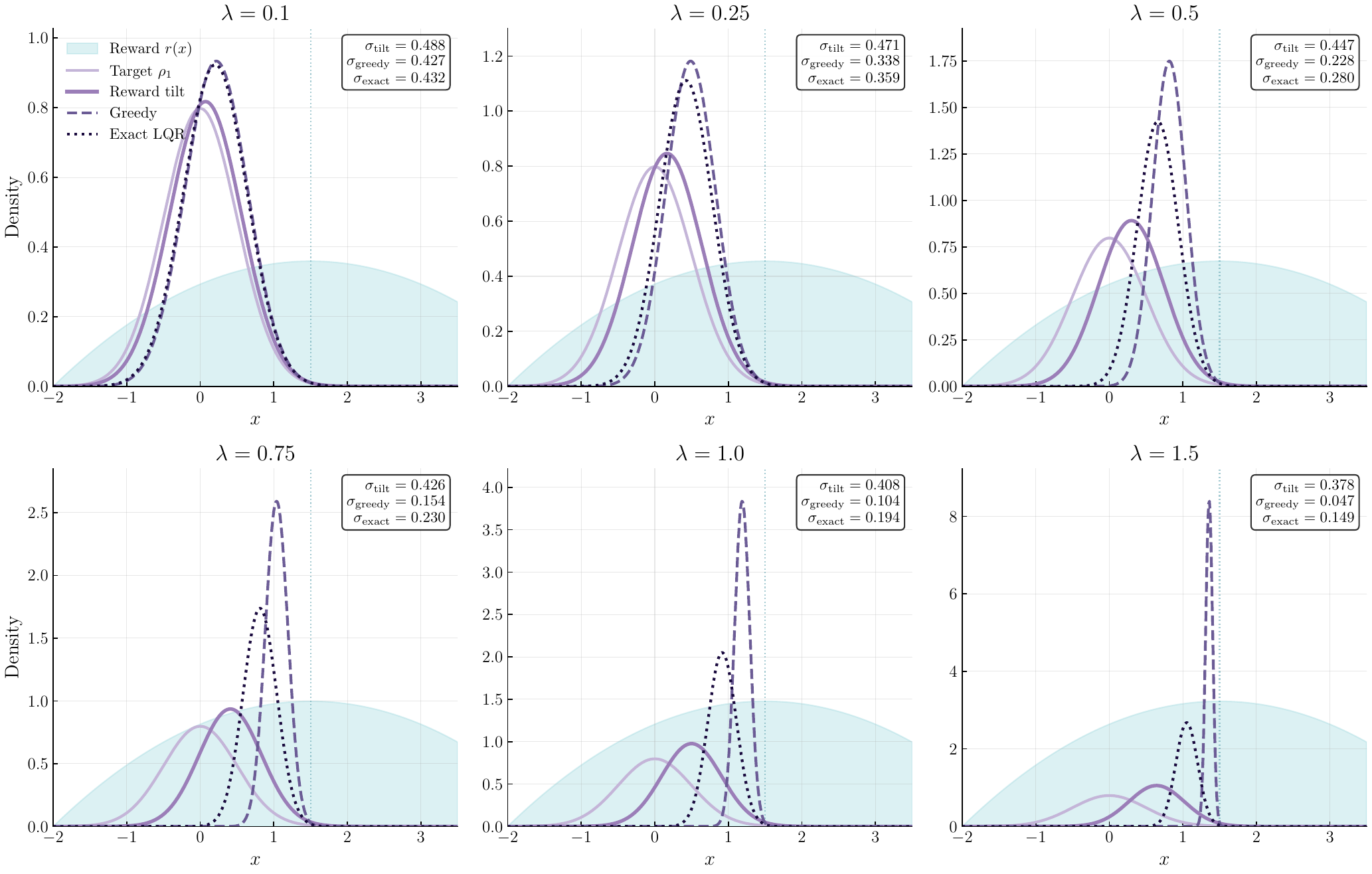}
	\caption{\textbf{Effect of guidance strength $\lambda$ on terminal distributions.}
		We compare the reward-tilted distribution (solid), greedy guidance (dashed), and exact LQR (dotted) for a Gaussian target $\rho_1 = \Normal(0, 0.5^2)$ with quadratic reward $r(x) = -(x - 1.5)^2$.
		As $\lambda$ increases, greedy guidance exhibits exponential variance collapse, while the exact LQR contracts at the slower rational-squared rate established in~\cref{prop:exact_terminal}.
		Both trajectory-based methods concentrate more aggressively than the reward tilt.
		These results highlight a need to carefully tune $\lambda$ for trajectory-level guidance, and that its choice requires a different scaling than the temperature parameter in typical exponential reward tilting.}
	\label{fig:gaussian_lambda_sweep}
\end{figure}

\begin{figure}[t]
	\centering
	\includegraphics[width=\textwidth]{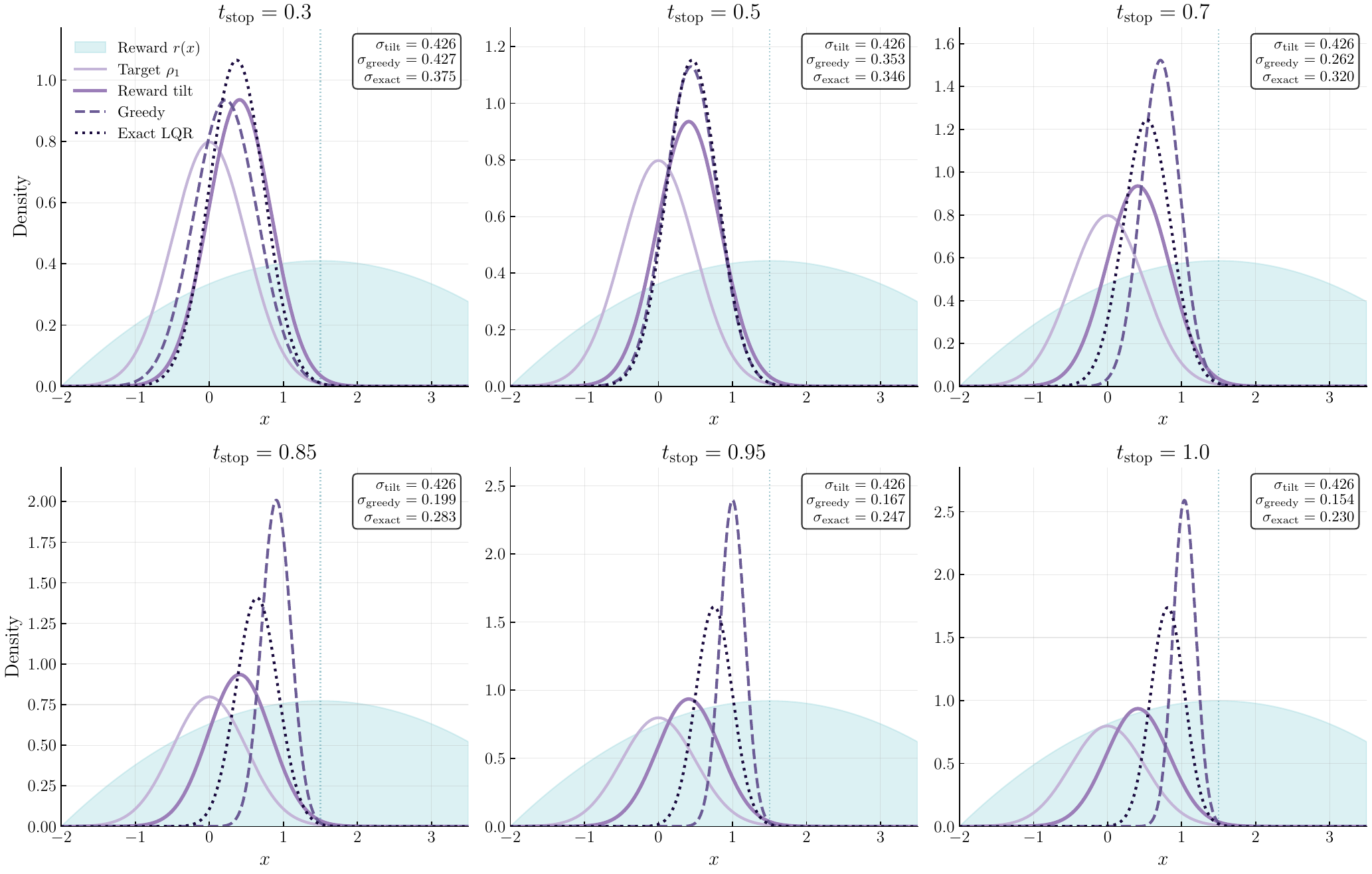}
	\caption{\textbf{Early stopping prevents variance collapse.}
		Same setup as~\cref{fig:gaussian_lambda_sweep} with fixed $\lambda = 0.75$, but guidance is applied only until time $t_{\mathrm{stop}}$, after which trajectories follow the uncontrolled flow.
		At $t_{\mathrm{stop}} = 0.3$, greedy guidance preserves nearly all the target variance ($\sigma \approx 0.43$ vs $\sigma_1 = 0.5$) while still shifting the mean toward the reward.
		As $t_{\mathrm{stop}} \to 1$, we recover the full-guidance behavior with significant variance collapse.
		This provides a practical mechanism to tune the reward--diversity trade-off.}
	\label{fig:gaussian_early_stopping}
\end{figure}

\subsection{Proof of \Cref{prop:tangent_projection}}
\label{app:tangent_proof}
\tangentprojection*
\begin{proof}
	We first show that the image of the Jacobian is contained in the tangent space: $\mathrm{Im}(\nabla X_{t,1}(x_t)) \subseteq T_{x_1}\calM$.
	Fix an arbitrary direction $w \in \R^d$ with $\|w\| = 1$ and consider the one-parameter family $x_t + \epsilon w$ for $\epsilon \in \R$.
	Since $X_{t,1}$ maps $\R^d$ into $\calM$, the curve $\gamma(\epsilon) := X_{t,1}(x_t + \epsilon w)$ lies entirely on $\calM$ and passes through $x_1 = \gamma(0)$.
	Its tangent vector at $\epsilon = 0$ is therefore an element of the tangent space $T_{x_1}\calM$, by the standard characterization of tangent vectors on a smooth manifold as the velocities of smooth curves through the point.
	By the chain rule, this tangent vector is
	\begin{equation}
		\gamma'(0) = \nabla X_{t,1}(x_t)\, w.
	\end{equation}
	Since $w$ was arbitrary and the image $\mathrm{Im}(\nabla X_{t,1}(x_t))$ is spanned by $\{\nabla X_{t,1}(x_t) w : w \in \R^d\}$, we conclude $\mathrm{Im}(\nabla X_{t,1}(x_t)) \subseteq T_{x_1}\calM$.

	Now, for any $v \in \R^d$ and $w \in \R^d$:
	\begin{equation}
		\langle \nabla X_{t,1}(x_t)^\T v, w \rangle = \langle v, \nabla X_{t,1}(x_t) w \rangle.
	\end{equation}
	Since $\nabla X_{t,1}(x_t) w \in T_{x_1}\calM$ for all $w$, this inner product depends only on the projection of $v$ onto $T_{x_1}\calM$.
	Writing $v = v_\parallel + v_\perp$ with $v_\parallel \in T_{x_1}\calM$ and $v_\perp \perp T_{x_1}\calM$, we have
	\begin{equation}
		\langle v, \nabla X_{t,1}(x_t) w \rangle = \langle v_\parallel, \nabla X_{t,1}(x_t) w \rangle,
	\end{equation}
	since the component $v_\perp$ is orthogonal to anything in $T_{x_1}\calM$.
	Thus $\nabla X_{t,1}(x_t)^\T v = \nabla X_{t,1}(x_t)^\T v_\parallel$, completing the proof.
\end{proof}
\section{Existing methods as special cases of FMRG}
\label{app:special_cases}
\crefalias{section}{appendix}
We now show that the FMRG framework provides a unifying lens through which many existing guidance methods can be understood as special cases or approximations.
In~\cref{sec:methods}, we showed that DPS arises as a coarse approximation to the greedy guidance~\cref{eqn:flow_map_guidance} by replacing the flow map with a single Euler step.
Here, we extend this analysis to several other prominent methods, summarized in~\cref{tab:special_cases}, and provide detailed derivations showing how FlowDPS and FlowChef reduce to FMRG-E with different effective guidance weights.

\subsection{DPS as a one-step approximation}
\label{app:denoiser_euler}

We first give a brief review of DPS~\citep{chung_diffusion_2024}, which has been motivated as approximating~\cref{eqn:tilt_gradient} through two primary approximations.
First, the expectation over trajectories is replaced by a \textit{single} deterministic path.
Second, the unknown endpoint $x_1$ is replaced by the posterior mean $\hat{x}_1 = \E[x_1 \mid x_t]$, also called the \emph{denoiser}.
Together, these yield the guidance scheme
\begin{equation}
	\label{eqn:naive_guidance}
	\dot{x}_t = b_t(x_t) + \lambda_t(x_t) \nabla_{x_t} r\left(\hat{x}_1(x_t)\right), \quad \hat{x}_1(x_t) = \E\left[x_1 | x_t\right],
\end{equation}
where $\lambda: [0, 1] \times \R^d \to \R$ is the guidance strength.
While~\cref{eqn:naive_guidance} has been observed to work well in practice and is efficient, these approximations are heuristic: there is no principled sense in which samples from~\cref{eqn:naive_guidance} approximate the reward-tilted distribution $\tilde{\rho}_1$.
Here, we prove that the posterior mean (denoiser) used in DPS-style guidance is a one-step approximation to the flow map, establishing that DPS emerges as a coarse approximation to the greedy guidance~\cref{eqn:flow_map_guidance}.
That is, in the limit that the denoiser approximation is systematically refined, we recover~\cref{eqn:flow_map_guidance}.

We formalize the connection between the denoiser used in DPS and the flow-map endpoint in the following proposition, which shows that the posterior mean is determined by a single evaluation of the probability-flow velocity, and coincides with the one-step Euler approximation of the flow map under the linear interpolant.
\begin{proposition}[Denoiser as a one-step flow-map approximation]
	\label{prop:denoiser_euler}
	Let $I_t = \alpha_t x_0 + \beta_t x_1$ be a stochastic interpolant with boundary conditions $\alpha_0 = \beta_1 = 1$ and $\alpha_1 = \beta_0 = 0$, and let $b_t(x) = \E[\dot{I}_t \mid I_t = x]$ denote the associated probability-flow velocity.
	Then the posterior mean $\hat{x}_1(x) := \E[x_1 \mid I_t = x]$ admits the closed form
	\begin{equation}
		\label{eqn:posterior_mean_general}
		\hat{x}_1(x) = \frac{\alpha_t\, b_t(x) - \dot{\alpha}_t\, x}{\alpha_t \dot{\beta}_t - \dot{\alpha}_t \beta_t}.
	\end{equation}
	In particular, for the linear interpolant ($\alpha_t = 1-t$, $\beta_t = t$), the posterior mean reduces to the one-step Euler approximation of the flow map,
	\begin{equation}
		\label{eqn:posterior_mean_euler}
		\hat{x}_1(x) = x + (1 - t)\, b_t(x) =: X_{t, 1}^{\mathrm{Euler}}(x).
	\end{equation}
\end{proposition}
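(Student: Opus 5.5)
The plan is to express both the probability-flow velocity and the state $x$ itself in terms of the two conditional expectations $\hat{x}_0(x) := \E[x_0 \mid I_t = x]$ and $\hat{x}_1(x) := \E[x_1 \mid I_t = x]$. This gives a $2\times 2$ linear system that can be solved for $\hat{x}_1$.

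First, since $I_t = \alpha_t x_0 + \beta_t x_1$ is deterministic given $I_t$, conditioning on $I_t = x$ yields the identity
\begin{equation*}
x = \alpha_t\, \hat{x}_0(x) + \beta_t\, \hat{x}_1(x).
\end{equation*}
Second, differentiating the interpolant in time gives $\dot{I}_t = \dot{\alpha}_t x_0 + \dot{\beta}_t x_1$. By linearity of conditional expectation, the definition $b_t(x) = \E[\dot{I}_t \mid I_t = x]$ becomes
\begin{equation*}
b_t(x) = \dot{\alpha}_t\, \hat{x}_0(x) + \dot{\beta}_t\, \hat{x}_1(x).
\end{equation*}

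Next, I eliminate $\hat{x}_0$. Multiplying the second identity by $\alpha_t$ and the first by $\dot{\alpha}_t$, then subtracting, gives
\begin{equation*}
\alpha_t b_t(x) - \dot{\alpha}_t x = (\alpha_t \dot{\beta}_t - \dot{\alpha}_t \beta_t)\, \hat{x}_1(x).
\end{equation*}
Dividing by the determinant $\alpha_t\dot{\beta}_t - \dot{\alpha}_t\beta_t$ yields \cref{eqn:posterior_mean_general}.

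The only delicate point is justifying that this determinant is nonzero. I will argue it under the standing assumption on the interpolant; for monotone schedules with $\dot{\alpha}_t \le 0 \le \dot{\beta}_t$, not both zero, the determinant is strictly positive. I will flag that the formula degenerates at $t = 1$ for the linear case only in the sense that the factor $(1-t)$ vanishes, which is harmless. This step is the main obstacle, but it is a mild one.

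Finally, I specialize to the linear interpolant $\alpha_t = 1-t$, $\beta_t = t$. Then $\dot{\alpha}_t = -1$ and $\dot{\beta}_t = 1$, so the determinant is $(1-t) + t = 1$. The formula becomes $\hat{x}_1(x) = (1-t)\,b_t(x) + x$, which is exactly one forward Euler step of $\dot{x}_\tau = b_\tau(x_\tau)$ from time $t$ to $1$. This is \cref{eqn:posterior_mean_euler}.

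For \cref{prop:dps_hierarchy}, I then differentiate this identity in $x$ to get the Jacobian $I + (1-t)\nabla b_t(x)$. This is the one-step Euler approximation of $\nabla X_{t,1}$, namely the single explicit step of the variational equation \cref{eqn:jacobian_forward}.
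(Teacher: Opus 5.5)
Your proof is correct and is essentially the paper's argument: you use the same two conditional-expectation identities for $x$ and $b_t(x)$ and solve for $\hat{x}_1$, and your cross-multiplication elimination is marginally cleaner than the paper's substitution $\hat{x}_0 = (x - \beta_t\hat{x}_1)/\alpha_t$, which divides by $\alpha_t$ and so is undefined at $t=1$. One small correction concerns your sufficient condition for a nonzero determinant: at $t=0$ the determinant equals $\dot{\beta}_0$, which can vanish even when $\dot{\alpha}_0 \neq 0$ (e.g.\ $\alpha_t = 1-t$, $\beta_t = t^2$), so you should simply take $\alpha_t\dot{\beta}_t - \dot{\alpha}_t\beta_t \neq 0$ as a standing assumption, as the paper implicitly does.
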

\Cref{prop:denoiser_euler} makes precise the sense in which DPS-style guidance is a coarse approximation of the greedy guidance~\cref{eqn:flow_map_guidance}: both signals backpropagate $\nabla r$ through an estimate of the endpoint $X_{t,1}(x_t)$, but DPS uses the posterior mean, which by~\cref{eqn:posterior_mean_euler} coincides with a \emph{single} Euler step of the pre-trained velocity, while FMRG uses the full flow map, which is the limit of infinitely many Euler steps.
The derivation of~\cref{eqn:posterior_mean_general,eqn:posterior_mean_euler} is standard (see, e.g.,~\citep{albergo2023stochastic}); we include it here for completeness.
\begin{proof}
	By definition of the interpolant, $\dot{I}_t = \dot{\alpha}_t x_0 + \dot{\beta}_t x_1$, so that
	\begin{equation}
		b_t(x) = \E[\dot{I}_t \mid I_t = x] = \dot{\alpha}_t\, \hat{x}_0(x) + \dot{\beta}_t\, \hat{x}_1(x),
	\end{equation}
	where $\hat{x}_0(x) := \E[x_0 \mid I_t = x]$ and $\hat{x}_1(x) := \E[x_1 \mid I_t = x]$.
	Taking conditional expectations of the interpolant identity $I_t = \alpha_t x_0 + \beta_t x_1$ also gives the constraint $\alpha_t\, \hat{x}_0(x) + \beta_t\, \hat{x}_1(x) = x$, which eliminates $\hat{x}_0$ in favor of $\hat{x}_1$:
	\begin{equation}
		\hat{x}_0(x) = \frac{x - \beta_t\, \hat{x}_1(x)}{\alpha_t}.
	\end{equation}
	Substituting back into the expression for $b_t(x)$ yields
	\begin{equation}
		b_t(x) = \frac{\dot{\alpha}_t}{\alpha_t}\, x + \left(\dot{\beta}_t - \frac{\dot{\alpha}_t\, \beta_t}{\alpha_t}\right)\hat{x}_1(x),
	\end{equation}
	and solving for $\hat{x}_1(x)$ gives~\cref{eqn:posterior_mean_general}.
	For the linear interpolant, $\dot{\alpha}_t = -1$ and $\dot{\beta}_t = 1$, so the denominator in~\cref{eqn:posterior_mean_general} equals $\alpha_t\dot{\beta}_t - \dot{\alpha}_t\beta_t = (1-t) + t = 1$, and the numerator simplifies to $(1-t)\,b_t(x) + x$, yielding~\cref{eqn:posterior_mean_euler}.
\end{proof}
\subsection{FlowDPS as an approximation of FMRG-E}
\label{app:flowdps_reduction}

FlowDPS~\citep{kim_flowdps_2025} is a guidance scheme for inverse problems built on top of a pre-trained flow model.
We show here that, when rewritten in our convention ($t = 0$ noise, $t = 1$ data, linear interpolant $I_t = (1-t)x_0 + t x_1$), it is recovered from FMRG-E by replacing both flow map velocities that appear in the operator-splitting update~\cref{eqn:splitting} with the pre-trained instantaneous velocity $b_t = v_{t,t}$.

\paragraph{FlowDPS formulation.}
FlowDPS treats the noisy state $x_t$ as a convex combination of a data prediction and a noise prediction obtained from Tweedie's formula:
\begin{equation}
	\hat{x}_1 = x_t + (1 - t)\, b_t(x_t), \qquad \hat{x}_0 = x_t - t\, b_t(x_t),
\end{equation}
where $b_t = v_{t,t}$ is the instantaneous velocity of the pre-trained flow~\cref{eqn:tangent_condition}.
By~\cref{prop:denoiser_euler}, the data prediction $\hat{x}_1 = x_t + (1-t)\,b_t(x_t)$ is exactly the one-step Euler estimate of the flow-map endpoint $X_{t,1}(x_t) = x_t + (1-t)\,v_{t,1}(x_t)$, obtained by freezing the velocity at the current time $t$.
The guided data prediction is obtained by blending the Tweedie estimate with a reward-optimized estimate $\hat{x}_1^{\mathrm{opt}}$ that is computed via gradient descent on the measurement loss applied to $\hat{x}_1$:
\begin{equation}
	\label{eqn:flowdps_blend}
	\tilde{x}_1 = t\, \hat{x}_1 + (1-t)\, \hat{x}_1^{\mathrm{opt}}.
\end{equation}
Finally, the deterministic FlowDPS update to the next timestep $t_{+} > t$ recomposes $\tilde{x}_1$ and $\hat{x}_0$ using the linear interpolant at time $t_{+}$:
\begin{equation}
	\label{eqn:flowdps_update}
	x_{t_{+}} = t_{+}\, \tilde{x}_1 + (1 - t_{+})\, \hat{x}_0.
\end{equation}

\paragraph{Reduction to FMRG-E.}
We now unpack the FlowDPS update~\cref{eqn:flowdps_update} to exhibit its relationship with the FMRG-E operator splitting~\cref{eqn:splitting}.
In the unguided case $\hat{x}_1^{\mathrm{opt}} = \hat{x}_1$, the update reduces to the standard Euler step $x_{t_{+}} = x_t + \Delta t \cdot b_t(x_t)$, since $t\hat{x}_1 + (1-t)\hat{x}_1 = \hat{x}_1$ and $t_{+}\hat{x}_1 + (1-t_{+})\hat{x}_0 = x_t + \Delta t \, b_t(x_t)$.
The guidance correction is therefore the extra term produced by $\hat{x}_1^{\mathrm{opt}} - \hat{x}_1$:
\begin{equation}
	x_{t_{+}}^{\mathrm{guided}} - x_{t_{+}}^{\mathrm{unguided}} = t_{+}(1-t)(\hat{x}_1^{\mathrm{opt}} - \hat{x}_1).
\end{equation}
For a single gradient step of size $\eta$ on the FlowDPS measurement loss (with $r$ taken as the negative loss), we have $\hat{x}_1^{\mathrm{opt}} - \hat{x}_1 = -\eta\, \nabla_{\hat{x}_1} r(\hat{x}_1)$, so that the full FlowDPS update becomes
\begin{equation}
	\label{eqn:flowdps_as_fmrge}
	x_{t_{+}} = x_t + \Delta t \cdot b_t(x_t) - (1-t)\, t_{+}\, \eta\, \nabla_{\hat{x}_1} r(\hat{x}_1).
\end{equation}
Comparing term by term with the FMRG-E operator splitting~\cref{eqn:splitting}, the FlowDPS update~\cref{eqn:flowdps_as_fmrge} is obtained from FMRG-E by two substitutions of the learned flow map by its one-step Euler approximation: the flow-map step velocity $v_{t,t_{+}}$ is replaced with the diagonal velocity $v_{t,t}$, and the endpoint-lookahead velocity $v_{t_{+},1}$ is likewise replaced with $v_{t,t}$.

\subsection{FlowChef as an approximation of FMRG-E}
\label{app:flowchef_reduction}

\paragraph{FlowChef formulation.}
FlowChef~\citep{patel_flowchef_nodate} uses the same one-step Euler estimate of the flow-map endpoint, $\hat{x}_1 = x_t + (1-t)\, b_t(x_t)$, and applies a direct gradient correction to the Euler integration of the base flow (their Theorem~4.3):
\begin{equation}
	\label{eqn:flowchef_update}
	x_{t_{+}} = x_t + \Delta t \cdot b_t(x_t) - s'\, \nabla_{\hat{x}_1} r(\hat{x}_1),
\end{equation}
where $s'$ is a constant guidance scale.
The gradient term in~\cref{eqn:flowchef_update} is the reward gradient computed at the endpoint estimate $\hat{x}_1$ rather than at the current state $x_t$.
FlowChef obtains this form by a procedure they term \emph{gradient skipping}: their Lemma~4.2 derives the chain-rule expansion $\nabla_{x_t} r(\hat{x}_1) = \bigl(I + (1-t)\, J_{b_t}(x_t)\bigr)^\T \nabla_{\hat{x}_1} r(\hat{x}_1)$, where $J_{b_t}(x_t)$ denotes the Jacobian of the instantaneous velocity with respect to the state.
The Jacobian factor captures how a perturbation at $x_t$ propagates forward to $\hat{x}_1$ through the Euler step, but FlowChef discards this factor entirely and uses only the terminal gradient $\nabla_{\hat{x}_1} r(\hat{x}_1)$ in the update.

\paragraph{Reduction to FMRG-E.}
As with FlowDPS, \cref{eqn:flowchef_update} is obtained from FMRG-E by the same two replacements of flow-map velocities with instantaneous velocities: $v_{t,t_{+}} \to b_t$ for the stepping term, and $v_{t_{+},1} \to b_t$ for the endpoint lookahead used in the reward gradient.

\subsection{MPGD as an approximation of FMRG-E}
\label{app:mpgd_reduction}

Manifold Preserving Guided Diffusion (MPGD)~\citep{he_manifold_2023} was originally formulated for DDIM-based diffusion models.
Below, we cast it to the flow setting and show that it reduces to the same form as FlowDPS and FlowChef.

\paragraph{MPGD formulation in the diffusion setting.}
MPGD's key ``shortcut'' (their Equations~7--8) updates the Tweedie clean estimate $x_{0|t}$ directly rather than the noisy state $x_t$.
Concretely, a single MPGD step takes the form
\begin{align}
	x_{0|t} & \gets x_{0|t} - c_t\, \nabla_{x_{0|t}} r(x_{0|t}), \label{eqn:mpgd_grad}                                                  \\
	x_{t-1} & = \sqrt{\bar{\alpha}_{t-1}}\, x_{0|t} + \sqrt{1 - \bar{\alpha}_{t-1}}\, \epsilon_\theta(x_t, t), \label{eqn:mpgd_renoise}
\end{align}
where $c_t$ is a step size and the Tweedie estimate is $x_{0|t} = \frac{1}{\sqrt{\bar{\alpha}_t}}\bigl(x_t - \sqrt{1-\bar{\alpha}_t}\,\epsilon_\theta(x_t,t)\bigr)$.
The first line of~\cref{eqn:mpgd_grad} performs a reward-gradient descent on the clean estimate, and the second line re-noises $x_{0|t}$ back to the previous diffusion step $t - 1$ using the learned noise prediction $\epsilon_\theta$.
MPGD has two variants.
The default variant, which we call MPGD without projection, applies the gradient update directly to $x_{0|t}$.
A second variant, MPGD-AE, additionally passes $x_{0|t}$ through an autoencoder $D \circ E$ before computing the gradient, with the intent of enforcing that the gradient is computed at an on-manifold point.
In what follows we focus on the variant without projection, since it provides the cleanest comparison to FMRG-E.

\paragraph{Casting MPGD to a flow model.}
We translate the MPGD update to the deterministic flow setting.
For the linear interpolant, the Tweedie estimate becomes $\hat{x}_1 = x_t + (1-t)\, b_t(x_t)$ and the DDIM renoising reduces to an Euler step of the probability flow.
After the reward-gradient update~\cref{eqn:mpgd_grad}, the guided data estimate is
\begin{equation}
	\hat{x}_1^{\mathrm{guided}} = \hat{x}_1 - c_t\, \nabla_{\hat{x}_1} r(\hat{x}_1).
\end{equation}
Recomposing $\hat{x}_1^{\mathrm{guided}}$ with the noise prediction $\hat{x}_0 = x_t - t\, b_t(x_t)$ at the next timestep $t_{+}$ yields
\begin{equation}
	x_{t_{+}} = t_{+}\, \hat{x}_1^{\mathrm{guided}} + (1 - t_{+})\, \hat{x}_0 = \underbrace{x_t + \Delta t \cdot b_t(x_t)}_{\text{Euler step}} - t_{+}\, c_t\, \nabla_{\hat{x}_1} r(\hat{x}_1).
\end{equation}
This has the same structure as the FlowDPS and FlowChef updates~\cref{eqn:flowdps_as_fmrge,eqn:flowchef_update}, with an effective guidance weight $w_t^{\mathrm{MPGD}} = t_{+}\, c_t$.
For a constant MPGD step size $c_t$, the weight grows linearly in $t_{+}$, in contrast to the FlowDPS weight $w_t = (1-t)\,t_{+}$ and the constant FlowChef weight $w_t = s'$.

\paragraph{Relation to FMRG.}
Like FlowDPS and FlowChef, MPGD without projection makes the same two flow-map approximations, replacing $v_{t,t_{+}}$ with the diagonal velocity $b_t$ in the stepping term and $v_{t_{+},1}$ with $b_t$ in the endpoint-lookahead term.
MPGD-AE addresses the off-manifold issue at the endpoint using an explicit autoencoder projection.
FMRG addresses the same issue more naturally.
FMRG-E uses the on-manifold flow-map endpoint $X_{t,1}(x_t)$ instead of the off-manifold denoiser, and FMRG-J further projects reward gradients onto the tangent space via the flow-map Jacobian $\nabla X_{t,1}^\T$~(\cref{prop:tangent_projection}).

\subsection{Summary of approximations}
\label{sec:approx_summary}

Recall from~\cref{eqn:flow_map_param_app} that the flow map is parameterized as $X_{s,t}(x) = x + (t-s)\,v_{s,t}(x)$.
A single FMRG-E step (\cref{eqn:splitting}, Algorithm~\ref{alg:fmrg_e}) uses the flow map in two ways:
\begin{enumerate}
	\item \textbf{Step}: $\tilde{x}_{t_{+}} = X_{t,t_{+}}(x_t) = x_t + \Delta t\, v_{t, t_{+}}(x_t)$,
	\item \textbf{Lookahead}: $\hat{x}_1 = X_{t_{+}, 1}(\tilde{x}_{t_{+}}) = \tilde{x}_{t_{+}} + (1-t_{+})\, v_{t_{+}, 1}(\tilde{x}_{t_{+}})$.
\end{enumerate}
FlowDPS, FlowChef, and MPGD replace \emph{both} off-diagonal velocities with $b_t = v_{t,t}$:
\begin{enumerate}
	\item \textbf{Euler step}: $v_{t, t_{+}} \to b_t$, \quad so $\tilde{x}_{t_{+}} \approx x_t + \Delta t\, b_t(x_t)$,
	\item \textbf{Denoiser}: $v_{t_{+}, 1} \to b_t$, \quad so $\hat{x}_1 \approx x_t + (1-t)\, b_t(x_t)$.
\end{enumerate}
Both approximations reduce to the same operation: \emph{replacing the learned off-diagonal flow map velocity $v_{s,t}$ with the instantaneous velocity $b_t = v_{t,t}$} (\cref{eqn:tangent_condition}).
FMRG avoids this approximation entirely by using a pre-trained flow map, which provides exact trajectory integration and on-manifold endpoint predictions, enabling effective guidance in as few as $3$ NFEs.

\paragraph{Guidance weight.}
In addition to the flow map approximation, the effective guidance weight $w_t$ differs:
\begin{itemize}
	\item \textbf{FlowDPS} (deterministic): $w_t = (1-t) \cdot t_{+}$.
	\item \textbf{FlowChef}: $w_t = s'$ (constant).
	\item \textbf{MPGD}: $w_t = t_{+} \cdot c_t$.
	\item \textbf{FMRG-E}: $w_t = \lambda \cdot \Delta t$, from the optimal control formulation (\cref{prop:myopic_guidance}).
\end{itemize}

\subsection{ReNO and D-Flow}
\label{app:reno_dflow_reduction}

ReNO~\citep{eyring_reno_2024} and D-Flow~\citep{ben-hamu_d-flow_2024} take a different approach than the methods discussed above.
Rather than guiding at each step, they optimize the initial noise seed $x_0$ to maximize $r(G(x_0))$, where $G$ denotes the full generative process.
In our framework, this corresponds to applying FMRG guidance exclusively at $t = 0$, recovering the seed-optimization special case noted in~\cref{sec:algorithm} (where we highlight that taking multiple gradient steps before the first flow step recovers seed optimization).
The two methods differ in how they evaluate the terminal gradient $\nabla_{x_0} r(X_{0,1}(x_0))$.
ReNO uses a one-step generator as a fast approximation to the flow map, while D-Flow backpropagates through a multi-step ODE solver.
Both methods are therefore instances of FMRG-J restricted to $t = 0$ that approximate the flow map $X_{0,1}$ by different mechanisms, computing $\nabla X_{0,1}(x_0)^\T \nabla r(X_{0,1}(x_0))$ under each respective approximation.
The key limitation of this family of approaches is that they cannot correct the trajectory during generation, so that all guidance must be front-loaded into the initial condition.

\section{Experimental details}
\label{app:exp_details}
\crefalias{section}{appendix}

\subsection{Flow map model}
Experiments use flow maps distilled from FLUX.1-Dev~\citep{blackforestlabs2024flux1dev} via Lagrangian distillation~\citep{boffi_how_2025} and parameterized as LoRA adapters: a rank-32 LoRA at 256-resolution for the inverse-problem experiments (\S5.1), and a rank-64 LoRA at 512-resolution trained for 43{,}000 steps for the reward-guided experiments (\S5.3). The pre-trained checkpoints are publicly available at \url{https://huggingface.co/gabeguofanclub/flux-1-dev-flowmap-lsd}.
All methods use FLUX.1's embedded guidance scale of $3.5$.
We follow the convention $t = 0$ (noise) to $t = 1$ (data) throughout.

\subsection{Practical implementation}

We define the key algorithmic parameters that appear in the per-task configurations below.
$n_{\mathrm{opt}}$ denotes the number of inner gradient steps per guided step (see~\cref{eqn:multi_step}).
$t_{\mathrm{stop}} \in (0, 1]$ is the early stopping time: guidance is applied for $t \in [0, t_{\mathrm{stop}}]$, after which the remaining trajectory is completed in a single unguided flow map step $X_{t_{\mathrm{stop}}, 1}$ (smaller $t_{\mathrm{stop}}$ = earlier stopping).
$\eta$ is the step size, which enters through $\lambda_t$ as described below.

\paragraph{Flow map stepping.}
Each guided step requires evaluating the flow map endpoint $X_{t,1}(x_t)$ for the reward gradient.
As described in~\cref{sec:algorithm}, stepping can use either a separate evaluation $X_{t,t_+}(x_t)$ ($2$ NFEs per step) or reuse the endpoint $X_{t,1}(x_t)$ for both the reward gradient and stepping ($1$ NFE per step).
We find that reusing the endpoint performs better at low NFEs, where the savings allow more guided steps within the same budget.
At higher NFEs, the approximation error accumulates and separate evaluations perform better.
In practice, we use $1$ NFE per step for low-NFE configurations and $2$ NFEs per step for high-NFE configurations; the specific choice is noted in each task's configuration table.

\paragraph{Guidance strength $\lambda_t$.}
Recall from Algorithms~\ref{alg:fmrg_j}--\ref{alg:fmrg_e} that each gradient update takes the form $x \gets x + \lambda_t \cdot u$, where $u$ is the guidance signal.
In practice, we set $\lambda_t$ as follows; $\eta$ is the single tuned constant in each case.

\textbf{FMRG-E.}
We set $\lambda_t = \eta \cdot t \cdot (1 - t_+)$, where $t_+$ is the next timestep.
This weighting, which downweights the update near the endpoints, works well in practice.

\textbf{FMRG-J.}
We normalize the guidance signal to unit norm and rescale to the flow map velocity magnitude, giving $\tilde{u} = (u / \|u\|) \cdot \|v_{t,t_+}(x)\|$, and set $\lambda_t = \eta \cdot \Delta t$.
This keeps $\eta$ invariant to the choice of reward model, which is important in practice since different rewards (e.g., human preference models vs.\ $\ell_2$ losses) exhibit vastly different gradient scales.

\subsection{Latent-space inverse problems}
\label{app:inverse_problems}

\paragraph{Reward.}
The reward is the negative $\ell_2$ reconstruction loss $r(x) = -\|Ax - y\|^2$, where $A\in\R^{d_o\times d}$ is the degradation operator and $y\in\R^{d_o}$ is the observation.
Since the generative model operates in a latent space of dimension $d_e < d$, we decode the latent via the VAE decoder $D:\R^{d_e}\to\R^d$ and apply $A$ in pixel space:
\begin{equation}
	\label{eqn:ip_loss}
	r(z) = -\|A\, D(z) - y\|^2.
\end{equation}
The degradation operators we consider are $4\times$ average-pooling for super-resolution, a $61 \times 61$ motion blur kernel with intensity $0.5$ for deblurring, and a centered $64 \times 64$ binary mask for inpainting.
Gaussian noise with $\sigma = 0.03$ is added to all observations.

\paragraph{Datasets.}
We evaluate on AFHQ-Dog~\citep{choi2020starganv2} and FFHQ~\citep{karras2019style}, using 1000 images at $256 \times 256$ resolution for each.
Text prompts are ``a photo of a dog'' for AFHQ and ``a photo of a person'' for FFHQ.

\paragraph{Method configurations.}
\Cref{tab:method_params_ip} summarizes the method-specific parameters.
FlowDPS and FlowChef compute gradients with respect to $z_0$ using the instantaneous velocity from the base flow model rather than the flow map.
For FMRG-E at NFE $\leq 30$, we reuse the endpoint evaluation $X_{t,1}$ for stepping; at higher NFEs we use separate evaluations for lookahead and stepping~(\cref{sec:algorithm}).

\begin{table}[t]
	\centering
	\caption{\textbf{Method parameters for inverse problems.}}
	\label{tab:method_params_ip}
	\vspace{0.5em}
	\small
	\begin{tabular}{l ccccc}
		\toprule
		\textbf{Method} & \textbf{Grad.\ target} & \textbf{Velocity} & $n_{\mathrm{opt}}$ & \textbf{Grad.\ norm} & \textbf{Early stop}        \\
		\midrule
		FMRG-E          & $z_0$                  & flow map          & 5                  & No                   & $t_{\mathrm{stop}} = 0.67$ \\
		FMRG-J          & $z_t$                  & flow map          & 1                  & Yes                  & No                         \\
		FlowDPS         & $z_0$                  & instantaneous     & 5                  & No                   & No                         \\
		FlowChef        & $z_0$                  & instantaneous     & 5                  & No                   & No                         \\
		DPS             & $z_t$                  & instantaneous     & 1                  & No                   & No                         \\
		\bottomrule
	\end{tabular}
\end{table}

\paragraph{Hyperparameter selection.}
Step sizes, number of steps, and $n_{\mathrm{opt}}$ are selected via grid search on a held-out set of $50$ images, with the best configuration evaluated on $1000$ separate images.
For FMRG-E and FMRG-J, we fix the number of steps to $50$ and $200$ respectively, and search only over step sizes.
\Cref{tab:ip_configs} reports the selected hyperparameters for each method and task.

\begin{table}[t]
	\centering
	\caption{\textbf{Selected hyperparameters for inverse problems.}}
	\label{tab:ip_configs}
	\vspace{0.5em}
	\footnotesize
		\begin{tabular}{l l cccc cccc}
			\toprule
			              &                 & \multicolumn{4}{c}{\textbf{AFHQ}} & \multicolumn{4}{c}{\textbf{FFHQ}}                                                                                                     \\
			\cmidrule(lr){3-6} \cmidrule(lr){7-10}
			\textbf{Task} & \textbf{Method} & \textbf{NFE}                      & $\eta$                            & $n_{\mathrm{opt}}$ & \textbf{Steps} & \textbf{NFE} & $\eta$ & $n_{\mathrm{opt}}$ & \textbf{Steps} \\
			\midrule
			\multirow{5}{*}{\rotatebox[origin=c]{90}{\textbf{SR}}}
			              & DPS             & 200                               & 10.0                              & 1                  & 200            & 200          & 50.0   & 1                  & 200            \\
			              & FlowChef        & 100                               & 1.0                               & 5                  & 100            & 100          & 1.0    & 3                  & 100            \\
			              & FlowDPS         & 200                               & 100.0                             & 1                  & 200            & 100          & 50.0   & 1                  & 100            \\
			              & FMRG-E          & 100                               & 6                                 & 5                  & 50             & 100          & 6      & 5                  & 50             \\
			              & FMRG-J          & 400                               & 5.0                               & 1                  & 200            & 400          & 3.0    & 1                  & 200            \\
			\midrule
			\multirow{5}{*}{\rotatebox[origin=c]{90}{\textbf{Deblur}}}
			              & DPS             & 100                               & 10.0                              & 1                  & 100            & 200          & 200.0  & 1                  & 200            \\
			              & FlowChef        & 200                               & 3.0                               & 5                  & 200            & 200          & 3.0    & 5                  & 200            \\
			              & FlowDPS         & 200                               & 50.0                              & 1                  & 200            & 200          & 50.0   & 1                  & 200            \\
			              & FMRG-E          & 100                               & 6                                 & 5                  & 50             & 100          & 6      & 5                  & 50             \\
			              & FMRG-J          & 400                               & 5.0                               & 1                  & 200            & 400          & 3.0    & 1                  & 200            \\
			\midrule
			\multirow{5}{*}{\rotatebox[origin=c]{90}{\textbf{Inpaint}}}
			              & DPS             & 200                               & 50.0                              & 1                  & 200            & 200          & 200.0  & 1                  & 200            \\
			              & FlowChef        & 200                               & 10.0                              & 1                  & 200            & 100          & 50.0   & 5                  & 100            \\
			              & FlowDPS         & 200                               & 200.0                             & 1                  & 200            & 200          & 200.0  & 1                  & 200            \\
			              & FMRG-E          & 100                               & 6                                 & 5                  & 50             & 100          & 6      & 5                  & 50             \\
			              & FMRG-J          & 400                               & 5.0                               & 1                  & 200            & 400          & 3.0    & 1                  & 200            \\
			\bottomrule
		\end{tabular}
\end{table}

\paragraph{Early stopping ablation.}
\Cref{tab:early_stop} shows the effect of early stopping on FMRG-E for super-resolution.
Early stopping helps at low NFEs but has diminishing returns at higher NFEs.

\begin{table}[t]
	\centering
	\caption{\textbf{Early stopping ablation (inverse problems).} LPIPS$\downarrow$ / KID$\downarrow$ on super-resolution for FMRG-E.}
	\label{tab:early_stop}
	\vspace{0.5em}
	\small
	\begin{tabular}{c cc cc}
		\toprule
		             & \multicolumn{2}{c}{\textbf{AFHQ}} & \multicolumn{2}{c}{\textbf{FFHQ}}                             \\
		\cmidrule(lr){2-3} \cmidrule(lr){4-5}
		\textbf{NFE} & No ES                             & ES                                & No ES       & ES          \\
		\midrule
		6            & .228 / .010                       & .231 / .009                       & .248 / .049 & .234 / .047 \\
		12           & .220 / .011                       & .204 / .006                       & .217 / .050 & .191 / .043 \\
		30           & .206 / .015                       & .184 / .008                       & .197 / .059 & .180 / .046 \\
		\bottomrule
	\end{tabular}
\end{table}

\paragraph{Text prompting ablation.}
We ablate the effect of the text prompt on inverse problem performance.
FMRG exhibits minimal degradation when varying or removing the text prompt, indicating robustness to prompt choice.
In contrast, FlowDPS and FlowChef degrade noticeably.
\Cref{fig:afhq_uncond,fig:ffhq_uncond} show full metrics when the text prompt is removed entirely (unconditional generation): the performance gap between FMRG and the baselines widens substantially in this setting, with FlowDPS and FlowChef degrading significantly while FMRG remains robust.

\paragraph{Full metric plots and qualitative results.}
\Cref{fig:afhq_full_metrics,fig:ffhq_full_metrics} show full metrics across NFEs for AFHQ and FFHQ with text-conditional generation.
\Cref{fig:afhq_sr_grid,fig:afhq_motion_grid,fig:afhq_inpaint_grid,fig:ffhq_motion_grid,fig:ffhq_inpaint_grid} show non-cherry-picked qualitative comparisons on randomly selected images.
\begin{figure}[t]
	\centering
	\includegraphics[width=0.7\textwidth]{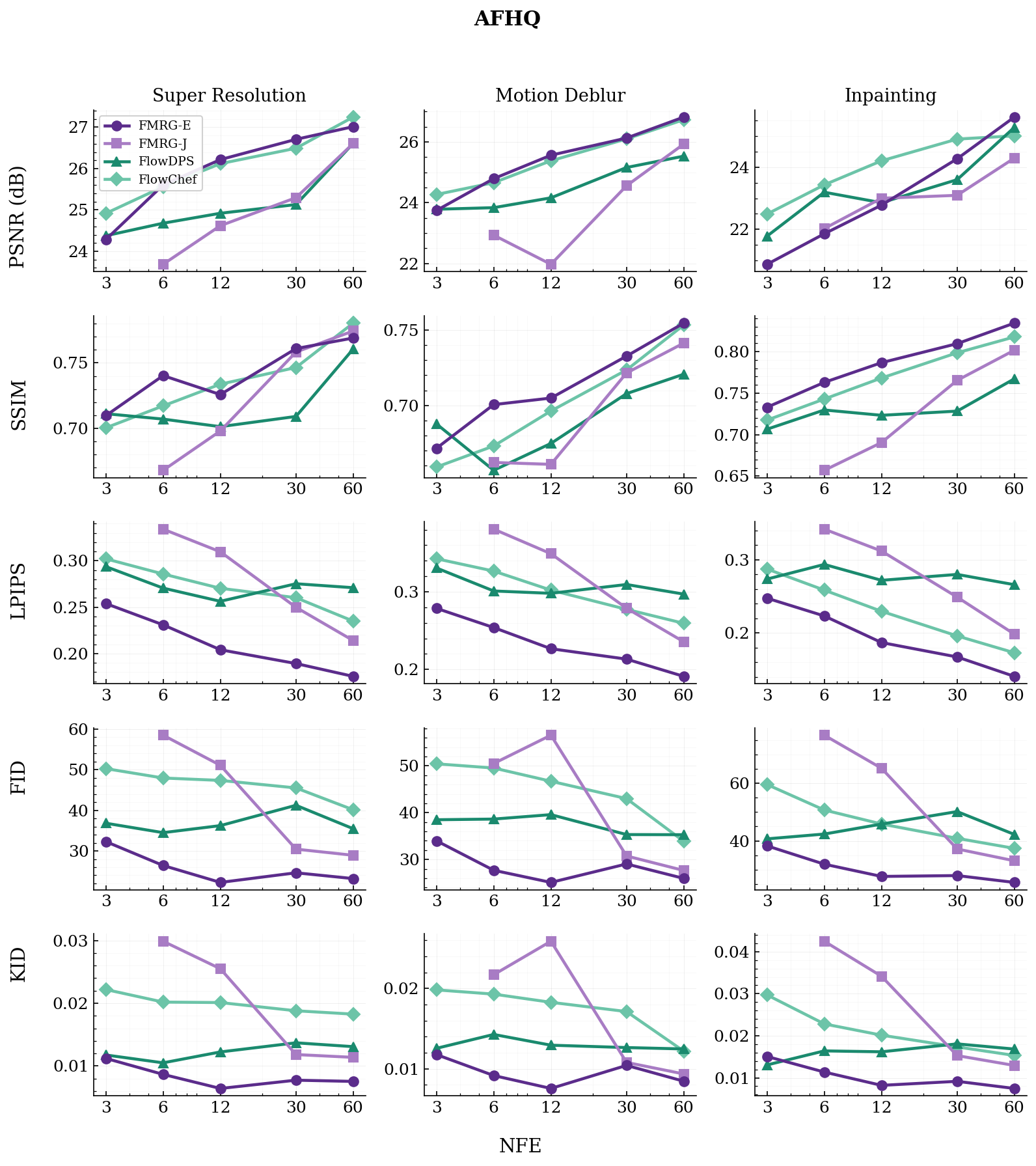}
	\caption{\textbf{Full AFHQ results across NFEs.} FMRG-E and FMRG-J dominate the distortion and perception metrics at every NFE budget, with the gap largest in the low-NFE regime.}
	\label{fig:afhq_full_metrics}
\end{figure}

\begin{figure}[t]
	\centering
	\includegraphics[width=0.7\textwidth]{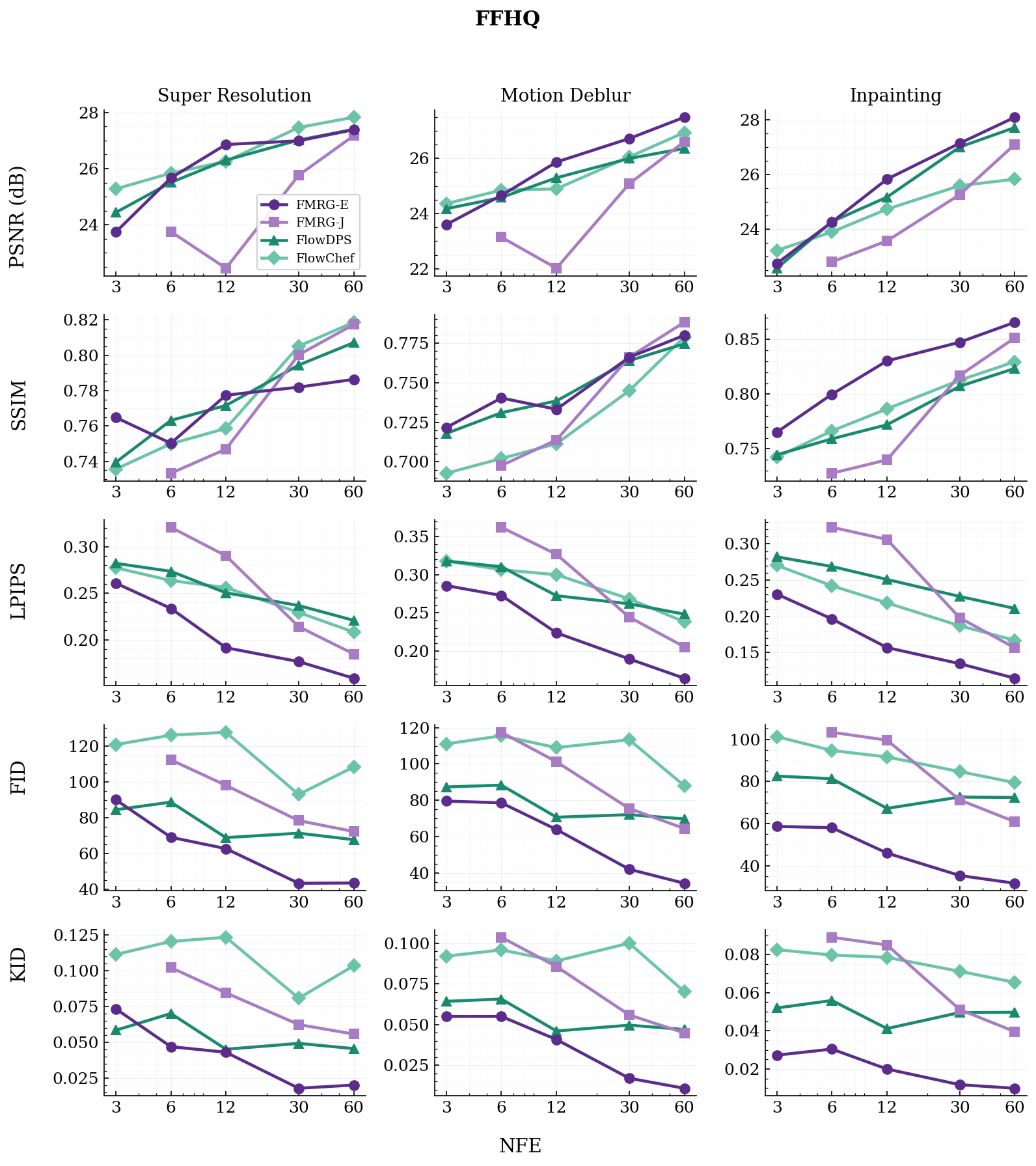}
	\caption{\textbf{Full FFHQ results across NFEs.} Same pattern as~\cref{fig:afhq_full_metrics}: FMRG variants match or outperform all baselines across metrics at every NFE budget.}
	\label{fig:ffhq_full_metrics}
\end{figure}

\begin{figure}[t]
	\centering
	\includegraphics[width=0.7\textwidth]{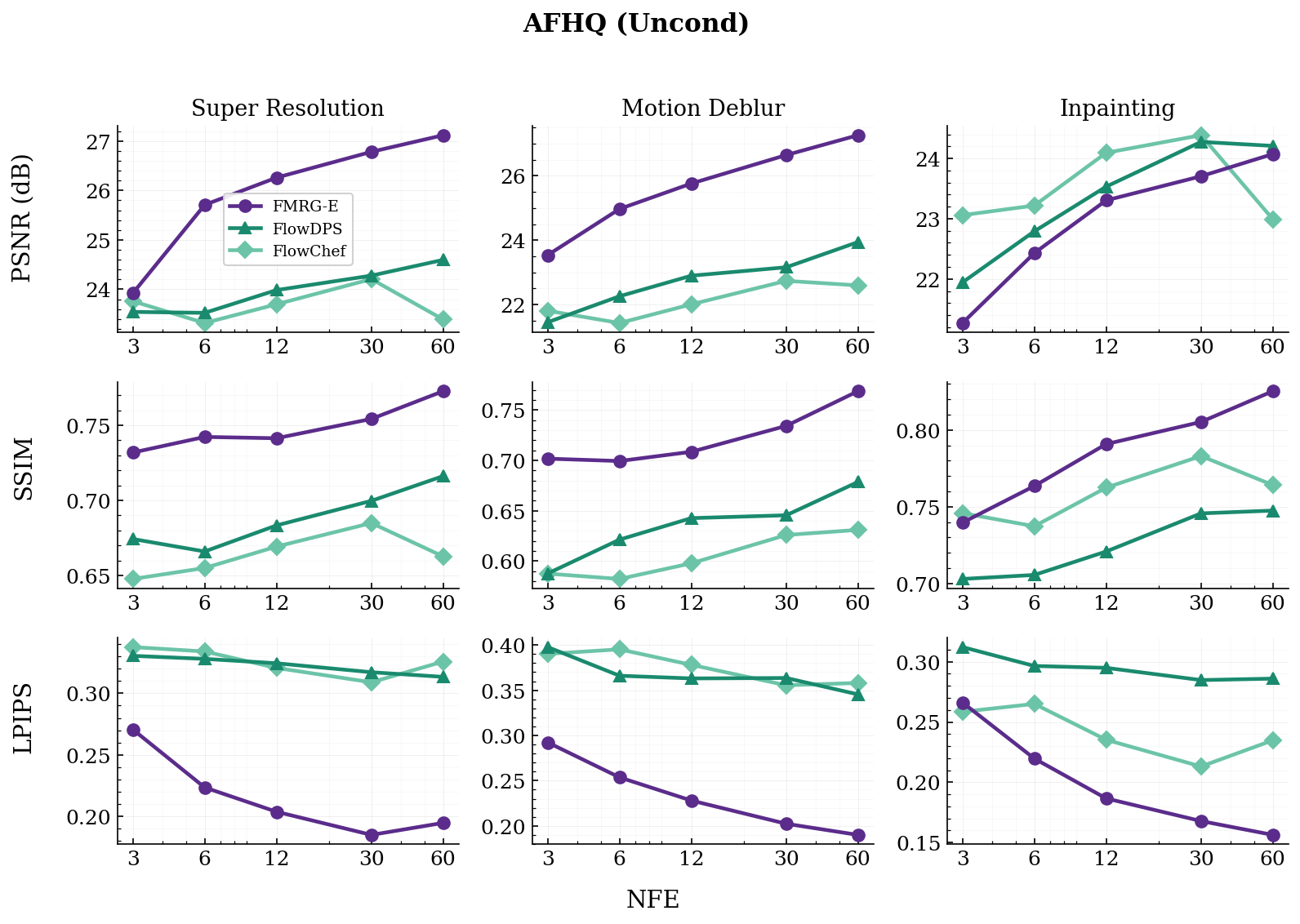}
	\caption{\textbf{AFHQ results under unconditional (prompt-free) generation.} Removing the text prompt widens the gap between FMRG and the baselines: FlowDPS and FlowChef degrade sharply, while FMRG variants remain robust.}
	\label{fig:afhq_uncond}
\end{figure}

\begin{figure}[t]
	\centering
	\includegraphics[width=0.7\textwidth]{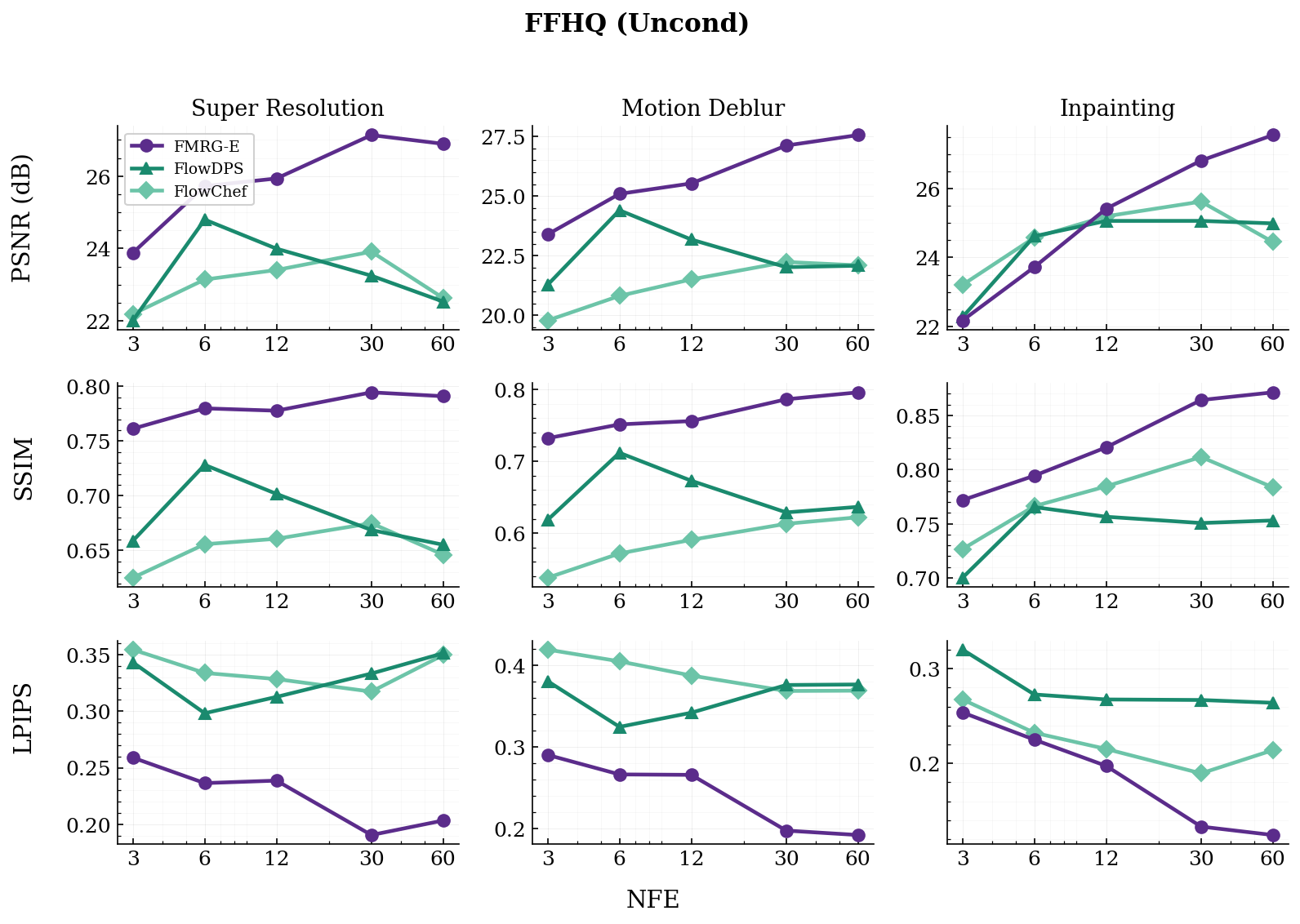}
	\caption{\textbf{FFHQ results under unconditional (prompt-free) generation.} Same pattern as~\cref{fig:afhq_uncond}: FMRG is robust to prompt removal while baselines degrade.}
	\label{fig:ffhq_uncond}
\end{figure}

\begin{figure}[t]
	\centering
	\includegraphics[width=0.7\linewidth]{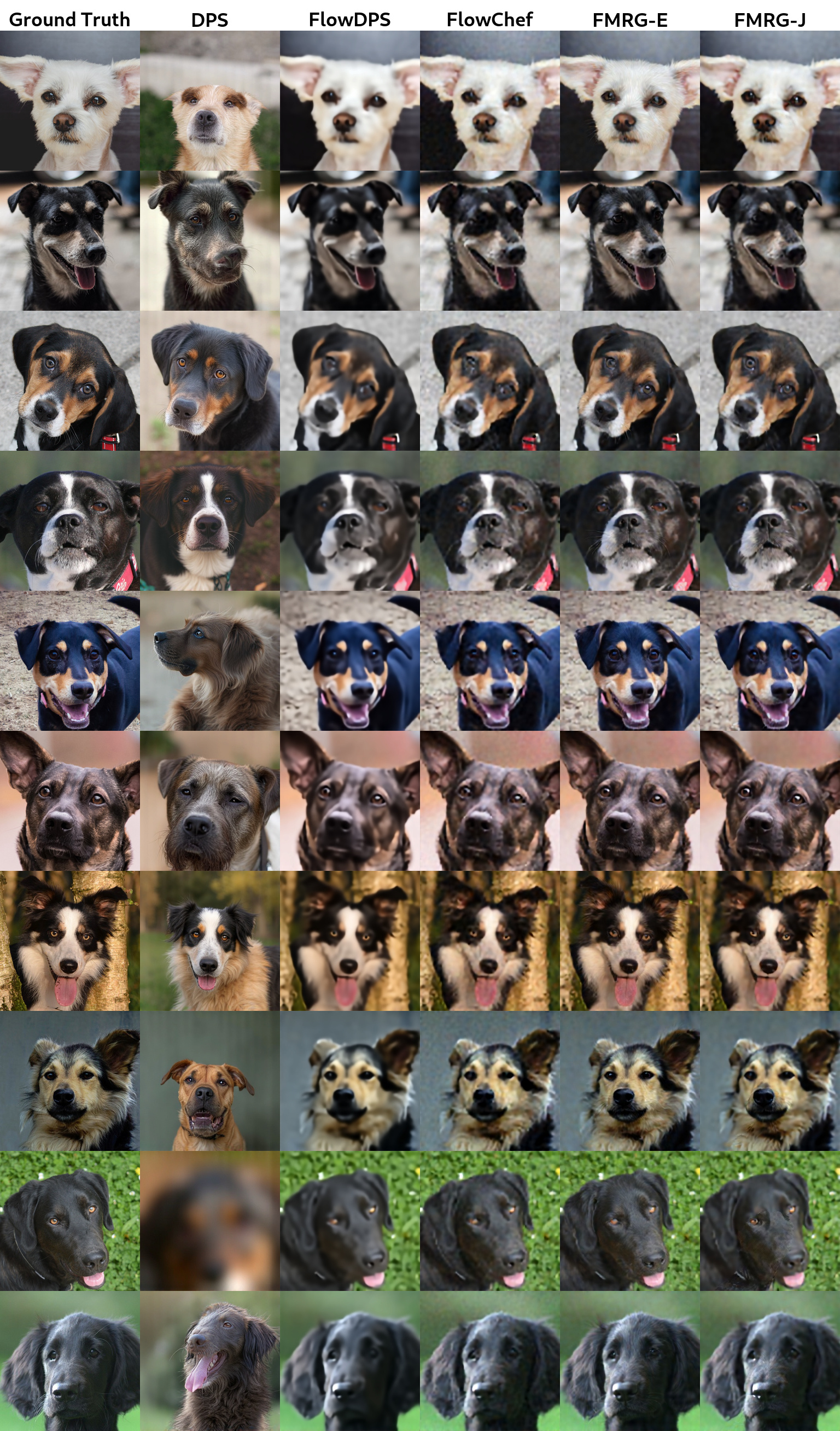}
	\caption{\textbf{AFHQ $4\times$ super-resolution, randomly selected examples (non-cherry-picked).} FMRG variants recover sharper detail than FlowDPS/FlowChef and avoid DPS's severe blurring artifacts.}
	\label{fig:afhq_sr_grid}
\end{figure}

\begin{figure}[t]
	\centering
	\includegraphics[width=0.7\linewidth]{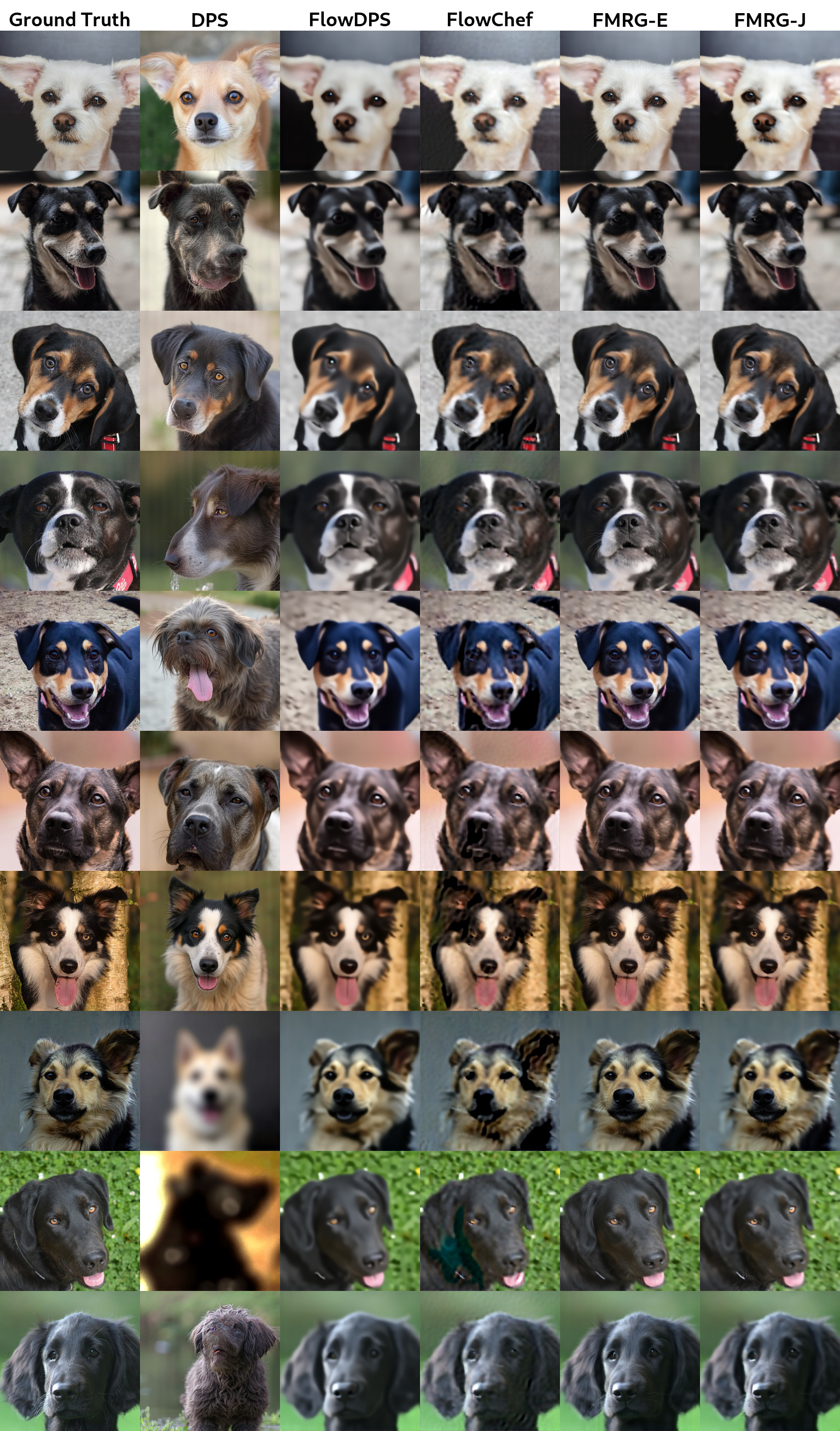}
	\caption{\textbf{AFHQ motion deblurring, randomly selected examples (non-cherry-picked).} FMRG reconstructions are consistent with the degraded observation and preserve fine texture.}
	\label{fig:afhq_motion_grid}
\end{figure}

\begin{figure}[t]
	\centering
	\includegraphics[width=0.7\linewidth]{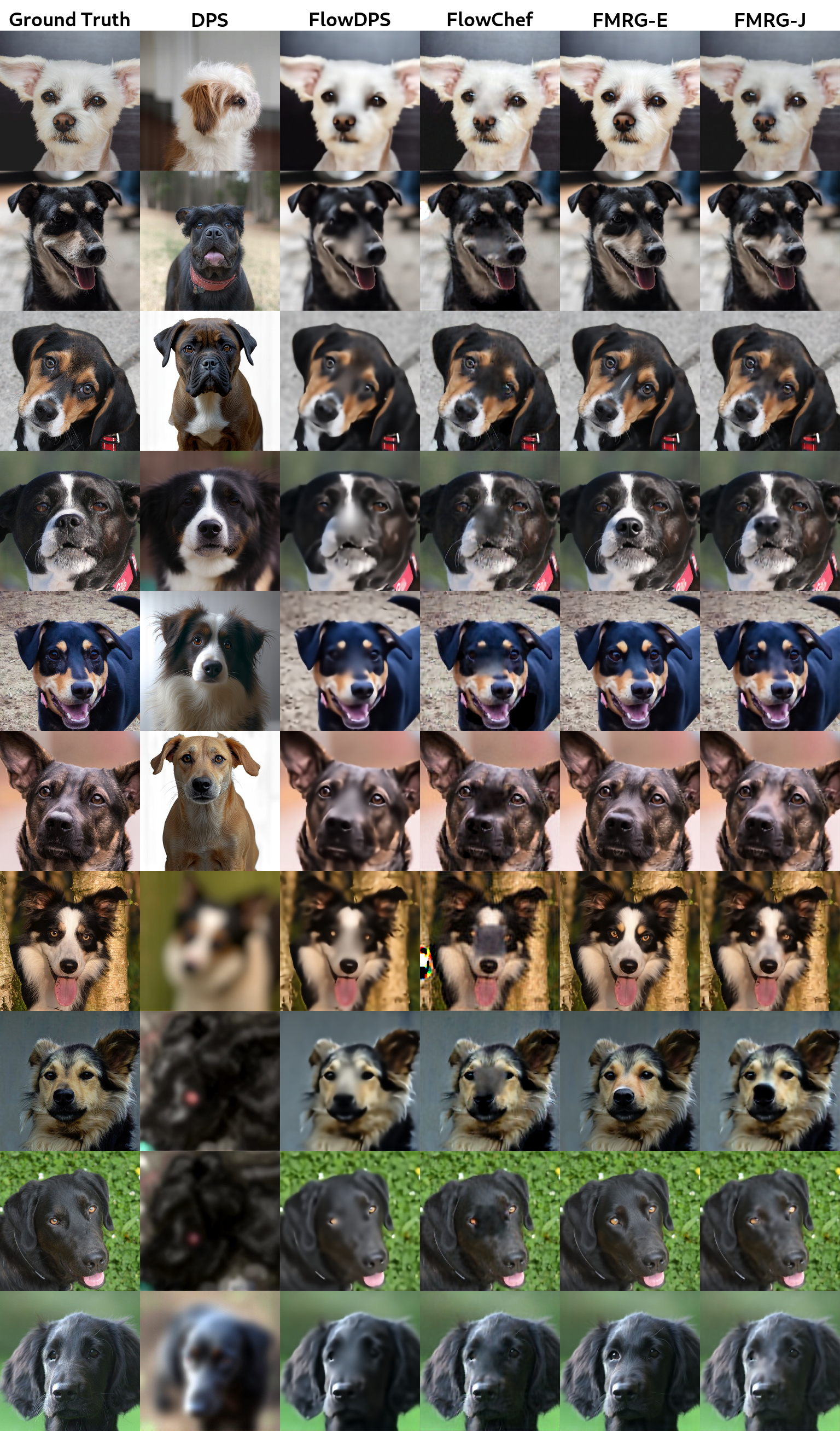}
	\caption{\textbf{AFHQ box inpainting, randomly selected examples (non-cherry-picked).} FMRG fills the masked region with content consistent with the visible context.}
	\label{fig:afhq_inpaint_grid}
\end{figure}

\begin{figure}[t]
	\centering
	\includegraphics[width=0.7\linewidth]{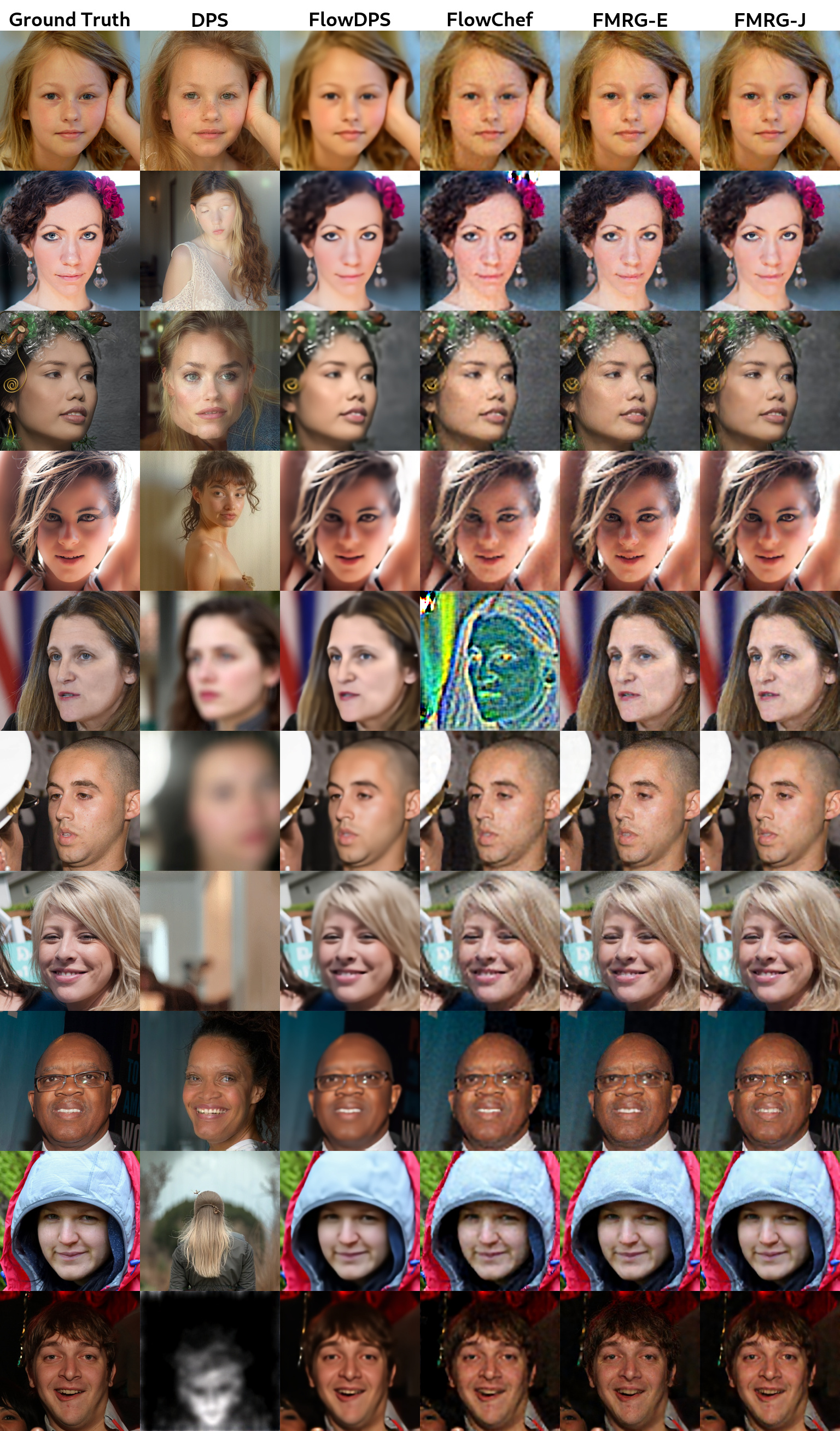}
	\caption{\textbf{FFHQ $4\times$ super-resolution, randomly selected examples (non-cherry-picked).} Same pattern as~\cref{fig:afhq_sr_grid}: FMRG recovers sharper facial detail than the baselines.}
	\label{fig:ffhq_sr_grid}
\end{figure}

\begin{figure}[t]
	\centering
	\includegraphics[width=0.7\linewidth]{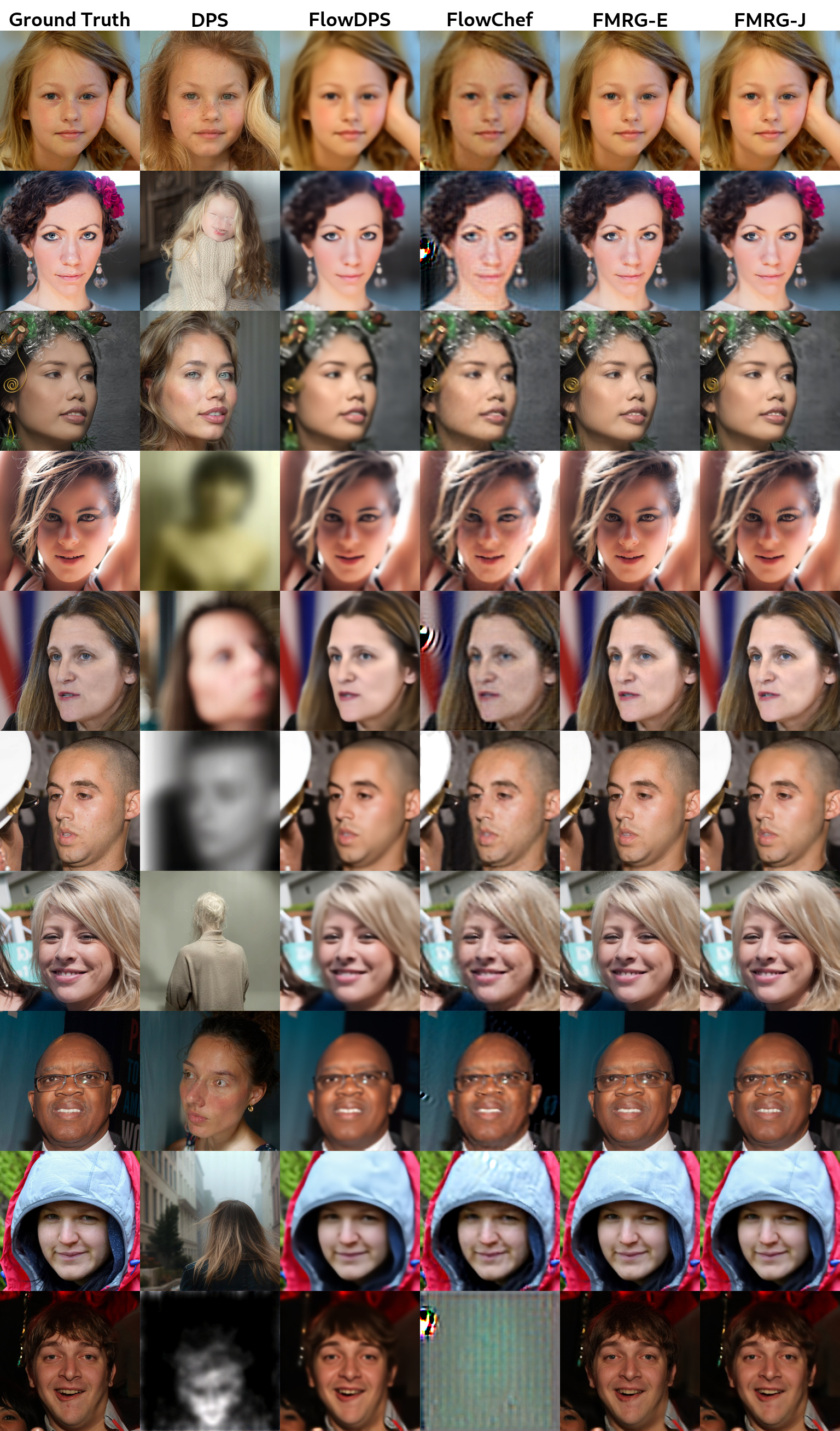}
	\caption{\textbf{FFHQ motion deblurring, randomly selected examples (non-cherry-picked).} FMRG recovers clean facial structure with minimal artifacts.}
	\label{fig:ffhq_motion_grid}
\end{figure}

\begin{figure}[t]
	\centering
	\includegraphics[width=0.7\linewidth]{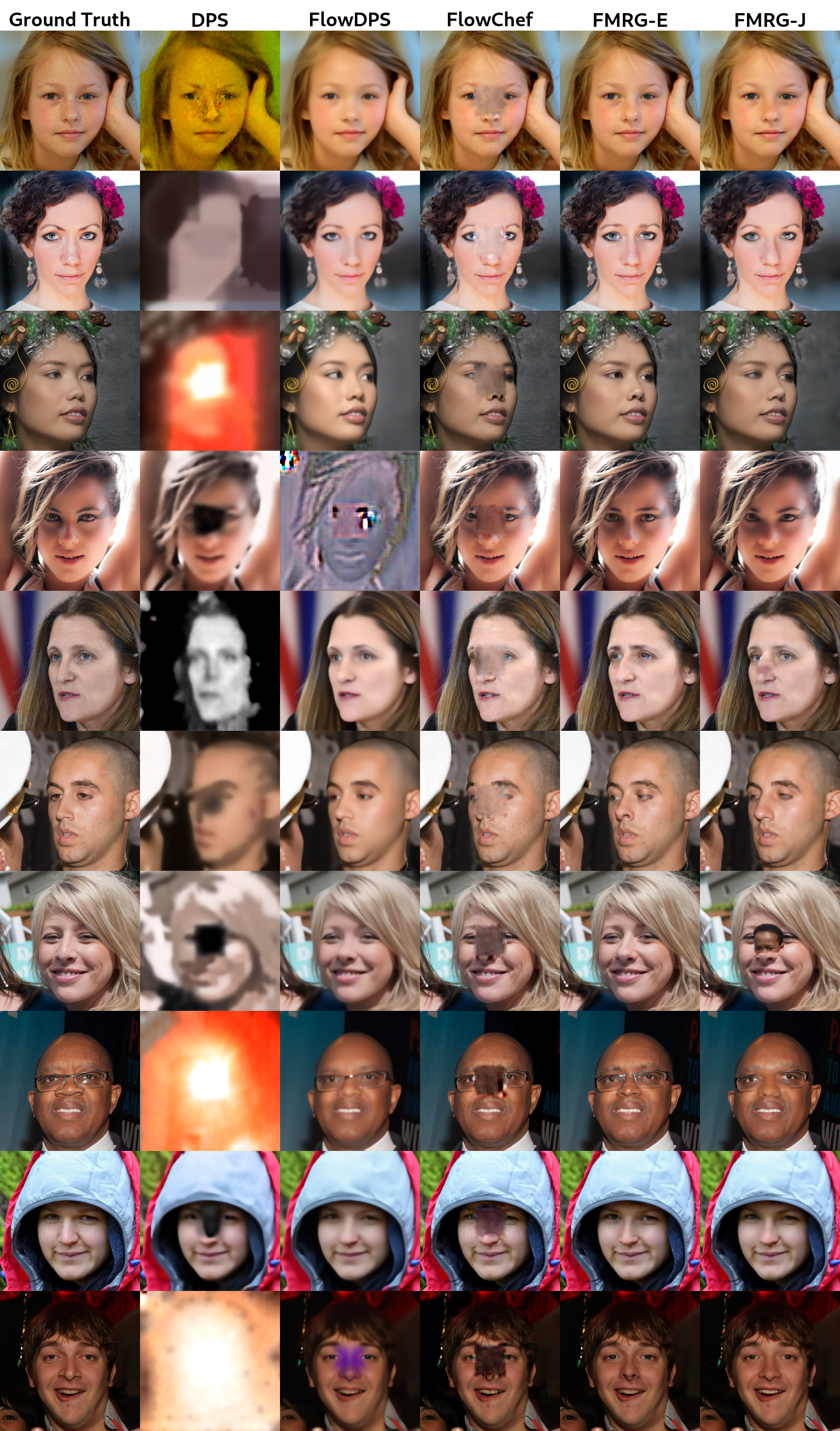}
	\caption{\textbf{FFHQ box inpainting, randomly selected examples (non-cherry-picked).} Same pattern as~\cref{fig:afhq_inpaint_grid}: FMRG fills masked regions with content consistent with the visible face.}
	\label{fig:ffhq_inpaint_grid}
\end{figure}

\begin{figure}[t]
	\centering
	\includegraphics[width=0.88\textwidth]{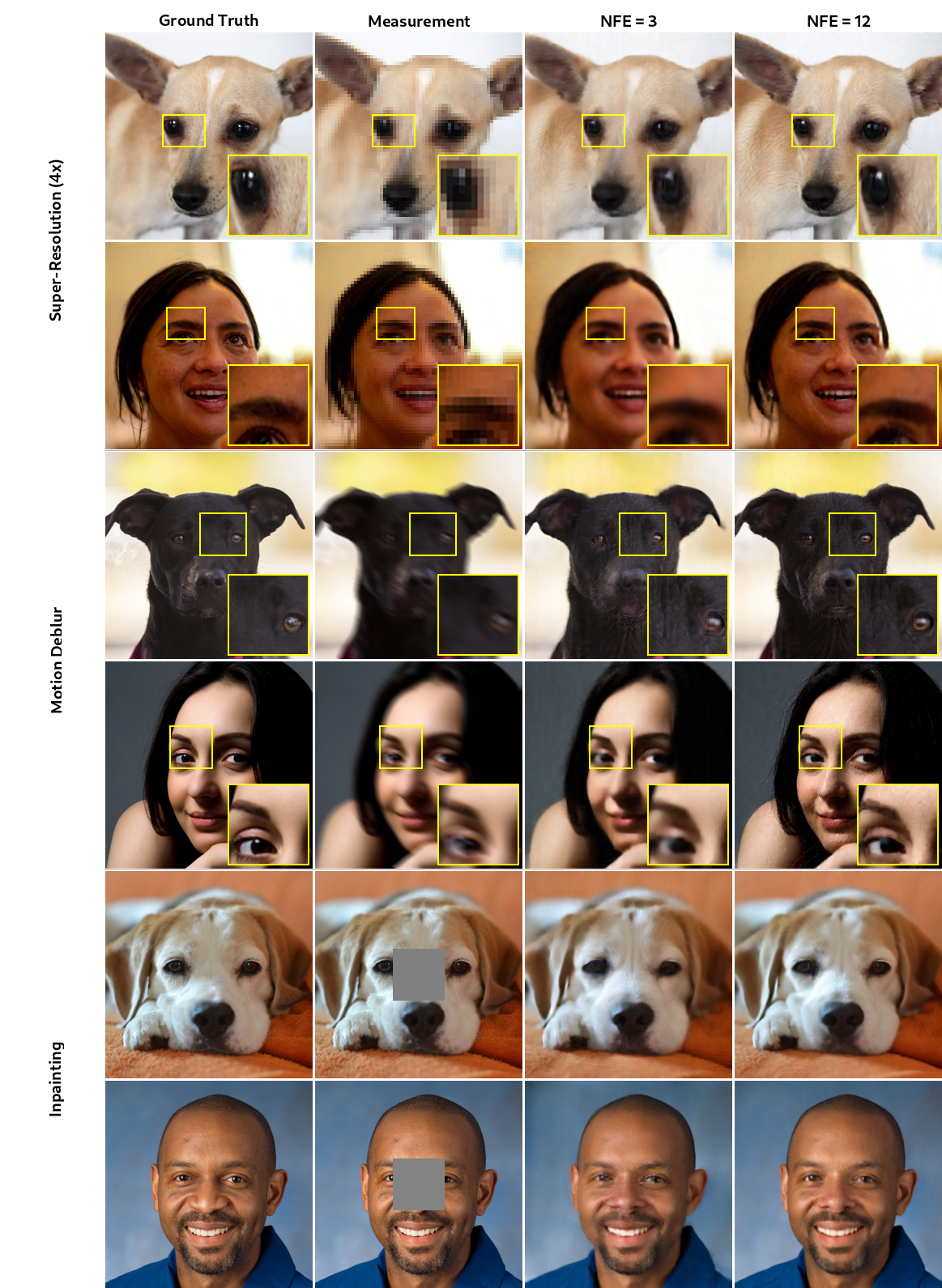}
	\caption{\textbf{Additional inverse problem qualitative results} with ground truth. Six examples each for super-resolution ($4\times$), motion deblurring, and inpainting on AFHQ and FFHQ.}
	\label{fig:ip_6row}
\end{figure}

\clearpage

\subsection{Style transfer}
\label{app:style_transfer}

\paragraph{Reward.}
The style loss is the Frobenius norm between the Gram matrices of CLIP ViT-B/16 layer-2 features extracted from the reference and generated images, following MPGD~\citep{he_manifold_2023}.

\paragraph{Implementation.}
We evaluate all methods (FMRG-J, FMRG-E, FlowChef, FlowDPS, DPS) at different numbers of steps, step sizes, and $n_{\mathrm{opt}}$ within a budget of $100$ NFEs and report the best configuration for each method.

\paragraph{Evaluation.}
We generate 48 text prompts using GPT-4o by providing 16 style reference images and requesting 3 prompts per style at increasing difficulty levels.
All methods are evaluated at $256 \times 256$ resolution.
We report CLIP-I (cosine similarity between CLIP image embeddings of the generated image and style reference) and CLIP-T (cosine similarity between the text prompt embedding and the generated image embedding) in~\cref{tab:style_transfer}.
FMRG-J achieves the best CLIP-I (style similarity) and competitive CLIP-T (text alignment).

\begin{table}[t]
	\centering
	\caption{\textbf{Style transfer with Gram matrix loss.} CLIP-I$\uparrow$ measures style similarity; CLIP-T$\uparrow$ measures text-prompt alignment. \textbf{Bold} indicates best.}
	\label{tab:style_transfer}
	\vspace{0.5em}
	\small
	\begin{tabular}{l cc}
		\toprule
		\textbf{Method} & \textbf{CLIP-I}$\uparrow$ & \textbf{CLIP-T}$\uparrow$ \\
		\midrule
		DPS             & 0.625                     & 0.262                     \\
		FlowChef        & 0.632                     & 0.282                     \\
		FlowDPS         & 0.631                     & 0.294                     \\
		\midrule
		FMRG-E (ours)   & 0.646                     & 0.290                     \\
		FMRG-J (ours)   & \textbf{0.649}            & \textbf{0.299}            \\
		\bottomrule
	\end{tabular}
\end{table}

\begin{figure}[t]
	\centering
	\includegraphics[width=\textwidth, keepaspectratio]{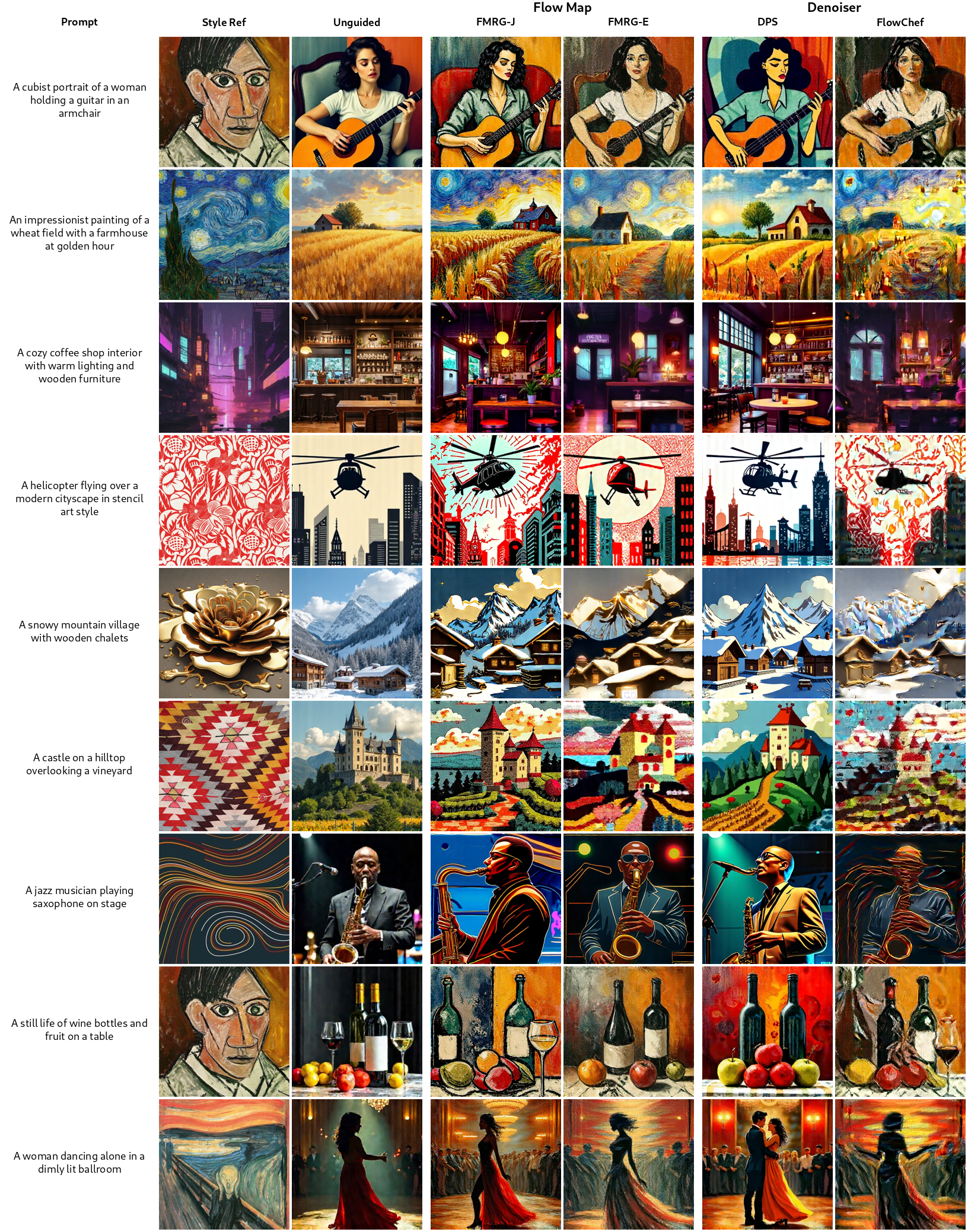}
	\caption{\textbf{Additional style transfer comparisons.} Extended hierarchy comparison across multiple style references. FMRG-J consistently captures the target style while preserving semantic content.}
	\label{fig:style_hierarchy_full}
\end{figure}

\clearpage

\subsection{Reward-guided generation}
\label{app:geneval_details}

\paragraph{Reward ensemble.}
Following~\citet{eyring_reno_2024}, we use a weighted linear combination of four human-preference reward models:
\begin{equation}
	r(x) = 5.0 \cdot \text{HPSv2}(x) + 1.0 \cdot \text{ImageReward}(x) + 0.01 \cdot \text{CLIP}(x) + 0.05 \cdot \text{PickScore}(x).
\end{equation}

\paragraph{Evaluation protocol.}
The GenEval benchmark~\citep{ghosh_geneval_2023} consists of $553$ prompts spanning six categories: single object, two objects, counting, colors, position, and color attribution.
Object detection uses Mask2Former with a ResNet-50 backbone pretrained on COCO.
The overall score is the unweighted mean of per-category accuracies.
All methods in~\cref{tab:geneval} are evaluated with $4$ seeds, with samples aggregated across seeds before computing per-category accuracies.

\paragraph{FMRG configurations.}
\Cref{tab:geneval_configs} reports the full configuration for each result in~\cref{tab:geneval}.
Both variants use early stopping at $t_{\mathrm{stop}} = 0.25$.
FMRG-J at NFE $20$ uses flow map reuse ($1$ NFE per step), while FMRG-J at NFE $100$ and FMRG-E use separate evaluations ($2$ NFEs per step).

\paragraph{Warmup particle selection.}
Since FMRG guides a single trajectory via greedy optimal control, its performance depends on the quality of the initial noise sample: an unfavorable initialization may land in a region of the reward landscape from which single-trajectory guidance cannot recover.
To mitigate this, we adopt a lightweight warmup procedure.
We initialize $K = 3$ particles from independent noise seeds and run each through approximately half of the total guided steps.
We then evaluate the reward at each particle's predicted endpoint $X_{t,1}(z_t)$ and select the particle with the highest reward.
The remaining guided steps continue from the selected particle only.
This combines the exploration benefits of best-of-$N$ initialization with the optimization benefits of trajectory guidance at modest additional cost.
We find that $K = 3$ restarts is sufficient to consistently improve performance.

\begin{table}[t]
	\centering
	\caption{\textbf{Configurations for GenEval results} (\cref{tab:geneval}). All use $K{=}3$ warmup particles.}
	\label{tab:geneval_configs}
	\vspace{0.5em}
	\small
	\begin{tabular}{l cccc}
		\toprule
		\textbf{Method} & \textbf{NFE} & $\eta$ & $n_{\mathrm{opt}}$ & $t_{\mathrm{stop}}$ \\
		\midrule
		FMRG-E          & 100          & 0.7    & 3                  & 0.25                \\
		\midrule
		FMRG-J          & 20           & 3.0    & 1                  & 0.25                \\
		FMRG-J          & 100          & 5.0    & 1                  & 0.25                \\
		\bottomrule
	\end{tabular}
\end{table}

\paragraph{Baseline configurations.}
\textbf{FMTT}~\citep{singhal_general_2025}: We re-evaluate on our flow map backbone with $7$ particles, $2$ clones per particle, $20$ sampling steps, lookahead depth $4$, and a reward scaling factor of $4.5$, totaling $1400$ NFEs, as reported in~\citet{singhal_general_2025}. The reward is the same ensemble used by FMRG.
\textbf{ReNO}~\citep{eyring_reno_2024}: Optimizes the initial noise latent via $50$ gradient steps, using $8$ sampling steps per reward evaluation and a learning rate of $5.0$, for ${\sim}58$ NFEs. ReNO peaks around $50$ iterations and degrades with longer optimization.
\textbf{Best-of-$N$}: Generates $N$ samples independently with the $4$-step flow map (no guidance) and selects the highest-reward sample. The main result uses $N{=}32$ ($128$ NFEs).

\paragraph{Early stopping ablation.}
\Cref{tab:geneval_es} reports the effect of early stopping on FMRG-J for GenEval at NFE $30$ and $100$ (both without flow map reuse).
$t_{\mathrm{stop}} = 0.25$ consistently outperforms all other levels, corroborating the Gaussian analysis in~\cref{app:gaussian_case}.
\Cref{fig:early_stopping_qualitative} shows qualitative examples: as $t_{\mathrm{stop}}$ decreases from $0.75$ to $0.25$, image quality improves; removing early stopping ($t_{\mathrm{stop}} = 1$) leads to over-optimization artifacts.

\begin{table}[t]
	\centering
	\caption{\textbf{Early stopping ablation on GenEval (FMRG-J).} Best GenEval accuracy across hyperparameters at each $t_{\mathrm{stop}}$. \textbf{Bold} indicates best.}
	\label{tab:geneval_es}
	\vspace{0.5em}
	\small
	\begin{tabular}{l cc}
		\toprule
		$t_{\mathrm{stop}}$ & \textbf{NFE 30} & \textbf{NFE 100} \\
		\midrule
		1.0 (none)          & 0.748           & 0.779            \\
		0.75                & 0.745           & 0.787            \\
		0.50                & 0.762           & 0.796            \\
		0.25                & \textbf{0.768}  & \textbf{0.800}   \\
		\bottomrule
	\end{tabular}
\end{table}

\begin{figure}[t]
	\centering
	\includegraphics[width=0.7\textwidth]{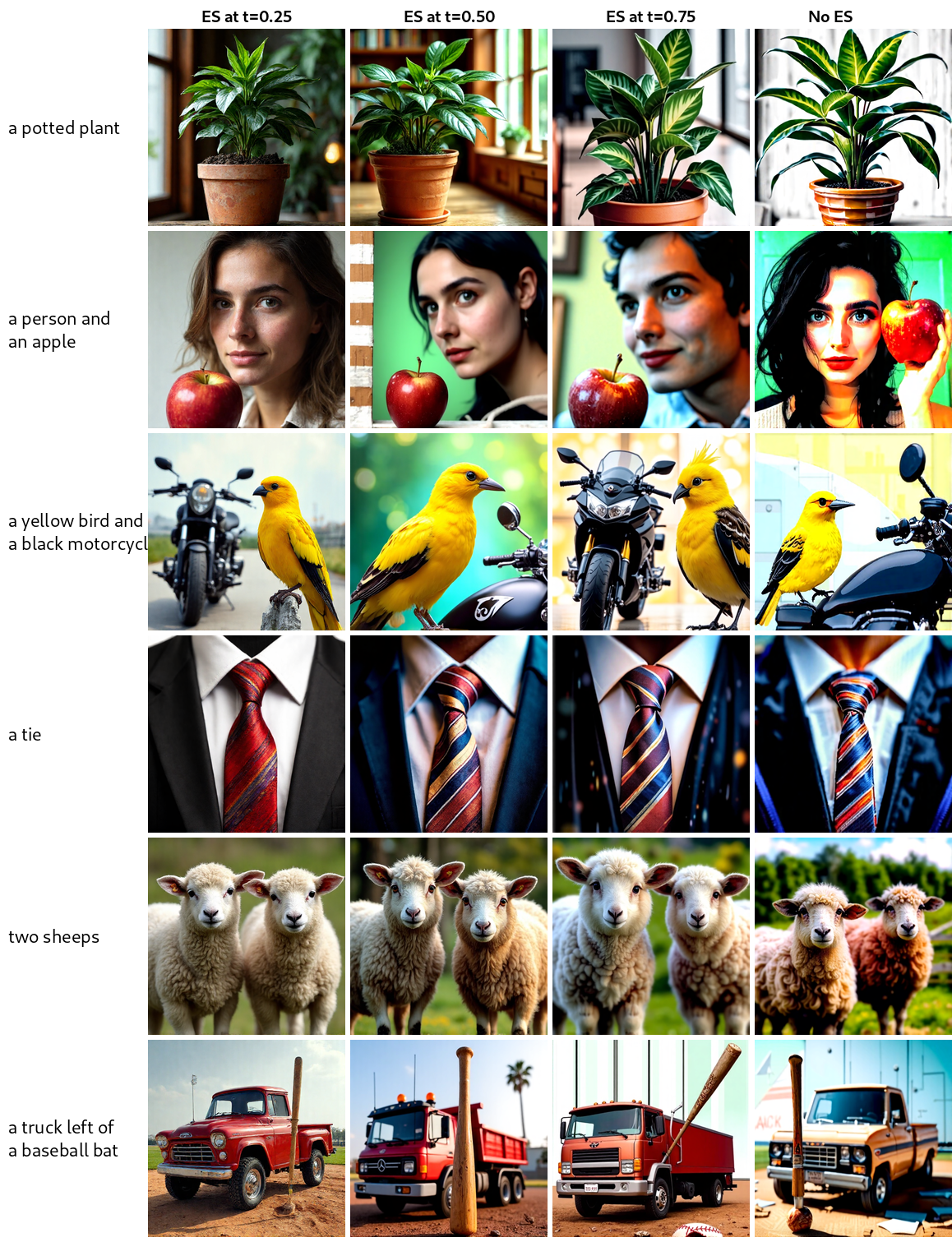}
	\caption{\textbf{Effect of early stopping on generation quality.}
		Each row shows a different prompt guided by the reward ensemble with FMRG-J at varying $t_{\mathrm{stop}}$.
		$t_{\mathrm{stop}} = 0.25$ consistently yields the best quality; removing early stopping ($t_{\mathrm{stop}} = 1$) produces over-optimized artifacts.}
	\label{fig:early_stopping_qualitative}
\end{figure}

\paragraph{Additional aesthetic enhancement results.}
\label{app:qualitative_aesthetic}
\Cref{fig:aesthetic_appendix_1,fig:aesthetic_appendix_2} show additional qualitative comparisons between FMRG and ReNO for reward-guided aesthetic enhancement.

\begin{figure}[t]
	\centering
	\includegraphics[width=\textwidth]{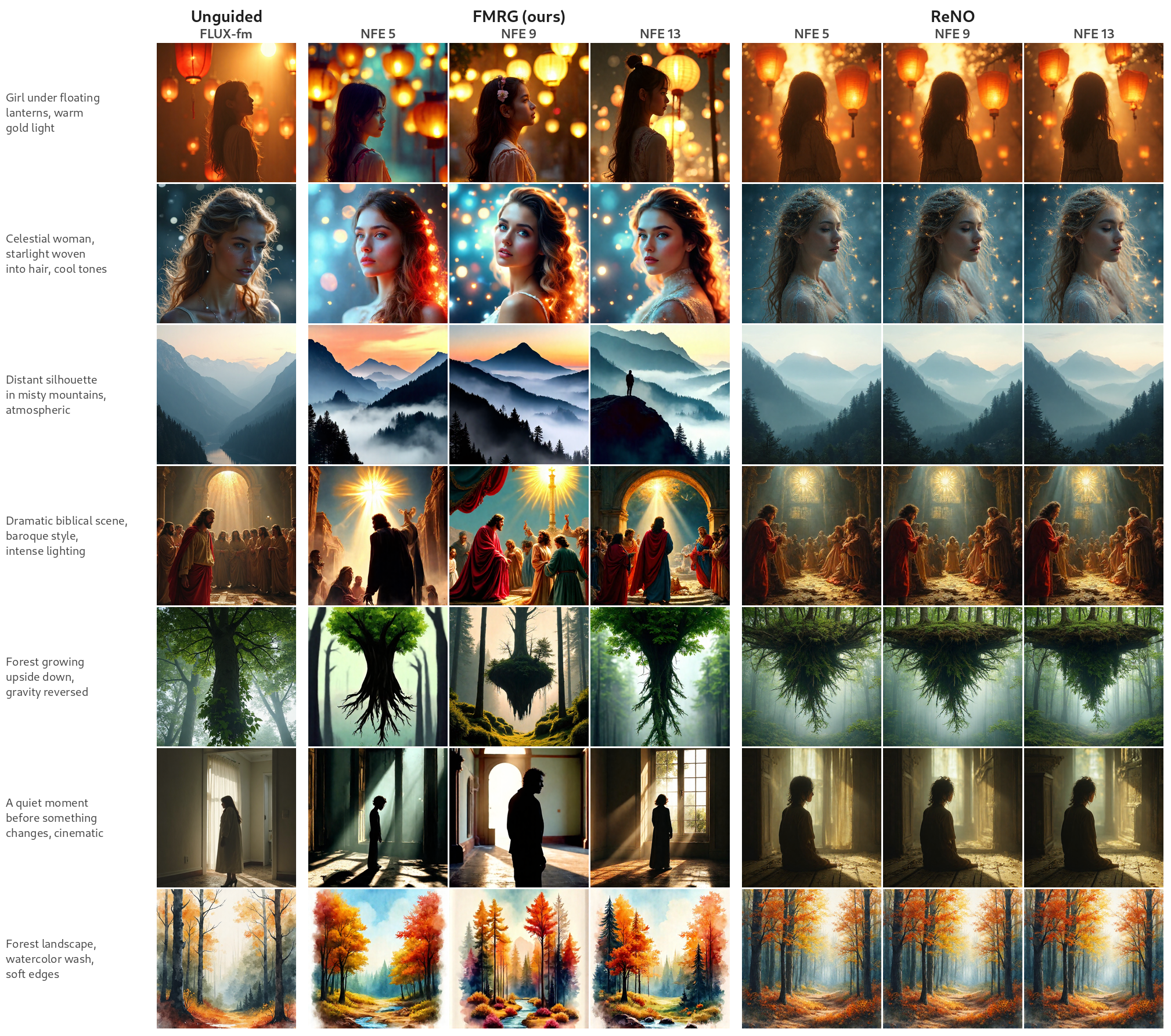}
	\caption{\textbf{Additional aesthetic enhancement comparisons (1/2).} At matched low NFE budgets, FMRG produces more dramatic aesthetic enhancements than ReNO.}
	\label{fig:aesthetic_appendix_1}
\end{figure}

\begin{figure}[t]
	\centering
	\includegraphics[width=\textwidth]{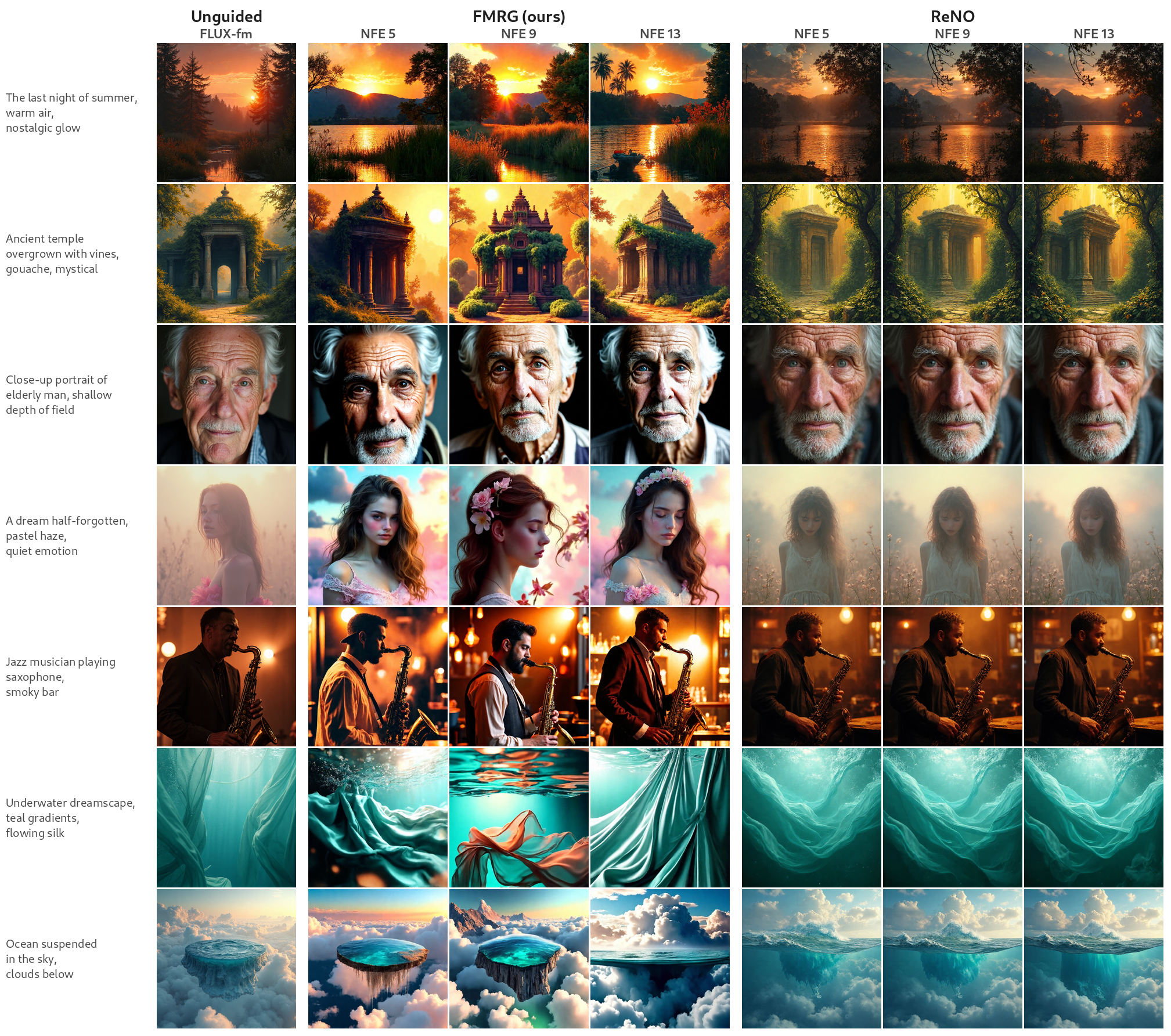}
	\caption{\textbf{Additional aesthetic enhancement comparisons (2/2).} At matched low NFE budgets, FMRG produces more dramatic aesthetic enhancements than ReNO.}
	\label{fig:aesthetic_appendix_2}
\end{figure}

\clearpage

\subsection{VLM guidance}
\label{app:vlm_details}

For complex compositional prompts (\cref{fig:vlm}), we use Skywork-VL-Reward-7B~\citep{wang_skywork_2025} as a reward model.
Given an image $x$ and text prompt $p$, the reward is
$r(x, p) = \sigma\!\left(\mathrm{logit}_{\text{Yes}}(x, p) - \mathrm{logit}_{\text{No}}(x, p)\right)$,
where $\mathrm{logit}_{\text{Yes/No}}$ are the VLM's last-token logits at the ``Yes''/``No'' vocabulary positions after the chat-template query ``Is \texttt{"\{prompt\}"} a correct caption for the image?''
We use FMRG-J with $20$ guided steps, step size $\eta = 1.0$, and $4$ warmup particles.

\paragraph{Stochastic renoising.}
We observe that for some images, VLM-guided generation can produce minor artifacts.
We find that injecting a small amount of stochastic noise toward the end of the generation process is effective at reducing these artifacts and sharpening image quality.
Concretely, given the current latent $x_t$ and the flow map velocity $v_{t,t_+}(x_t)$, we decompose the next-step prediction into a clean-sample estimate $\hat{x}_1$ and a noise estimate $\hat{x}_0$ via the linear interpolant:
\begin{equation}
	\hat{x}_1 = x_t + (1 - t)\, v_{t,t_+}(x_t), \qquad \hat{x}_0 = x_t - t\, v_{t,t_+}(x_t).
\end{equation}
We then form a renoised prior by mixing the estimated noise with fresh Gaussian noise $\epsilon \sim \Normal(0, I)$:
\begin{equation}
	\tilde{x}_0 = (1 - c)\, \hat{x}_0 + c\, \epsilon,
\end{equation}
and reconstruct the current latent using the renoised prior:
\begin{equation}
	\tilde{x}_t = (1 - t)\, \tilde{x}_0 + t\, \hat{x}_1.
\end{equation}
The guided step then proceeds from $\tilde{x}_t$.
The constant $c \in [0, 1]$ controls the amount of injected stochasticity.
This lightweight procedure adds diversity without significantly altering the guided trajectory, and we find it effective in practice for VLM rewards.

\subsection{Wall-clock time and VRAM}
\label{app:wallclock}

We report wall-clock time per image and peak VRAM for all methods in~\cref{tab:wallclock_reward,tab:wallclock_ip}.

\begin{table}[H]
	\centering
	\caption{\textbf{Wall-clock time and VRAM for reward-guided generation} (512px, H100).}
	\label{tab:wallclock_reward}
	\vspace{0.5em}
	\small
	\begin{tabular}{l ccc}
		\toprule
		\textbf{Method}       & \textbf{NFE} & \textbf{Time/img (s)} & \textbf{VRAM (GB)} \\
		\midrule
		FMTT                  & 1400         & 337.5                 & 73.5               \\
		Best-of-$N$ ($N{=}32$) & 128          & 13.1                  & 35.9               \\
		ReNO (seed optim)     & 58           & 21.9                  & 48.2               \\
		\midrule
		FMRG-E (ours)         & 100          & 34.5                  & 39.8               \\
		FMRG-J (ours)         & 20           & 5.5                   & 48.2               \\
		FMRG-J (ours)         & 100          & 24.8                  & 48.2               \\
		\bottomrule
	\end{tabular}
\end{table}

\begin{table}[H]
	\centering
	\caption{\textbf{Wall-clock time and VRAM for inverse problems} (256px, L40S).}
	\label{tab:wallclock_ip}
	\vspace{0.5em}
	\small
	\begin{tabular}{l cccc}
		\toprule
		\textbf{Method} & \textbf{NFE} & $n_{\text{opt}}$ & \textbf{Time/img (s)} & \textbf{VRAM (GB)} \\
		\midrule
		FlowChef        & 30           & 3                & 5.10                  & 23.3               \\
		FlowDPS         & 30           & 3                & 4.86                  & 23.3               \\
		FMRG-E          & 30           & 3                & 4.85                  & 23.3               \\
		FMRG-J          & 30           & 1                & 5.68                  & 27.5               \\
		\midrule
		FlowChef        & 200          & 3                & 32.19                 & 23.3               \\
		FlowDPS         & 200          & 3                & 32.23                 & 23.3               \\
		FMRG-E          & 200          & 3                & 27.89                 & 23.3               \\
		FMRG-J          & 200          & 1                & 35.62                 & 27.5               \\
		\bottomrule
	\end{tabular}
\end{table}

\end{document}